 \newtheorem{thm}{Theorem}[section]
 \newtheorem{prop}{Proposition}[section]
 \newtheorem{defn}{Definition}[section]
 \newtheorem{rem}{Remark}[section]
\begin{document}
\bstctlcite{IEEEexample:BSTcontrol}

\title{{\huge CARE: \underline{C}ooperative \underline{A}utonomy for \underline{R}esilience and \underline{E}fficiency of Robot Teams for Complete Coverage of Unknown Environments under Robot Failures \vspace{10pt}
}
\thanks{$^{\dag}$ Department of Electrical and Computer Engineering, University of Connecticut, Storrs, CT, USA.}\\
\thanks{$^{\star}$ Corresponding Author (email id: shalabh.gupta@uconn.edu)}
} 
\author{ \begin{tabular}{cccccccccc}
 \textbf{Junnan Song$^\dag$} & \ \ \  \textbf{Shalabh Gupta${^\dag}{^\star}$} \\
\end{tabular}
\\ \vspace{-10pt}
}

\maketitle
\begin{abstract}
This paper addresses the problem of \textit{Multi-robot Coverage Path Planning} (MCPP) for unknown environments in the presence of robot failures. Unexpected robot failures can seriously degrade the performance of a robot team and in extreme cases jeopardize the overall operation. Therefore, this paper presents a distributed algorithm, called \textit{Cooperative Autonomy for Resilience and Efficiency} (CARE), which not only provides resilience to the robot team against failures of individual robots, but also improves the overall efficiency of operation via event-driven replanning. The algorithm uses distributed \textit{Discrete Event Supervisors} (DESs), which trigger games between a set of feasible players in the event of a robot failure or idling, to make collaborative decisions for task reallocations. The game-theoretic structure is built using \textit{Potential Games}, where the utility of each player is aligned with a shared objective function for all players. The algorithm has been validated in various complex scenarios on a high-fidelity robotic simulator, and the results demonstrate that the team achieves complete coverage under failures, reduced coverage time, and faster target discovery as compared to three alternative methods.
\end{abstract}

\vspace{-3pt}
\begin{IEEEkeywords}
Multi-robot system, Self-organization, Resilience, Autonomy, Coverage path planning
 \end{IEEEkeywords}
\vspace{-0pt}

\thispagestyle{empty}

\section{Introduction}\label{sec:intro}
The search and coverage operations of autonomous robots have widespread applications such as floor cleaning, lawn mowing, oil spill cleaning, crop cutting and seeding, mine countermeasures, ocean floor inspection. These operations require \textit{Coverage Path Planning} (CPP)~\cite{SG17}\cite{XVR14}\cite{BKABOOMS16}\cite{SGHZ13}\cite{BTA14}, where a coverage path is needed for the robot to completely cover the search area while avoiding obstacles and having minimum overlapping trajectory to minimize the total coverage time.

Thus far, several CPP algorithms have been reported in literature~\cite{C01}\cite{GC13} and a brief review is provided in Section~\ref{sec:review}.

\vspace{-0pt}
\subsection{\textbf{Motivation}}\label{sec:motivation}
Although many CPP methods are available when using a single robot, only a limited body of work has focused on \textit{Multi-robot Coverage Path Planning} (MCPP). A popular control architecture in the existing MCPP methods is to split the overall workload into multiple tasks, and then use some single-robot CPP method for coverage in each task~\cite{HK08}\cite{SGH14}.

However, since the robots typically operate in uncertain environments, they are prone to different failures such as sensor or actuator malfunctions, mechanical defects, loss of power~\cite{CM05}. The consequences of these failures include coverage gaps, loss of critical data, performance degradation (e.g., missed detections of targets), prolonged operation time, and in extreme cases overall mission failure. For example, coverage gaps in mine countermeasure operations can leave undetected underwater mines which are serious threats to traversing vessels. It is therefore critical that the robot team is resilient to failures, in the sense that it can sustain the overall team operation and protect the mission goals (e.g., complete coverage) even in presence of a few robot failures~\cite{RGM09}. The role of resilience is to \textit{assure system-level survivability and fast recovery to normalcy from unanticipated emergency situations (e.g., robot failures)}. In the context of the MCPP problem, a resilient robot team is expected to autonomously re-organize the active robots in an optimal manner to complete the unfinished tasks of failed robots.

Secondly, it is also important that the robot team operates efficiently. Typically, due to incorrect, incomplete or lack of \textit{a priori} knowledge of the environment, the initial task allocation may be sub-optimal. As a result, some robots may finish their tasks earlier and become idle, which is a waste of their resources. Thus, it is critical that the robot team autonomously reallocates these idling robots in an optimal manner to assist other robots to reduce the total coverage time.

\begin{figure}[t]
  \centering
  \includegraphics[width=1\columnwidth]{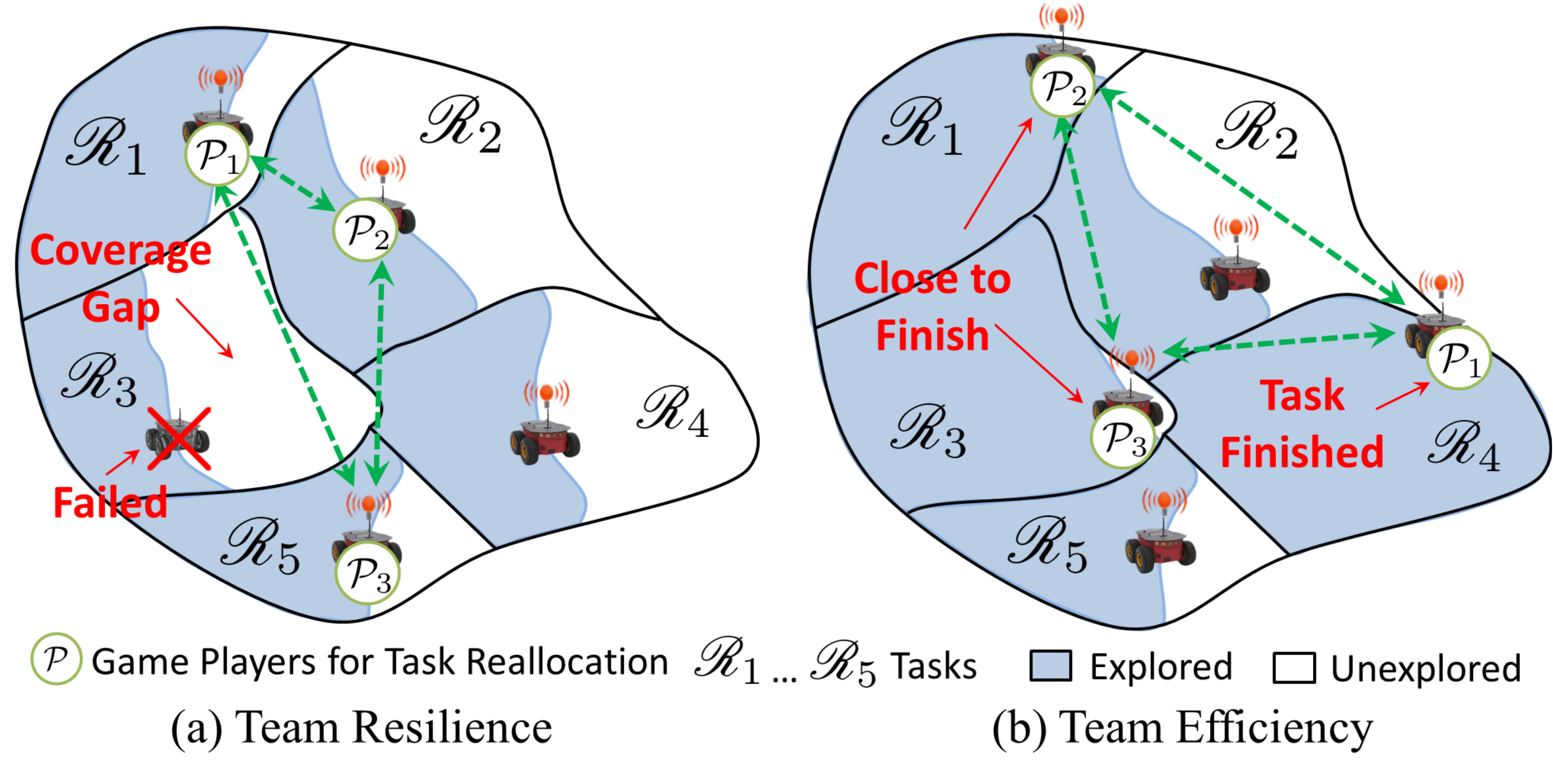}
  \caption{Concepts of resilience and efficiency of a robot team} \vspace{-8pt}
  \label{fig:requirements}
\end{figure}

Fig.~\ref{fig:requirements} illustrates the above concepts of resilience and efficiency. Fig.~\ref{fig:requirements}a shows an example of resilience where the neighbors of a failed robot proactively negotiate to decide whether any of them should leave its current task to fill the coverage gap. Fig.~\ref{fig:requirements}b shows an example of efficiency where a group of robots that have finished (or are close to finish) their current tasks negotiate to optimally reallocate to new tasks or to help other robots in their existing tasks.

\vspace{-0pt}
\subsection{\textbf{Challenges}}\label{sec:challenges}
The challenges associated with the problem of resilient and efficient MCPP are presented below.

\begin{itemize}
\item \textit{Scalability}: The MCPP algorithm should be scalable to accommodate a growing number of tasks and/or robots, thus making a distributed control structure appropriate.
\vspace{2pt}
\item \textit{Optimization factors}: The optimization for task reallocation must consider the following factors:
\begin{enumerate}
\item Task worths, which can be quantified by the expected number of undiscovered targets (e.g., crops to cut or mines to discover) in the tasks.
\item Probabilities of success of the available robots in finishing the contested tasks, which depend on various factors including their current energy levels, the costs of traveling to the contested tasks, and the costs of finishing those tasks.
\end{enumerate}

\item \textit{Dynamically changing conditions}: The conditions of robots as well as tasks change dynamically during coverage. The task worths decrease as targets are discovered. On the other hand, the robots drain their batteries during exploration, hence decreasing their probabilities of success. Therefore, the optimization process must accommodate these dynamic factors.

\vspace{2pt}
\item \textit{Computation time}: First of all, the optimization must be event-driven, i.e., triggered only in case of failures and/or idling. Secondly, once the optimization is triggered, the task reallocation decision must be made in a timely manner to avoid prolonged coverage time, thus motivating a local distributed event-focused optimization over only a subset of available robots and tasks.

\vspace{2pt}
\item \textit{Connection between local and global objectives}: Although the local optimization decision can be sub-optimal for the whole team, it is important that it is still aligned with the global objectives of the team. In other words, the local optimization must not only benefit the involved robots but also the whole team. The objectives include early detection of remaining targets, reduction in the total coverage time, and complete coverage.

\vspace{2pt}
\item \textit{Complete coverage}: The MCPP algorithm must guarantee complete coverage of the \textit{a priori} unknown environment.
\end{itemize}

\vspace{-9pt}
\subsection{\textbf{Our Contributions}}\label{sec:contribution}
To the best of our knowledge, the concept of resilient coverage has not been adequately addressed in the existing MCPP methods. Thus, we present a novel online MCPP algorithm for resilient and efficient coverage in \textit{a priori} unknown environments, which addresses the challenges discussed above. The algorithm is called \textit{Cooperative Autonomy for Resilience and Efficiency} (CARE). For coverage control in each task, CARE utilizes our previously published $\epsilon^\star$ algorithm~\cite{SG17}, which is a single-robot online CPP algorithm. More details of the $\epsilon^\star$ algorithm are provided in Section~\ref{sec:review}. 

The CARE algorithm operates in a distributed yet cooperative fashion. Each robot is controlled using a \textit{Discrete Event Supervisor} (DES), which triggers games between a set of feasible players in the event of a robot failure or idling, to make collaborative decisions for task reallocations. The game-theoretic structure is modeled using \textit{Potential Games}~\cite{MS96}, where the utility of each player is connected with a shared objective function for all players. In case of no failures, CARE reallocates idling robots to support other robots in their tasks; hence reduces coverage time and improves team efficiency. In case of robot failures, CARE guarantees complete coverage via filling coverage gaps by optimal reallocation of other robots; hence providing resilience, albeit with a possibly small degradation in coverage time.

The CARE algorithm has been validated on a high-fidelity robot simulator (i.e., Player/Stage) in various complex obstacle-rich scenarios. The results demonstrate that the team achieves complete coverage under failures and enables faster target discovery as compared to three alternative methods.

The rest of the paper is organized as follows. Section~\ref{sec:review} presents a brief review of the existing MCPP algorithms. Section~\ref{sec:problemformulation} formulates the MCPP problem and Section~\ref{sec:tepsilon} presents the details of the CARE algorithm. The results are discussed in Section~\ref{sec:results} and the paper is concluded in Section~\ref{sec:conclusion} with recommendations for future work.

\vspace{-9pt}
\section{Related Work}\label{sec:review}
Existing CPP methods can be categorized as offline or online (i.e., sensor-based). Offline approaches assume the environment to be \textit{a priori} known, while online approaches generate coverage paths \textit{in situ} based on sensor information. Independently, CPP approaches are also described as randomized or systematic. Random strategies follow simple behavior-based rules, requiring in some cases no localization system or costly computational resources. However, they tend to generate strongly overlapped trajectories, thus unsuitable for time-critical operations. In contrast, systematic approaches are typically based on cellular decomposition~\cite{AC02}\cite{GS04} of the search area into sub-regions, and then adopt certain pre-defined path pattern (e.g., back and forth) for coverage in each sub-region; or by partitioning the search area into grid cells and then constructing potential field~\cite{ZJBY93} or spanning trees~\cite{GR01}\cite{SWV15} to generate coverage paths. In our previous work~\cite{SG17}, we presented the $\epsilon^\star$ algorithm for single-robot CPP in unknown environments. The $\epsilon^\star$ algorithm uses an Exploratory Turing Machine (ETM) that consists of a 2D multilevel tape formed by Multiscale Adaptive Potential Surfaces (MAPS). The ETM stores and updates the explored, unexplored, and obstacle regions as time-varying potentials on MAPS. By default, the ETM adopts the lowest level of MAPS to generate the next waypoint; while it switches to higher levels as needed to evacuate from a local extremum. It is shown that $\epsilon^\star$ is computationally efficient for real-time applications, guarantees complete coverage in unknown environments, and does not require cellular decomposition of search area using critical points on obstacles. In CARE, $\epsilon^\star$ is used as the baseline coverage method by each robot to search within its task.

{\setlength{\belowcaptionskip}{-8pt}
\begin{table*}[t]
\centering
\caption{A comparison of key features with other online MCPP algorithms}
\label{table:comparison}
\begin{tabular}{ccccccc}
\toprule
\multirow{2}{*}{}        &  \multirow{2}{*}{CARE}  & \multirow{2}{*}{First-responder~\cite{SGH14}} &   \multirow{2}{*}{Brick \& Mortar~\cite{FTL07}}    & \multirow{2}{*}{ORMSTC~\cite{AHK08}}\\
                                                                                        &       &   \\ \hline \\ [-0.16cm]
\textit{Path Pattern}       & Back and forth  & Back and forth & No obvious pattern observed  & Spiral \\ [0.1cm]

\multirow{6}{*}{\begin{tabular}[c]{@{}c@{}c@{}} \textit{Resilience} \\\textit{Strategy} \end{tabular}} & \multirow{6}{*}{\begin{tabular}[c]{@{}c@{}c@{}} Neighbors jointly optimize to \\reorganize themselves to \\immediately fill the coverage \\gap caused by the failed robot if\\the optimization criteria are satisfied \end{tabular}}& \multirow{6}{*}{\begin{tabular}[c]{@{}c@{}c@{}} Wait until some robot \\finishes its task and is \\reassigned to fill\\ the coverage gap \end{tabular}}  & \multirow{6}{*}{\begin{tabular}[c]{@{}c@{}c@{}} Remaining robots continue re-\\gularly. The coverage gaps be-\\come extra workloads. May pro-\\duce strongly overlapped paths \\due to the looping problem \end{tabular}}   & \multirow{6}{*}{\begin{tabular}[c]{@{}c@{}c@{}} Neighbors extend their trees\\ to fill the coverage gap of\\ the failed robot, but the approach \\ is not proactive and the already\\ explored area by the failed\\ robot is scanned again \end{tabular}} \\
                                                                                        &                                                                                             &                                                                                           \\ [1.3cm]
\multirow{4}{*}{\begin{tabular}[c]{@{}c@{}} \textit{No-idling} \\\textit{Strategy}\end{tabular}} & \multirow{4}{*}{\begin{tabular}[c]{@{}c@{}} The idling robot and its near-finishing\\ neighbors jointly optimize to help\\ other robots to reduce coverage\\ time and collect more worth early\end{tabular}}                                                         & \multirow{4}{*}{\begin{tabular}[c]{@{}c@{}} Idling robots are\\ reallocated to new\\ tasks that maximize\\ their own utility\end{tabular}}   & \multirow{4}{*}{\begin{tabular}[c]{@{}c@{}} None\end{tabular}} & \multirow{4}{*}{\begin{tabular}[c]{@{}c@{}} None \end{tabular}} \\
 &                                                                                             \\ [0.8cm]
\begin{tabular}[c]{@{}c@{}} \textit{Optimization} \\\textit{Factors} \end{tabular}  & \begin{tabular}[c]{@{}c@{}}Estimated worths of contested\\ tasks, remaining reliability and\\ traveling time of live robots\end{tabular} & \begin{tabular}[c]{@{}c@{}}Unexplored portion of \\ tasks and traveling\\ time of robots \end{tabular} & \begin{tabular}[c]{@{}c@{}}None \end{tabular} & \begin{tabular}[c]{@{}c@{}}None\end{tabular}  \\ \bottomrule \vspace{-18pt}
\end{tabular}
\end{table*}
}

In terms of MCPP, Batalin and Sukhatme~\cite{BS02} proposed two local approaches for MCPP in unknown environments, based on mutually dispersive interaction between robots. Latimer et al.~\cite{LSLSCH02} presented a boustrophedon cellular decomposition-based approach using a team of circular robots. The robots operate together, but can split up into smaller teams when cells are created or completed. Rekleitis et al.~\cite{RNRC08} presented a distributed auction-based cooperative coverage algorithm, where the whole space is partitioned into tasks of fixed height and width, and robots utilized the Morse decomposition based single-robot CPP algorithm to search within each task. Sheng et al.~\cite{SYTX06} proposed a multi-robot area exploration method with limited communication range, where the waypoint of each robot is computed using a distributed bidding mechanism based on frontier cells. The bids rely on the information gain, communication limitation, and traveling costs to frontier cells. Rutishauser et al.~\cite{RCM09} presented a distributed coverage method using miniature robots that are subject to sensor and actuator noise. Xu and Stentz~\cite{XS11} presented the $k$-Rural Postman Problem ($k$-RPP) algorithm to achieve environmental coverage with incomplete prior information using $k$ robots, that seeks to equalize the lengths of $k$ paths. Bhattacharya et al.~\cite{BGK14} generalized the control law towards minimizing the coverage functional to non-Euclidean spaces, and presented a discrete implementation using graph search-based algorithms for MCPP. Karapetyan et al.~\cite{KBMTR17} presented two approximation heuristics for MCPP in known environments, where the search area is divided into equal regions and exact cellular decomposition based coverage was used to search each region. Later, these methods were improved to consider vehicle kinematic constraints~\cite{KMLLOR18}. Yang et al.~\cite{YL04} proposed an online neural network based MCPP approach. In their method, the discovered environment was represented according to the dynamic activity landscape of the neural network, which is used to compute robot waypoints; and robots treat each other as moving obstacles during operation.

However, the above-mentioned algorithms have not addressed the problem of resilience in MCPP. In this regard, Agmon et al.~\cite{AHK08} presented a family of Multi-robot Spanning Tree Coverage (MSTC) algorithms, where the Online Robust MSTC (ORMSTC) algorithm enables each robot to incrementally construct a local spanning tree to cover a portion of the whole space. If some robot fails, its local tree is released and taken over by its neighbors, but the already explored region of the failed robot must be scanned again. Also, the tree grows on the scale of $2\times2$ cells, while if any cell within such larger cell is occupied by obstacles, the whole larger cell would not be covered, thus leading to incomplete coverage. Zheng et al.~\cite{ZKKJ10} presented a polynomial-time Multi-Robot Forest Coverage (MFC) algorithm that computes tree covers for each robot with trees of balanced weights, and they showed the superiority of MFC in time to MSTC via simulations; however, their algorithm does not consider failures. Song et. al.~\cite{SGH14} presented the First-Responder (FR) cooperative coverage strategy, where early completed robots are reassigned to available new tasks that can maximize their own utility. However, this algorithm is not proactive, i.e., the coverage gaps caused by robot failures will not be filled until some other robots complete their tasks. Ferranti et. al.~\cite{FTL07} presented the Brick and Mortar (B\&M) algorithm, where the waypoint of each robot is computed locally based on the states of cells in the neighborhood. The idea behind B\&M is to gradually thicken the blocks of inaccessible cells (i.e., visited or wall cells), while maintaining the connectivity of accessible cells (i.e., explored or unexplored cells). An unexplored cell can be marked as explored or visited, where the latter is allowed if it does not block the path between any two accessible cells in the neighborhood. The waypoint gives priority to the unexplored cell in the neighborhood, which has the most inaccessible cells around it. When some robot fails, the remaining robots continue regularly and the coverage gap becomes an extra workload; however, their method may produce redundant coverage due to the looping problem.

\vspace{0pt}
\textbf{Research Gap}: Although resilience concepts have been discussed in robot design~\cite{TSMGWKWW14}, robot damage detection and recovery~\cite{KCM13}, flocking of robot teams~\cite{SSPPK17} and networked control systems security under attacks~\cite{SPYZH17}, there is a scarcity of efforts that deal with the resilient coverage using multiple robots. Some of above-mentioned papers considered robot failures during coverage, however, their remedy was to simply release the coverage gaps to the remaining team, without optimization over the criticality (i.e., available worth) of such coverage gaps and the reliability of remaining robots. Thus, they are not proactive in filling the coverage gaps immediately if they satisfy optimization criteria, they wait until some other robots finish their tasks. In this regard, this paper presents a game-theoretic method for resilient and efficient coverage that incorporates these optimization factors while making event-driven proactive task reallocations. Table~\ref{table:comparison} presents a comparison of the key features of the CARE algorithm with the other relevant online MCPP algorithms.

\section{Problem Description}\label{sec:problemformulation}
This section presents the description of the robots, the MCPP problem and the performance metrics.

\vspace{-5pt}
\subsection{\textbf{Description of the Robots}} \label{Section:AV}
Let $V=\{v_1, v_2,\ldots v_N\}$ be the team of $N \in \mathbb{N}^+$ robots, which are unmanned autonomous vehicles, as shown in Fig.~\ref{fig:tiling_vehicle}. It is assumed that each robot is equipped with:
\begin{itemize}
\item a localization device (e.g., GPS) or a SLAM~\cite{KE15} system for operations in GPS-denied environments;
\item a range detector (e.g., a laser) to detect obstacles within a radius $r_s \in \mathbb{R}^+$; 
\item a task specific sensor for performing the desired task; and
\item a wireless communication device for (periodic or event-driven) information exchange between all pairs of robots. The communication is assumed to be perfect.
\end{itemize}

The robots continuously deplete the energies from their batteries; thus, their reliability is assessed based on the remaining energy as presented below.

\vspace{6pt}
\textbf{Battery Reliability}: Each robot $v_\ell \in V$, is assumed to carry a battery whose reliability~\cite{IAHZ15}, denoted as $R_{v_\ell}(t)$, can be computed as $R_{v_\ell}(t) = 1 - F(t)$, where $F(t)$ is the probability of battery being drained up to time $t$. Typically, the state-of-charge of a battery can be model using the realistic \textit{Kinetic Battery Model} (KiBaM), which takes into account many important non-linear properties of batteries such as the rate-capacity effect and the recovery effect~\cite{JH09}. It is shown in~\cite{CJH07} that with KiBaM, $F(t)$ follows a S-shaped curve when operating under different stochastic workload models (e.g., the on/off model and the burst model). The S-shaped curve can be approximated using a sigmoid function~\cite{IAHZ15}. As such, the reliability of a robot $v_{\ell}$ is given as:
\vspace{-3pt}
\begin{equation}\label{eq:battery}
R_{v_\ell}(t) = \frac{1}{1 + e^{\rho_0(t-\rho_1)}},
\end{equation}
where $\rho_0$ and $\rho_1$ indicate the curvature of the growth part and the inflection point, respectively. Their exact values depend on the choice of batteries. More details on the selection of these parameters are presented in Section~\ref{sec:results}.

\vspace{-5pt}
\subsection{\textbf{The MCPP Problem}}\label{Sect:subseccoverage}
The search area $\mathcal{R} \subset \mathbb{R}^2$ is assumed to be a planar field whose borderline is defined either by a hard barrier (e.g., walls or obstacles) or by a soft boundary (e.g., sub-area of a large field). A finite but unknown number of obstacles with arbitrary shapes are assumed to populate this area, but their exact locations and shapes are \textit{a priori} unknown.

For the purpose of coverage path planning, a tiling $\mathcal{T}=\{\tau_{\alpha} \subset \mathbb{R}^2, \alpha = 1, \ldots |\mathcal{T}|\}$ is constructed to cover $\mathcal{R}$, i.e. $\mathcal{R} \subseteq \bigcup_{\alpha=1}^{|\mathcal{T}|} \tau_{\alpha}$, as shown in Fig.~\ref{fig:tiling_vehicle}. Each $\tau_{\alpha} \in \mathcal{T}$ is called an $\epsilon$-cell, which is a square-shaped cell of side length $\epsilon \in \mathbb{R}^+$. The tiling is formed as minimal such that all $\epsilon$-cells are disjoint from each other, i.e., $\tau^{\circ}_{\alpha} \bigcap \tau^{\circ}_{\beta} = \emptyset$, $\forall \alpha, \beta \in \{1, \ldots |\mathcal{T}|\}, \alpha \neq \beta$, where $^{\circ}$ denotes the interior; and the removal of any single $\epsilon$-cell from $\mathcal{T}$ will destroy the covering.

The tiling $\mathcal{T}$ is partitioned into three sets: i) \textit{obstacle} ($\mathcal{T}^o$), ii) \textit{forbidden} ($\mathcal{T}^f$), and iii) \textit{allowed} ($\mathcal{T}^a$). While the cells in $\mathcal{T}^o$ are occupied by obstacles, the cells in $\mathcal{T}^f$ create a buffer around the obstacles to prevent collisions due to inertia or large turning radius of the robots. Due to lack of \textit{a priori} knowledge of the environment, the obstacle cells and forbidden cells are discovered online using sensor measurements. The remaining cells are allowed, which form the free space $\mathcal{R}^a = \bigcup_{\tau_\alpha \in \mathcal{T}^a} \tau_{\alpha}$ that is desired to be covered.

\begin{figure}[t]
    \centering
    \includegraphics[width = 0.72\columnwidth]{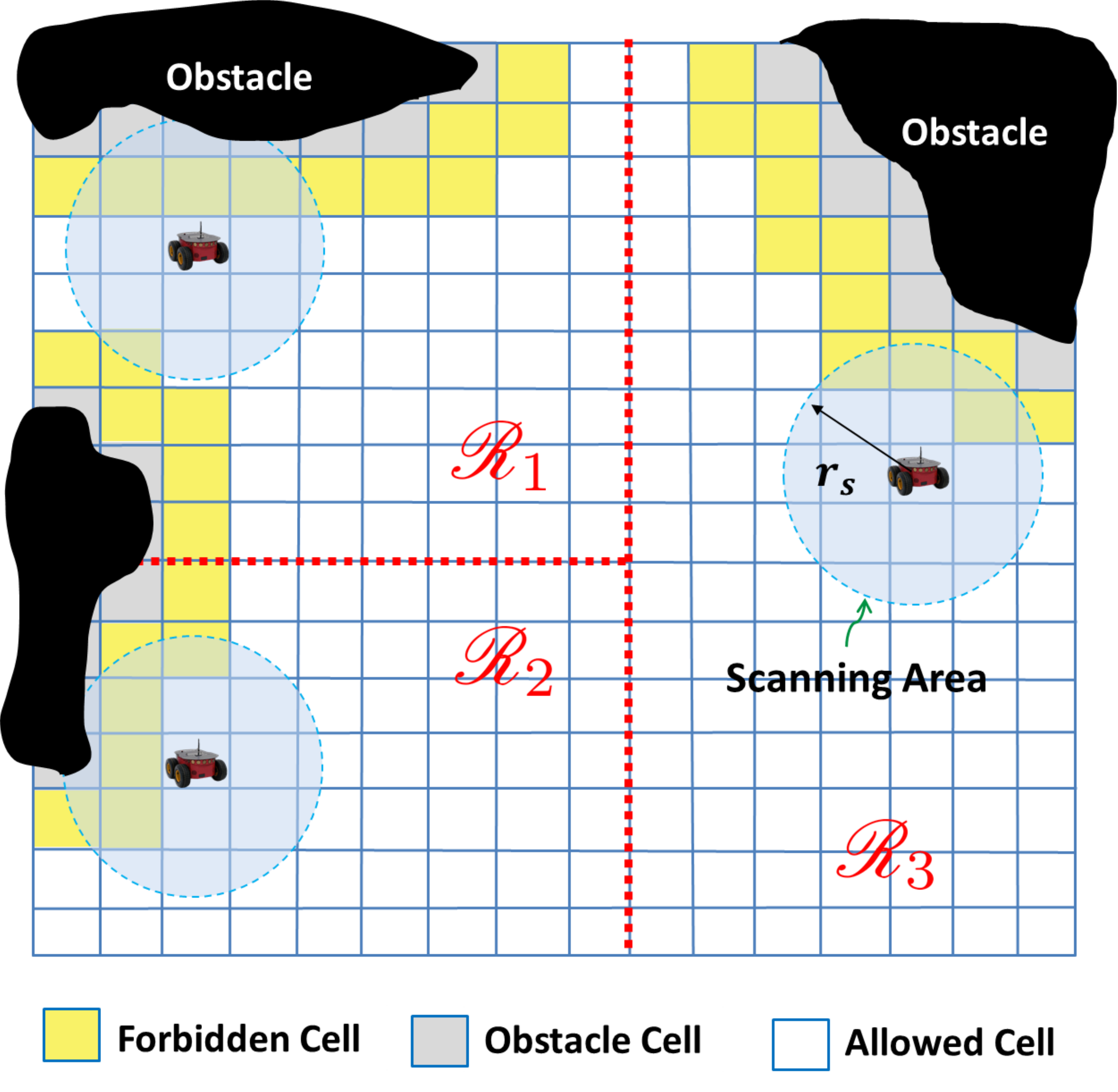}
    \caption{Example of a search area and its tiling. A team of 3 robots are scanning in three different tasks $\mathcal{R}_1$, $\mathcal{R}_2$ and $\mathcal{R}_3$. Robots are equipped with lasers for obstacle mapping.}
    \label{fig:tiling_vehicle}\vspace{-12pt}
\end{figure}

For distribution of multiple robots, an initial task allocation is required. Thus, the tiling $\mathcal{T}$ is grouped into $M$ disjoint regions $\{\mathcal{R}_r \subset \mathcal{T}: r = 1,\ldots M\}$, s.t. $\mathcal{R} = \bigcup_{r = 1}^M \mathcal{R}_r$. Each region $\mathcal{R}_r$ is regarded as one task and is referred as task $r$. Fig.~\ref{fig:tiling_vehicle} shows an example of the area with $M=3$ tasks. Each robot can work on one task at a time, but one task can be assigned to multiple robots. Note that $M$ may not be equal to $N$. 

\begin{rem} The problem of optimal space partitioning into disjoint tasks and optimal initial robot allocations may require consideration of several factors (e.g., obstacle distribution, robot capabilities, and terrain types (or bathymetry)) and is beyond the scope of this paper. Here, we assume that no \textit{a priori} knowledge of the environment is available, thus the tasks are made of equal sizes. However, as more information is obtained during exploration, event-driven task re-allocations are performed for performance improvement. 
\end{rem}

\begin{defn}[\textbf{Complete Coverage}] Let $\epsilon_\ell(k) \in \mathcal{T}$ be the $\epsilon$-cell that is visited and explored by the robot $v_\ell$ at time $k$. Then the robot team $V$ is said to achieve complete coverage, if $\exists K \in \mathbb{N}$, s.t. the sequences $\{\epsilon_\ell(k), k=0, \ldots K\}, \forall \ell=1,\ldots N$, jointly cover the free space $\mathcal{R}^a$, i.e.,

\vspace{-3pt}
\begin{equation}\label{defn:epsiloncov}
\mathcal{R}^a \subseteq \bigcup_{\ell=1}^N \bigcup_{k=1}^{K}\epsilon_\ell(k).
\end{equation}
\end{defn}

In other words, the coverage is said to be complete if every cell in $\mathcal{R}^a$ is explored by at least one robot.

\vspace{3pt}

Next, it is assumed that each task contains randomly distributed targets, and their exact numbers and locations are unknown (details in Section \ref{sec:taskreallocation}). However, it is assumed that the expected number of targets in each task is known, which in practice could be obtained by various means such as field surveys, aerial views or prior knowledge from other sources.

\begin{rem} If the total number and spatial distribution of targets is known a priori, then complete coverage may not be necessary and an optimal traversing strategy could be constructed to find all the targets. However, in this paper, we assume that the planner neither knows the exact number of these targets, nor their exact locations, thus complete coverage becomes mandatory to guarantee finding all the targets. 
\end{rem}

Due to non-uniform spatial distribution of targets and obstacles within each task, the targets are discovered at unequal rates by all robots. Thus, at any point of time all tasks could contain significantly different numbers of undiscovered targets. It is therefore critical that the regions with the maximum number of targets are scanned earlier and are given priority. Early detection of targets helps when the mission is terminated prematurely due to emergencies, failures or other reasons and also provides mental comfort to the operator. For example, once the highly utilized areas of a building floor are cleaned then other areas could be cleaned gradually at ease.

Furthermore, the robots may suffer from unexpected failures during the coverage operation due to several reasons (e.g., sensor (or actuator) malfunctions, and mechanical defects) which lead to coverage gaps. Thus, it is important to fill these coverage gaps by task reallocations of healthy robots. It is also important that the criticality of the task of the failed robot, as measured by the expected number of remaining targets, is evaluated for task-reallocations in comparison with the existing tasks of healthy robots.

\vspace{-6pt}
\subsection{\textbf{Performance Metrics}}\label{Sec:performancemetrics}
The quality of multi-robot coverage can be evaluated based on the following performance metrics:
\begin{itemize}
\item \textit{Coverage ratio} ($CR$): The ratio of the explored free space to the total free space, i.e.,

\begin{equation}\label{eq:c_r}
CR = \frac{\big( \cup_{\ell=1}^N \cup_{k=1}^{K}\epsilon_{\ell}(k) \big) \cap \mathcal{R}^a}{\mathcal{R}^a} \in [0, 1].
\end{equation}

Note that $CR < 1$ if the coverage gaps caused by robot failures are left unattended.

\vspace{3pt}
\item \textit{Coverage time} ($CT$): The total operation time of the team. This is measured by the last robot that finishes its task.

\vspace{3pt}
\item \textit{Remaining reliability} ($RR$): The remaining reliability of all live robots at the end of the operation.

\vspace{3pt}
\item \textit{Number of Targets Found} ($NoTF$): The total number of targets discovered by the whole team.

\vspace{3pt}
\item \textit{Time of Target Discovery} ($ToTD$): The time for the whole team to discover a certain percentage of all targets. Note that the time of discovering all targets is less than or equal to the coverage time. Only in the limiting case, when the last target is discovered in the last visited cell by the robot that stops last, the coverage time will be equal to the $ToTD$ for all targets.
\end{itemize}

\begin{figure*}[t]
  \centering
  \includegraphics[width = .98\textwidth]{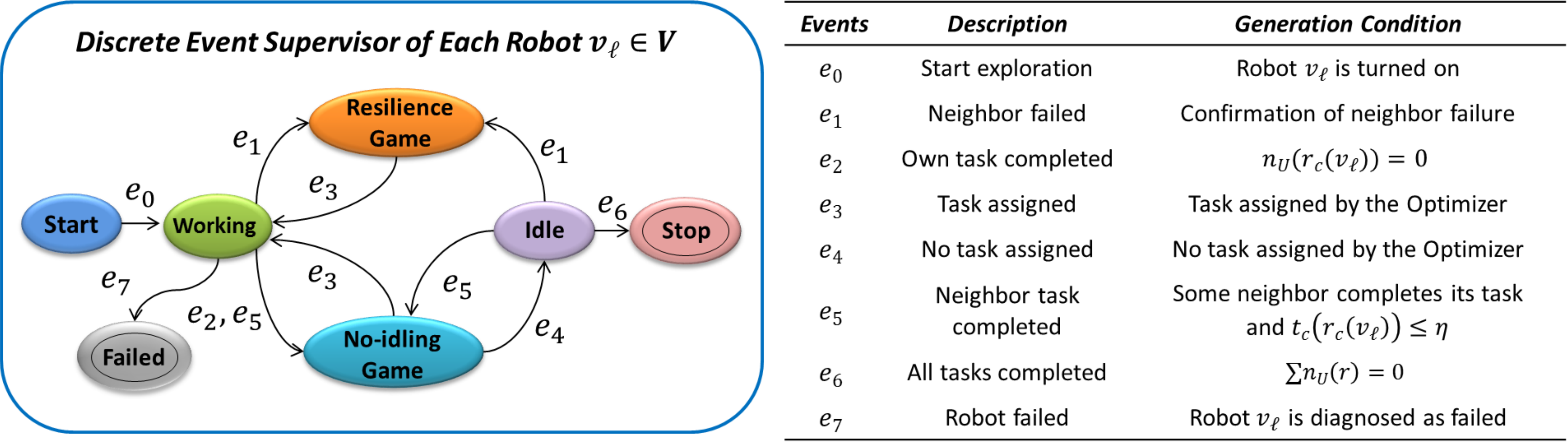}
  \caption{The discrete event supervisor in the CARE algorithm} \vspace{-9pt}
  \label{fig:TE}
\end{figure*}

The objective of MCPP is to achieve $CR = 1$ (even under a few robot failures), while minimizing $CT$, minimizing $ToTD$, and maximizing $RR$. 

\section{CARE Algorithm}\label{sec:tepsilon}

The CARE algorithm addresses the above-mentioned MCPP problem via facilitating distributed event-driven task reallocations. In CARE, a set of local robots jointly re-plan their task assignments in two situations: (1) when a robot has finished its current task, or (2) when a robot has failed and is detected as non-responsive. The replanning algorithm employs a game-theoretic formulation, which computes the task worths and the success probabilities for each participating robot-task pair as optimization factors for optimal task reallocations. The task worths are measured by their expected number of undiscovered targets, while the probabilities of success of robot-task pairs are computed based on the robots' battery reliabilities, travel times, and predicted times to finish the contested tasks.

CARE utilizes a distributed yet cooperative control architecture, where each robot $v_\ell \in V$ is controlled by a Discrete Event Supervisor (DES) that is modeled as a finite state automaton~\cite{CL09}.

\vspace{0pt}
\subsection{\textbf{Discrete Event Supervisor}}\label{sec:supervisor}
The DES as shown in Fig.~\ref{fig:TE} is defined below.

\vspace{-3pt}
\begin{defn}[\textbf{DES}]\label{def:DFSA}
The \textit{DES}, denoted as $H$, is a deterministic finite state automaton represented by a $5$-tuple as follows
\begin{equation}
H = (X, \mathscr{E}, \delta, x_0, X_m), \nonumber
\end{equation}\vspace{-3pt}
where:
\begin{itemize}
\item $X = \{ST, WK, NG, RG, ID, FL, SP\}$ is the set of states, where $ST\equiv$ `\textit{Start}', $WK\equiv$ `\textit{Working}', $NG\equiv$ `\textit{No-idling Game}', $RG\equiv$ `\textit{Resilience Game}', $ID\equiv$ `\textit{Idle}', $FL\equiv$ `\textit{Failed}' and $SP\equiv$ `\textit{Stop}'.
\item $\mathscr{E} = \{e_0, e_1, \ldots e_7\}$ is the finite set of events.
\item $\delta: X \times \mathscr{E} \rightarrow X$ is the partial state transition function. It is defined from one state to another if and only if there exists an arrow connecting them carrying an event.
\item $x_0 = ST$ is the initial state.
\item $X_m = \{SP, FL\}$ is the set of marked states, which means a robot can either stop after finishing all the tasks or it may fail unexpectedly.
\end{itemize}
\end{defn}

While the states $ST$, $ID$, $FL$ and $SP$ are self-explanatory, the operations in states $WK$, $NG$ and $RG$ are described as follows. In state $WK$, the supervisor $H$ of robot $v_\ell$ adopts the $\epsilon^\star$ algorithm~\cite{SG17} for online coverage within its own task. Since no \textit{a priori} information is available, all cells are initialized as unexplored. As the robot explores its task, it updates these cells as explored, obstacles and forbidden as suitable to track the progress of exploration~\cite{SRP09}. This information is then periodically shared and synchronized with other robots such that each robot maintains a symbolic map of the entire region.

In states $RG$ and $NG$, $H$ triggers the \textit{Optimizer} to play resilience games and no-idling games, respectively. The objective of resilience games is to optimally re-organize the neighbors of the failed robot to immediately fill the coverage gap, if it contains higher worth; while for no-idling games, the objective is to optimally reallocate the idling robot and its near-finishing neighbors to help other robots to reduce coverage time and collect more worth early. Details of \textit{Optimizer} functionality are explained later in Section~\ref{sec:taskreallocation}.

\vspace{6pt}
\textbf{Events and State Transitions}: The events in $\mathscr{E}$ enable state transitions in $H$, which are explained below. First, we define:
\begin{enumerate}
\item $r_c: V \rightarrow \{1,\ldots M\}$ to be the allocation function that indicates the current task allocations of robots;
\item $t_c: \{1,\ldots M\} \rightarrow [0,\infty)$ to be the remaining time required to complete a given task by its assigned robots;
\item $n_U: \{1,\ldots M\} \rightarrow \mathbb{N}$ to be the number of unexplored cells in a given task.
\end{enumerate}

Now, consider a robot $v_\ell \in V$ that is currently working in task $r_c(v_\ell)$. Event $e_0$ is generated when $v_\ell$ is turned on, and $H$ moves to the state $WK$ to start searching in task $r_c(v_\ell)$ using the $\epsilon^\star$ algorithm.

Event $e_1$ is produced if any of its neighbor robot fails. This transitions $H$ to the state $RG$, that in turn invokes the \textit{Optimizer} to play the resilience game to generate a task reallocation decision for $v_\ell$. Failure of a robot is detected using a standard mechanism based on heartbeat signals~\cite{CTA02}. Each robot periodically broadcasts heartbeat signals, and also listening from others. Then a neighbor robot is detected as failed if its message is not received by $v_\ell$ constantly for a certain period of time $T_0 \in \mathbb{R}^+$. To ensure robustness to false alarms, its failure is further confirmed if the majority of $\kappa_2 \in \mathbb{N}^+$ neighbors detect its failure. Event $e_2$ occurs as soon as task $r_c(v_\ell)$ is completed, i.e., the number of unexplored cells in task $r_c(v_\ell)$, denoted as $n_U(r_c(v_\ell))$, becomes $0$. Event $e_2$ moves $H$ to the state $NG$, where the \textit{Optimizer} is called to play the no-idling game for finding a new task for $v_{\ell}$.

Event $e_3$ appears if the \textit{Optimizer} assigns a new (or current) task to robot $v_\ell$, which drives $H$ back to the state $WK$ to search in the assigned task; otherwise if no task is assigned, event $e_4$ is generated that moves $H$ to the state $ID$ and the robot becomes idle. Event $e_5$ is produced if some neighbor robot just completed its task and triggered the no-idling game, while $v_\ell$ is close to finish task $r_c(v_\ell)$, i.e., $t_c(r_c(v_\ell)) \leq \eta \in \mathbb{R}^+$, hence ready to reallocate after finishing the current task. Specifically, $t_c(r_c(v_\ell))=\frac{n_U(r_c(v_{\ell}))}{\omega}$, where $\omega \in \mathbb{R}^+$ is the speed of tasking a cell by the assigned robots. Then again, $H$ comes to the state $NG$ and the \textit{Optimizer} is invoked to compute for a new task.

Event $e_6$ occurs when the entire area $\mathcal{R}$ is covered, by satisfying Eq. (\ref{defn:epsiloncov}). This happens when no unexplored cells are left in the whole region, i.e., $\sum_{r=1}^M n_U(r) = 0$. This moves $H$ to the terminal state $SP$ and the coverage is complete. At last, event $e_7$ is generated if $v_\ell$ itself is diagnosed as failed by its own diagnosis, and $H$ moves to the state $FL$. An advanced failure diagnostic tool is beyond the scope of this paper.

\vspace{-12pt}
\subsection{\textbf{Distributed Optimizer}}\label{sec:taskreallocation}
The \textit{Optimizer} is invoked by the supervisor $H$ to compute reallocation decisions under two conditions: (i) $H$ reaches $RG$ state upon detection of a neighbor failure (i.e., event $e_1$); or (ii) $H$ reaches $NG$ state upon completion of its own task (i.e., event $e_2$), or completion of a neighbor's task (i.e., event $e_5$).

Specifically, the \textit{Optimizer} is built based on the concept of \textit{Potential Games}~\cite{MS96}, which have the following advantages: (i) at least one Nash Equilibrium is guaranteed to exist, (ii) several learning algorithms are available (e.g., the Max-Logit algorithm~\cite{SWL11}\cite{DHY15}) that can converge fast to the optimal equilibrium, and (iii) the utility of each player is perfectly aligned with a globally shared potential function, thus as each player seeks to increase its own utility, the potential function is simultaneously improved and maximized upon reaching the optimal equilibrium.

Before presenting the details of the \textit{Optimizer}, we list the various useful parameters in Table~\ref{table:variables}. Some mathematical preliminaries are presented below.

\vspace{6pt}
\textbf{Preliminaries}: A game $G$ in strategic form~\cite{M13} consists of:
\begin{itemize}
\item A finite set of players $\mathscr{P} = \{\mathscr{P}_i \in V: i = 1, \ldots |\mathscr{P}|\}$, which includes all available robots that could be reallocated.
\item A non-empty set of actions $\mathscr{A}_i$ associated to each player $\mathscr{P}_i$. In this paper, each action $a_i \in \mathscr{A}_i$ corresponds to the index of an available task, and the action set is assumed identical for all players, i.e., $\mathscr{A}_i = \mathscr{A}_j =\tilde{\mathscr{A}}$, $\forall i, j \in \{1,\ldots |\mathscr{P}|\}$.
\item The \textit{utility function} associated with each player $\mathscr{P}_i$, defined as $\mathscr{U}_i: \mathscr{A}_{\mathscr{P}} \rightarrow \mathbb{R}$, where $\mathscr{A}_{\mathscr{P}} = \mathscr{A}_1 \times \ldots \times \mathscr{A}_{|\mathscr{P}|}$ denotes the set of joint actions for all players.
\end{itemize}

The utility function computes the payoff that $\mathscr{P}_i$ can receive by taking an action $a_i \in \mathscr{A}_i$, given that the rest of the players jointly select $a_{-i} \in \mathscr{A}_{-i}$, where $\mathscr{A}_{-i} := \mathscr{A}_1 \times \ldots \times \mathscr{A}_{i-1} \times \mathscr{A}_{i+1} \times \ldots \times \mathscr{A}_{|\mathscr{P}|}$. A joint action of all players $a_{\mathscr{P}} \in \mathscr{A}_{\mathscr{P}}$ is often written as $a_{\mathscr{P}} = (a_i, a_{-i})$.

{\setlength{\belowcaptionskip}{0pt}
\begin{table}[t]
\centering
\caption{List of key parameters in CARE}
\label{table:variables}
\begin{tabular}{cl}
\toprule
Parameter & Description  \\ [0.1cm] \hline \\ [-0.16cm]
$N$           & Total number of robots   \\[0.1cm]
$M$           & Total number of tasks  \\[0.1cm]
$\rho_0$ & Curvature of the growth part in battery model \\[0.1cm]
$\rho_1$      & Inflection point in battery model \\[0.1cm]
$u$           & Robot traveling speed  \\[0.1cm]
$\omega$      & Robot tasking speed    \\[0.1cm]
$\lambda_r$ & Expected number of targets in task $r$ \\[0.1cm]
\multirow{2}{*}{$\eta$}        & \multirow{2}{*}{\begin{tabular}[l]{@{}l@{}}Threshold to identify robots that are close to \\finishing their tasks\end{tabular}} \\[0.4cm]
\multirow{2}{*}{$\gamma$} & \multirow{2}{*}{\begin{tabular}[l]{@{}l@{}} Threshold to identify incomplete tasks with\\ sufficient work left \end{tabular}} \\[0.4cm]
$\kappa_1$    & Neighborhood size in no-idling games  \\[0.1cm]
$\kappa_2$    & Neighborhood size in resilient games \\ \bottomrule
\end{tabular}
\end{table}
}

\begin{defn}[\textbf{Nash Equlibrium}]\label{defn:ne}
A joint action $a_{\mathscr{P}}^\star = (a_i^\star, a_{-i}^\star) \in \mathscr{A}_{\mathscr{P}}$ is called a pure Nash Equilibrium if

\begin{equation}\label{eq:ne}
\mathscr{U}_i(a_i^\star, a_{-i}^\star) = \underset{a_i \in \mathscr{A}_i}{\max} \ \mathscr{U}_i(a_i, a_{-i}^\star), \  \forall \mathscr{P}_i \in \mathscr{P}.
\end{equation}
\end{defn}

\vspace{-3pt}
\begin{defn}[\textbf{Potential Games}]\label{defn:potentialgames}
A game $G$ in strategic form with action sets $\{\mathscr{A}_i\}_{i=1}^{|\mathscr{P}|}$ together with utility functions $\{\mathscr{U}_i\}_{i=1}^{|\mathscr{P}|}$ is a potential game if and only if, a potential function $\phi: \mathscr{A}_{\mathscr{P}} \rightarrow \mathbb{R}$ exists, s.t. $\forall$ $\mathscr{P}_i \in \mathscr{P}$
\begin{equation}\label{eq:potentialgames}
\mathscr{U}_i(a_i', a_{-i}) - \mathscr{U}_i(a_i'', a_{-i}) = \phi(a_i', a_{-i}) - \phi(a_i'', a_{-i}),
\end{equation}
$\forall$ $a_i', a_i'' \in \mathscr{A}_i$ and $\forall$ $a_{-i} \in \mathscr{A}_{-i}$.
\end{defn}
A potential game requires perfect alignment between the utility of an individual player and the globally shared potential function $\phi$ for all players, in the sense that the utility change by unilaterally deviating a player's action is equal to the amount of change in the potential function. In other words, the potential function $\phi$ can track the changes in payoffs as some player unilaterally deviates from its current action. Therefore, if $\phi$ is designed as the global objective, then as players negotiate towards maximizing their individual utilities, the global objective is simultaneously optimized.

Now, let us present the resilience games and no-idling games modeled as potential games.

\vspace{6pt}
\textbf{Specifics of Resilience Games and No-idling Games}: Due to different objectives and triggering conditions, the player set and action set are fundamentally different for resilience games and no-idling games. Let $\mathscr{N}_{\kappa}^{v_\ell}$ denote the set of $\kappa \in \mathbb{N}^+$ nearest neighbors of robot $v_\ell$.

$\bullet$ \textbf{No-idling Game}: A no-idling game is triggered when some robot $v_{id} \in V$ completes its current task and becomes idle. Then, it calls its $\kappa_1$ nearest neighbors $v_\ell \in \mathscr{N}_{\kappa_1}^{v_{id}}$ that are close to finish their tasks to participate in the game. Thus, a no-idling game comprises of:
\begin{itemize}
\item $\mathscr{P} = \{v_{id}\} \cup \{v_\ell \in \mathscr{N}_{\kappa_1}^{v_{id}}: t_c(r_c(v_\ell)) \leq \eta\}$.
\item $\tilde{\mathscr{A}} = \{r \in \{1,\ldots M\}: t_c(r) \geq \gamma \in \mathbb{R}^+\}$, which contains incomplete tasks that have sufficient work left to be finished by their currently assigned robots. If some players still have some work left in their current tasks, they are assigned such that they finish their current tasks before being reallocated to new tasks.
\end{itemize}

$\bullet$ \textbf{Resilience Game:} A resilience game is triggered when some robot $v_f$ fails. Then, the $\kappa_2$ nearest neighbors of $v_f$ are involved in the game to re-optimize their current task allocations. Thus, a resilience game comprises of:
\begin{itemize}
\item $\mathscr{P} = \mathscr{N}_{\kappa_2}^{v_f}$.
\item $\tilde{\mathscr{A}} = \{r_c(v_f)\} \cup \{r_c(v_\ell), v_\ell \in \mathscr{N}_{\kappa_2}^{v_f}: t_c(r_c(v_\ell)) > \eta\}$, which contains the current tasks of all players and the failed robot. The condition $t_c(r_c(v_\ell)) > \eta$ ensures that those tasks close to be finished will be completed by their currently assigned robot and hence not needed to be part of the game.
\end{itemize}

\begin{rem}
If there exist other active robots working in the same task of the failed robot, then they will take over this task and no resilience game is triggered.
\end{rem}

\begin{rem}\label{rem:data_synchronization}
When a game is initiated, the information is exchanged and synchronized between all players, including their locations, discovered environment maps, success probabilities and estimated task worths.
\end{rem}

Although the game specifics are different for the resilience and no-idling games, they follow the same design of the potential function and utility function as follows.

\vspace{6pt}
\textbf{Design of Potential Function for Task Reallocations}: As explained earlier in Section~\ref{sec:challenges}, the players must analyze the following optimization factors during task reallocation:
\begin{enumerate}
\item Task worths, which can be quantified by the expected number of undiscovered targets in the tasks.

\item Probability of success of each player to finish a certain task, which depends on its current battery reliability, the cost of traveling to the new task, and the cost of finishing the new task.
\end{enumerate}
Thus, the potential function $\phi$ for all players in the game is defined to be the total expected worth~\cite{AMS07} obtained by choosing a joint action $a_{\mathscr{P}} \in \mathscr{A}_{\mathscr{P}}$, as follows.

\vspace{-3pt}
\begin{equation}\label{eq:phi}
\phi(a_{\mathscr{P}}) = \sum\limits_{r\in \tilde{\mathscr{A}}} w_r \bigg( 1 - \underset{\mathscr{P}_i \in \{\mathscr{P}\}_r}{\prod} \big[ 1 - p_r(\mathscr{P}_i) \big] \bigg),
\end{equation}
where $\{\mathscr{P}\}_r \triangleq \{\mathscr{P}_i \in \mathscr{P}: a_i = r\}$ denotes the subset of players that choose the same task $r \in \{1,\ldots M\}$ in the joint action $a_{\mathscr{P}}$; $w_r$ is the current available worth of task $r$; and $p_r(\mathscr{P}_i)$ is the success probability of player $\mathscr{P}_i$ to finish task $r$. The term $p(r) := 1 - \prod_{\mathscr{P}_i \in \{\mathscr{P}\}_r} \big[ 1 - p_r(\mathscr{P}_i) \big]$ is the joint success probability for all players to finish task $r$ together.

As exploration continues, the conditions of robots and tasks change dynamically. Thus, the success probability $p_r(\mathscr{P}_i)$ and the task worth $w_r$ in Eq.~(\ref{eq:phi}) must be updated before a game is played.

\vspace{6pt}
\textbf{Computation of Success Probability}: The success probability $p_r(\mathscr{P}_i)$ is evaluated online using Eq. (\ref{eq:battery}) as follows.
\vspace{-3pt}
\begin{equation}\label{eq:successprobability}
p_r(\mathscr{P}_i) = R_{\mathscr{P}_i}(\tilde{t}),
\end{equation}
where $R_{\mathscr{P}_i}(\tilde{t})$ is the reliability of player $\mathscr{P}_i$ at time $\tilde{t}$, which is estimated as
\vspace{-3pt}
\begin{equation}\label{eq:totaltime}
\tilde{t} = t_k + t_{tr} + t_r,
\end{equation}
where $t_k$ is the total tasking time of $\mathscr{P}_i$ since the beginning until the game was initiated, $t_{tr}$ is the traveling time to task $r$, and $t_r$ is the estimated time to complete task $r$. Specifically, $t_{tr} = \frac{Dist(\mathscr{P}_i, r)}{u}$, where $Dist(\mathscr{P}_i, r)$ measures the distance between player $\mathscr{P}_i$'s current location and the centroid of task $r$, and $u \in \mathbb{R}^+$ is the robot's traveling speed; and the time $t_r = \frac{n_U(r)}{\omega}$, where $\omega$ is the speed of tasking a cell by the assigned robots.

In addition, if a robot is selected as a player to find a new task but it still has a small portion left in its current task, then it would like to first finish this task before being reallocated to a new one. Hence, an extra term $t_c$ is included in Eq. (\ref{eq:totaltime}) if the estimated time to complete the unfinished part of its current task $r_c(\mathscr{P}_i)$ satisfies $t_c(r_c(\mathscr{P}_i)) \leq \eta$.

\vspace{6pt}
\textbf{Computation of Task Worths}: The worth $w_r$ in Eq.~(\ref{eq:phi}) indicates the expected number of undiscovered targets in task $r$ that are available to the players. Let $\mathbf{x_r}$ be a random variable that denotes the total number of targets in task $r$, which is assumed to follow the Poisson distribution with parameter $\lambda_r$. Its probability mass function is given as:
\vspace{-3pt}
\begin{equation}\label{eq:poisson_pdf}
Pr\big(\mathbf{x_r} = x\big) = e^{-\lambda_r} \cdot \frac{{\lambda_r}^x}{x!}, \ x = 0, 1, 2\ldots
\end{equation}

If $\xi$ targets have been already discovered in task $r$, then the estimated remaining number of targets, $\tilde{w_r}$, is computed as:
\vspace{-3pt}
\begin{align}
  \tilde{w_r} &= \begin{aligned}[t]
      &\sum_{x=\xi+1}^{\infty} (x-\xi) \cdot e^{-\lambda_r} \cdot \frac{{\lambda_r}^x}{x!}
      \end{aligned} \nonumber\\
      &= \begin{aligned}[t]
      &\sum_{x = 0}^{\infty} x \cdot e^{-\lambda_r} \cdot \frac{{\lambda_r}^x}{x!} - \xi \cdot \sum_{x = 0}^{\infty} e^{-\lambda_r} \cdot \frac{{\lambda_r}^x}{x!} - \sum_{x = 0}^{\xi} (x-\xi) \cdot e^{-\lambda_r} \cdot \frac{{\lambda_r}^x}{x!}
      \end{aligned} \nonumber
\end{align}

By definition, Poisson distribution has mean $\lambda_r$, i.e., $\sum_{x = 0}^{\infty} x \cdot e^{-\lambda_r} \cdot \frac{{\lambda_r}^x}{x!} = \lambda_r$. Also, one has $\sum_{x = 0}^{\infty} e^{-\lambda_r} \cdot \frac{{\lambda_r}^x}{x!} = 1$. Thus, $\tilde{w_r}$ is computed as:

\vspace{-3pt}
\begin{equation}\label{eq:taskworth}
  \tilde{w_r} = (\lambda_r -\xi) + e^{-\lambda_r} \cdot \sum_{x = 0}^{\xi} (\xi-x) \cdot \frac{{\lambda_r}^x}{x!}
\end{equation}

Next, we decide the portion of $\tilde{w_r}$ available to the players, i.e., $w_r$. Since task $r$ may contain some robots currently working there but are not participating in the task reallocation, i.e., they are not players, then if a player selects task $r$, it must work together with these existing robots. In turn, the maximum payoff a player could expect from task $r$ becomes less due to sharing with the existing robots. Let $\bar{\mathscr{P}} \triangleq V \setminus \mathscr{P}$ denote the subset of robots that are not players. Similarly, let $\{\bar{\mathscr{P}}\}_r$ be the set of non-player robots that are currently working in task $r$, which have a joint success probability $q(r)$ for task $r$, i.e., $q(r)= 1 - \prod_{v_\ell \in \{\bar{\mathscr{P}}\}_r} \big(1 - p_r(v_\ell) \big)$. Then $w_r$ is computed as:
\vspace{-3pt}
\begin{equation}\label{eq:worth}
w_r = \tilde{w_r} \cdot \big( 1 - q(r) \big).
\end{equation}

\vspace{6pt}
\textbf{Utility Function of Each Player}: In order to form a potential game, the utility function, together with the potential function defined in Eq. (\ref{eq:phi}), must satisfy Eq. (\ref{eq:potentialgames}). Since the utility of a player also depends on the actions taken by the rest of the players, thus a rule is needed to distribute the total produced payoff among contributing players. In this regard, this paper adopts the concept of \textit{Marginal Contribution} due to its low computation burden thus feasible for online decision-making~\cite{AMS07}.

\vspace{-2pt}
\begin{defn}[\textbf{Marginal Contribution}]\label{def:MC}
The marginal contribution of player $\mathscr{P}_i$ in a joint action $a_{\mathscr{P}} = (a_i, a_{-i})$ is
\begin{equation}\label{eq:MC}
\mathscr{MC}_i = \phi(a_i, a_{-i}) - \phi(\emptyset, a_{-i}),
\end{equation}
where $\emptyset$ represents player $\mathscr{P}_i$'s null action, indicating no task is assigned to it.
\end{defn}

{The utility function is derived as follows. First, substitute Eq.~(\ref{eq:phi}) into Eq.~(\ref{eq:MC}), one has:
\vspace{-3pt}
\begin{align}
  \mathscr{U}_i(a_i, a_{-i}) &= \begin{aligned}[t]
      &\mathscr{MC}_i
      \end{aligned} \nonumber\\
      &= \begin{aligned}[t]
      &\sum\limits_{r\in \tilde{\mathscr{A}}} w_r \bigg( 1 - \underset{\mathscr{P}_j \in \{\mathscr{P}\}_r}{\prod} \big[ 1 - p_r(\mathscr{P}_j) \big] \bigg)\\
      &- \sum\limits_{r\in \tilde{\mathscr{A}}} w_r \bigg( 1 - \underset{\mathscr{P}_j \in \{\mathscr{P}\}_r \setminus \mathscr{P}_i}{\prod} \big[ 1 - p_r(\mathscr{P}_j) \big] \bigg)
      \end{aligned} \nonumber\\
       &= \begin{aligned}[t]
      &\sum\limits_{r\in \tilde{\mathscr{A}}} w_r \cdot \underset{\mathscr{P}_j \in \{\mathscr{P}\}_r\setminus \mathscr{P}_i}{\prod} [1 - p_r(\mathscr{P}_j)] \\
      &- \sum\limits_{r\in \tilde{\mathscr{A}}} w_r \cdot \underset{\mathscr{P}_j \in \{\mathscr{P}\}_r}{\prod} [1 - p_r(\mathscr{P}_j)] \nonumber
      \end{aligned}
\end{align}

Note that for any task $r$ not selected by player $\mathscr{P}_i$, i.e., $r \neq a_i$, one has $\{\mathscr{P}\}_{r} = \{\mathscr{P}\}_r \setminus \mathscr{P}_i$. Thus, the produced potentials in these tasks are canceled in the above equation. It can then be further simplified as below, where $w_{a_i}$ is the worth of task $a_i$.

\vspace{-6pt}
\begin{align}\label{eq:utility}
  \mathscr{U}_i(a_i, a_{-i}) &= \begin{aligned}[t]
      &w_{a_i} \cdot \underset{\mathscr{P}_j \in \{{\mathscr{P}}\}_{a_i}\setminus \mathscr{P}_i }{\prod} [1 - p_{a_i}(\mathscr{P}_j)] \\
      &- w_{a_i} \cdot \underset{\mathscr{P}_j \in \{{\mathscr{P}}\}_{a_i}}{\prod} [1 - p_{a_i}(\mathscr{P}_j)] \nonumber
      \end{aligned} \nonumber\\
       &= \begin{aligned}[t]
      &w_{a_i} \cdot \underset{\mathscr{P}_j \in \{\mathscr{P}\}_{a_i}\setminus \mathscr{P}_i}{\prod} [1 - p_{a_i}(\mathscr{P}_j)]\nonumber\\
      &- w_{a_i} \cdot [1-p_{a_i}(\mathscr{P}_i)] \underset{\mathscr{P}_j \in \{\mathscr{P}\}_{a_i}\setminus \mathscr{P}_i}{\prod} [1 - p_{a_i}(\mathscr{P}_j)]
      \end{aligned} \\
      &= \begin{aligned}[t]
      &w_{a_i} \cdot p_{a_i}(\mathscr{P}_i) \cdot \underset{\mathscr{P}_j \in \{\mathscr{P}\}_{a_i} \setminus \mathscr{P}_i }{\prod} [1 - p_{a_i}(\mathscr{P}_j)]
      \end{aligned}
\end{align}
}

\begin{prop}\label{theorem:potentialgames}
The game $G$ with potential function $\phi$ of Eq. (\ref{eq:phi}) and the utility function $\mathscr{U}_i$ of Eq. (\ref{eq:utility}) is a potential game.
\end{prop}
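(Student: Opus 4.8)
The plan is to verify Definition~\ref{defn:potentialgames} directly: I must show that for every player $\mathscr{P}_i$, every pair of actions $a_i', a_i'' \in \mathscr{A}_i$, and every fixed joint action $a_{-i}$ of the others,
\begin{equation}\label{eq:plan_goal}
\mathscr{U}_i(a_i', a_{-i}) - \mathscr{U}_i(a_i'', a_{-i}) = \phi(a_i', a_{-i}) - \phi(a_i'', a_{-i}). \nonumber
\end{equation}
The natural route is to exploit the fact that the utility in Eq.~(\ref{eq:utility}) was \emph{derived} as the marginal contribution $\mathscr{MC}_i = \phi(a_i, a_{-i}) - \phi(\emptyset, a_{-i})$ of Definition~\ref{def:MC}. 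So the first step is simply to rewrite $\mathscr{U}_i(a_i, a_{-i}) = \phi(a_i, a_{-i}) - \phi(\emptyset, a_{-i})$, which holds by construction (the algebra collapsing Eq.~(\ref{eq:MC}) to Eq.~(\ref{eq:utility}) is already displayed in the excerpt, so I may cite it).

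The second step is the key observation that $\phi(\emptyset, a_{-i})$ does not depend on the action chosen by player $\mathscr{P}_i$ — it is the potential evaluated with $\mathscr{P}_i$ removed from play, hence a function of $a_{-i}$ alone. This is where a small argument is needed: I should point out that in Eq.~(\ref{eq:phi}) the product over $\{\mathscr{P}\}_r = \{\mathscr{P}_j \in \mathscr{P}: a_j = r\}$ only ever references $\mathscr{P}_i$ through whether $a_i = r$; setting $a_i = \emptyset$ excludes $\mathscr{P}_i$ from every set $\{\mathscr{P}\}_r$, leaving an expression in which only the other players' actions appear. Therefore $\phi(\emptyset, a_i', a_{-i}) = \phi(\emptyset, a_i'', a_{-i})$ for any $a_i', a_i''$.

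The third step is to subtract:
\begin{align}
\mathscr{U}_i(a_i', a_{-i}) - \mathscr{U}_i(a_i'', a_{-i})
&= \big[\phi(a_i', a_{-i}) - \phi(\emptyset, a_{-i})\big] - \big[\phi(a_i'', a_{-i}) - \phi(\emptyset, a_{-i})\big] \nonumber\\
&= \phi(a_i', a_{-i}) - \phi(a_i'', a_{-i}), \nonumber
\end{align}
which is exactly Eq.~(\ref{eq:potentialgames}). Since this holds for arbitrary $\mathscr{P}_i$, $a_i', a_i'', a_{-i}$, the game $G$ is a potential game, and it remains only to note that $\phi$ maps into $\mathbb{R}$ (clear, since it is a finite sum of products of the real quantities $w_r$ and $p_r(\mathscr{P}_i) \in [0,1]$), so $\phi$ is a legitimate potential function.

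This proof is essentially a known template — any utility defined via a player's marginal contribution to a common welfare function yields a potential game with that welfare as potential — so there is no serious obstacle. The only point requiring care, and the one I would state explicitly rather than wave through, is the independence of $\phi(\emptyset, a_{-i})$ from $a_i$; everything else is bookkeeping. I would also remark that the argument is identical for both the resilience games and the no-idling games, since they share the forms of $\phi$ and $\mathscr{U}_i$ and differ only in the player set $\mathscr{P}$ and action set $\tilde{\mathscr{A}}$, neither of which enters the algebra above.
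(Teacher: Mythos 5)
Your proof is correct and follows essentially the same route as the paper: both arguments write $\mathscr{U}_i(a_i,a_{-i})$ as the marginal contribution $\phi(a_i,a_{-i})-\phi(\emptyset,a_{-i})$ and then let the $\phi(\emptyset,a_{-i})$ terms cancel in the difference. Your explicit remark that $\phi(\emptyset,a_{-i})$ is independent of $a_i$ (because the null action removes $\mathscr{P}_i$ from every set $\{\mathscr{P}\}_r$) is a small but worthwhile addition that the paper leaves implicit.
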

\begin{proof}
Given a joint action $a_{-i}$, the difference in potential $\phi$ when player $\mathscr{P}_i$ deviates its action from $a_i'$ to $a_i''$ is:
\begin{align*}
       & \phi(a_i', a_{-i}) - \phi(a_i'', a_{-i}) \\
       &= (\phi(a_i', a_{-i}) - \phi(\emptyset, a_{-i})) - (\phi(a_i'', a_{-i}) - \phi(\emptyset, a_{-i})) \\
       &=  \mathscr{U}_i(a_i', a_{-i}) - \mathscr{U}_i(a_i'', a_{-i})
\end{align*}
Thus game $G$ satisfies Eq. (\ref{eq:potentialgames}) and it is a potential game.
\end{proof}

In this paper, the optimal equilibrium $a_{\mathscr{P}}^\star$ is acquired using the Max-Logit algorithm~\cite{SWL11}. Before any game starts, each player computes its success probability $p_r(\mathscr{P}_i), \forall r \in \tilde{\mathscr{A}}$ using Eq. (\ref{eq:successprobability}), and updates the estimated task worth $w_r, \forall r \in \tilde{\mathscr{A}}$ using Eq. (\ref{eq:taskworth}). Then, necessary information are communicated and synchronized as mentioned in Remark~\ref{rem:data_synchronization}.

Algorithm~\ref{alg:optimizer} presents details to acquire $a_{\mathscr{P}}^\star$ in a distributed manner using Max-Logit. In particular, the initial joint action $a_\mathscr{P}(1)$ (\textit{line} 1) is initialized as follows: for resilience games, $a_i(1)$ is set as the current task $r_c(\mathscr{P}_i)$ of player $\mathscr{P}_i$, while for no-idling games, $a_i(1)$ is randomly picked from $\tilde{\mathscr{A}}$; then, $a_{\mathscr{P}}(1)$ is determined via synchronization with all other players. 

Once $a_{\mathscr{P}}^\star$ is obtained using Algorithm~\ref{alg:optimizer}, the new task $r$ for player $\mathscr{P}_i$ is set as its action $a_i^\star$ in the equilibrium $a_{\mathscr{P}}^\star$.

\vspace{3pt}
\textbf{Post-game Coordination}: If multiple robots (including both existing robots and incoming players) are assigned to the same task $r$, it becomes imperative to utilize some strategy to ensure their safety and efficiency when searching together. Let $n_{\max} \in \mathbb{N}^+$ be the maximum number of robots allowed to work in the same task at the same time. In this regard, task $r$ is evenly partitioned into $n_{\max}$ sub-regions, where each sub-region is only allowed one robot at a time.

{
\setlength{\textfloatsep}{-5pt}
\begin{algorithm}[t!]
\SetKwInOut{Input}{input}\SetKwInOut{Output}{output}
\SetKwComment{Comment}{}{}
\DontPrintSemicolon
\caption{The Optimizer for $\mathscr{P}_i$ using Max-Logit~\cite{SWL11}}\label{alg:optimizer}
\Input{$w_r$, $p_r(\mathscr{P}_i), \forall r \in \tilde{\mathscr{A}}$, and $\mathscr{P}_i \in \mathscr{P}$} 
\Output{$a_{\mathscr{P}}^\star$}
{Initialize: set initial joint action $a_{\mathscr{P}}(1) \in \mathscr{A}_{\mathscr{P}}$ using a fixed rule, set learning parameter $\tau$}\;
\For{$k \leftarrow 1$ \KwTo $L$}{
    {Determine randomly if $\mathscr{P}_i$ is the single player among others that may alter its action $a_i(k)$;} \;
    \uIf{$\mathscr{P}_i$ is not selected}{
		{Repeat $a_i(k+1) = a_i(k)$;}\;
	    {Continue;}\;
	}
	\Else{
	    {Select an alternative action $\hat{a_i}(k) \in \mathscr{A}_i$ with equal probability;}\;
	    {Compute alternative utility $\mathscr{U}_i(\hat{a_i}(k), a_{-i}(k))$ using Eq.(\ref{eq:utility});}\;
	    {Compute $\psi(\hat{a_i}) = e^{\mathscr{U}_i(\hat{a_i}, a_{-i})/\tau}$;}\;
	    {Compute $\mu = \psi(\hat{a_i})/{\max\{\psi(a_i), \psi(\hat{a_i})\}}$;}\;
	    {Update $a_i(k+1)$ as follows:
	    \vspace{-5pt}
        \[a_i(k+1) = \begin{cases}
                      \hat{a_i}(m),& \text{with probability } \mu \\
                      a_i(m),& \text{with probability } 1 - \mu.
                    \end{cases}\]}\;
        \vspace{-6pt}
        {Inform $a_i(k+1)$ to others $\mathscr{P}_j, j \in \mathscr{P}\setminus \mathscr{P}_i$;}\;
	}
}
    Return $a_{\mathscr{P}}^\star = a(k+1)$.
\end{algorithm}
}

Consider some non-player robot $v_\ell \in \{\bar{\mathscr{P}}\}_r$ that is currently working in task $r$. It continues as usual but its task is restricted to the sub-region determined by its current location. This produces $n_0 \in \mathbb{N}$ incomplete sub-regions that are instantly available to the incoming players. Now consider a player $\mathscr{P}_i \in \{\mathscr{P}\}_r^\star \triangleq \{\mathscr{P}_i \in \mathscr{P}: a_i^\star = r\}$ that is also assigned to task $r$. It selects the sub-region by following the steps below. First, it computes its rank in $\{\mathscr{P}\}_r^\star$ based on its success probability. If it ranks in the top $n_0$ and all other players ranked above it have selected their new sub-region, then it selects the new available sub-region for itself that minimizes its traveling distance. However, if it ranks after $n_0$, it stays temporarily idle but can later be reactivated to replace any robot in task $r$ should it fail. Once $\mathscr{P}_i$ finds a new sub-region, its centroid is set as the movement goal. As described previously, $\mathscr{P}_i$ resumes to search its new sub-region using the $\epsilon^\star$ algorithm upon its arrival, and its supervisor $H$ transitions to the state $WK$ accordingly.

\vspace{-3pt}
\subsection{\textbf{Computational Complexity of the Optimizer}}
As described above, once the \textit{Optimizer} triggers a game involving the player set $\mathscr{P}$ and the action set $\tilde{\mathscr{A}}$, the joint action $a_{\mathscr{P}}$ is first initialized locally and then synchronized with other players. This process takes $O(|\mathscr{P}|)$ complexity. 

Thereafter, the game follows Algorithm~\ref{alg:optimizer} in a distributed manner, which operates in a loop for a user-defined $L \in \mathbb{N}^+$ computation cycles. At each cycle, one player is randomly selected and is allowed to probabilistically alter its action, which takes $O(|\mathscr{P}|)$ to find out if $\mathscr{P}_i$ is selected. If not, its action is repeated, which takes $O(1)$ complexity; otherwise, $\mathscr{P}_i$ first randomly chooses an alternative action $\hat{a_i} \in \tilde{\mathscr{A}}$ with equal probability, which is $O(|\tilde{\mathscr{A}|})$. Then the associated utility $\mathscr{U}_i(\hat{a_i}, a_{-i})$ is computed using Eq. (\ref{eq:utility}), which takes $O(|\mathscr{P}|)$ complexity. Thereafter, $\mathscr{P}_i$ uses $\hat{a_i}$ to update its action $a_i$ in a probabilistic manner, which has $O(1)$. At the end of each cycle, the updated action $a_i$ is transmitted to other players, which requires $O(|\mathscr{P}|)$ complexity.

Therefore, in the worst case where $\mathscr{P}_i$ is selected in every cycle, the total complexity becomes $O(|\mathscr{P}| + L \cdot (3|\mathscr{P}| + |\tilde{\mathscr{A}}|))$. In comparison, for a centralized optimization algorithm, it must search over $|\tilde{\mathscr{A}}|^{|\mathscr{P}|}$ possible joint actions, which grows significantly faster as $|\mathscr{A}|$ and $|\mathscr{P}|$ increase.

\vspace{-6pt}
\subsection{\textbf{Connection between Local Games and Team Potential}}\label{sec:connection}
As discussed earlier, in both resilience games and no-idling games, the potential function $\phi$ is optimized for the set of players, which form a subset of the robot team. Now, we show that the increase in $\phi$ will directly improve the performance of the whole team.

To illustrate this, let $\Phi(a)$ denote the total team potential that defines the total expected worth achievable by the team, where $a = (a_{\mathscr{P}}, a_{\bar{\mathscr{P}}})$ is the joint action of the team including players $\mathscr{P}$ and non-players $\bar{\mathscr{P}}$. Note for the non-players, the action $a_{\bar{\mathscr{P}}}$ simply represent their current tasks. Since the players and non-players are mixed and distributed over different tasks, the total team potential $\Phi(a)$ is defined as:

\vspace{-3pt}
\begin{equation}\label{eq:cap_phi}
\Phi(a) = \sum\limits_{r=1}^M \tilde{w_r} \Big( 1 - \prod_{v_\ell \in \{V\}_r} [1-p_r(v_\ell)] \Big),
\end{equation}
where $\{V\}_r = \{\mathscr{P}\}_r \cup \{\bar{\mathscr{P}}\}_r$ is the set of all robots that are assigned to task $r$ in the joint action $a$, and the term within the parentheses on the right hand side computes the joint success probability to complete task $r$ by all of its assigned robots.

As the players reach the optimal equilibrium, the joint action becomes $a^\star = (a_{\mathscr{P}}^\star, a_{\bar{\mathscr{P}}})$ and the team potential becomes $\Phi(a^\star)$.

\vspace{3pt}
\begin{thm}\label{thm:resilience}
The optimal equilibrium $a^\star$ increases the total team potential $\Phi(a)$, i.e., $\Phi(a^\star) \geq \Phi(a)$.
\end{thm}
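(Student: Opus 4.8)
The plan is to relate the team potential $\Phi$ to the game potential $\phi$ by decomposing the sum over all $M$ tasks into the tasks that appear in the game's action set $\tilde{\mathscr{A}}$ and those that do not, and to show that $\Phi$ differs from $\phi$ only by terms that are unaffected by the players' joint action. First I would fix the non-player action $a_{\bar{\mathscr{P}}}$ (which by definition never changes during a game) and examine each task $r$. For $r \notin \tilde{\mathscr{A}}$, no player can select $r$, so $\{V\}_r = \{\bar{\mathscr{P}}\}_r$ regardless of $a_{\mathscr{P}}$; hence the contribution $\tilde{w_r}(1 - \prod_{v_\ell \in \{\bar{\mathscr{P}}\}_r}[1-p_r(v_\ell)])$ is a constant with respect to $a_{\mathscr{P}}$.

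For $r \in \tilde{\mathscr{A}}$, I would split the product over $\{V\}_r = \{\mathscr{P}\}_r \cup \{\bar{\mathscr{P}}\}_r$ into the player part and the non-player part:
\begin{equation*}
1 - \prod_{v_\ell \in \{V\}_r}[1-p_r(v_\ell)] = 1 - \big(1 - q(r)\big)\prod_{\mathscr{P}_i \in \{\mathscr{P}\}_r}[1 - p_r(\mathscr{P}_i)],
\end{equation*}
where $q(r) = 1 - \prod_{v_\ell \in \{\bar{\mathscr{P}}\}_r}[1-p_r(v_\ell)]$ as defined before Eq.~(\ref{eq:worth}). Multiplying by $\tilde{w_r}$ and using $w_r = \tilde{w_r}(1-q(r))$ from Eq.~(\ref{eq:worth}), the $r$-th summand becomes $\tilde{w_r}\,q(r) + w_r\big(1 - \prod_{\mathscr{P}_i \in \{\mathscr{P}\}_r}[1-p_r(\mathscr{P}_i)]\big)$. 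The first piece $\tilde{w_r}\,q(r)$ again does not depend on $a_{\mathscr{P}}$, while the second piece is precisely the $r$-th term of $\phi(a_{\mathscr{P}})$ in Eq.~(\ref{eq:phi}). Summing over all $r$, I obtain $\Phi(a) = \phi(a_{\mathscr{P}}) + C$, where $C = \sum_{r \notin \tilde{\mathscr{A}}}\tilde{w_r}(1 - \prod_{v_\ell \in \{\bar{\mathscr{P}}\}_r}[1-p_r(v_\ell)]) + \sum_{r \in \tilde{\mathscr{A}}}\tilde{w_r}\,q(r)$ is a constant independent of the players' joint action.

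Given this identity, the theorem follows immediately: since the Max-Logit algorithm in Algorithm~\ref{alg:optimizer} converges to the optimal equilibrium $a_{\mathscr{P}}^\star$ that maximizes the game potential $\phi$ (a property of potential games established via Proposition~\ref{theorem:potentialgames}), and since the initial joint action $a_{\mathscr{P}}(1)$ is itself a feasible point (for resilience games it is exactly the pre-game allocation $a_{\mathscr{P}}$), we have $\phi(a_{\mathscr{P}}^\star) \geq \phi(a_{\mathscr{P}})$, and adding $C$ to both sides gives $\Phi(a^\star) \geq \Phi(a)$. The main obstacle — really the only subtle point — is the bookkeeping in the task-by-task decomposition: one must be careful that the sets $\{\bar{\mathscr{P}}\}_r$ and the worths $\tilde{w_r}$ are genuinely frozen during a game (which holds by Remark~\ref{rem:data_synchronization} and the definition of the player/non-player split), and that for resilience games the baseline $a_{\mathscr{P}}$ coincides with the algorithm's initialization so that the inequality is against the actual pre-failure configuration rather than an arbitrary point. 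For no-idling games, where initialization is random, one instead argues that the near-finishing players' prior (idle or soon-idle) configuration corresponds to the null-action baseline, so the same conclusion holds.
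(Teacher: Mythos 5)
Your proposal is correct and follows essentially the same route as the paper's own proof: both decompose the team potential as $\Phi(a) = \phi(a_{\mathscr{P}}) + C$, where $C$ collects the contributions of non-players and of tasks outside $\tilde{\mathscr{A}}$ (terms unaffected by the game), and then invoke $\phi(a_{\mathscr{P}}^\star) \geq \phi(a_{\mathscr{P}})$ for the optimal equilibrium. The only cosmetic difference is that the paper keeps an explicit term $\sum_{r \notin \tilde{\mathscr{A}}} w_r\, p(r)$ for players finishing the small leftover of their current tasks, whereas you absorb that case into the constant; both treatments render the term invariant under the game, so the conclusion is identical.
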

\begin{proof}
First, let us show that the team potential $\Phi(a)$ is separable by the worth created by the players (i.e., $\mathscr{P}$) and the rest of the robots (i.e., $\bar{\mathscr{P}}$). Then we will investigate the change of $\Phi$ due to task reallocation. Now $\Phi(a)$ can be decomposed as follows.
\vspace{-3pt}
\begin{align}\label{eq:team_phi}
  \Phi(a) &= \begin{aligned}[t]
      & \sum\limits_{r=1}^M \tilde{w_r} \Big( 1 - \prod_{\mathscr{P}_i \in \{\mathscr{P}\}_r} [1-p_r(\mathscr{P}_i)] \cdot \prod_{v_\ell \in \{\bar{\mathscr{P}}\}_r} [1-p_r(v_\ell)] \Big)
      \end{aligned} \nonumber\\
      &= \begin{aligned}[t]
      &\sum\limits_{r=1}^M \tilde{w_r} \Big(1- [1 - p(r)]\cdot [1-q(r)] \Big)
      \end{aligned}
\end{align}
where $p(r) := 1 - \prod_{\mathscr{P}_i \in \{\mathscr{P}\}_r} [1-p_r(\mathscr{P}_i)]$ and $q(r) := 1 - \prod_{v_\ell \in \{\bar{\mathscr{P}}\}_r} [1-p_r(v_\ell)]$ are used to denote the joint success probability of the players and the rest of the robots that are assigned to task $r$ in the joint action $a$, respectively.

Then we can further break down $\Phi(a)$ as follows.
\vspace{-6pt}
\begin{align}
  \Phi(a) &= \begin{aligned}[t]
      &\sum\limits_{r=1}^M \tilde{w_r} \bigg( 1 - [1-q(r)] + p(r)[1-q(r)] \bigg)
      \end{aligned} \nonumber\\
      &= \begin{aligned}[t]
      &\sum\limits_{r=1}^M \tilde{w_r} \cdot [1-q(r)] \cdot p(r) + \sum\limits_{r=1}^M \tilde{w_r} \cdot q(r)
      \end{aligned} \nonumber\\
      &= \begin{aligned}[t]
      &\sum\limits_{r=1}^M w_r \cdot p(r) + \sum\limits_{r=1}^M \tilde{w_r} \cdot q(r)
      \end{aligned} \nonumber\\
      &= \begin{aligned}[t]
      &\bigg( \sum_{r \in \tilde{\mathscr{A}}} w_r \cdot p(r) + \sum_{r \notin \tilde{\mathscr{A}}} w_r \cdot p(r) \bigg) + \sum\limits_{r=1}^M \tilde{w_r} \cdot q(r)
      \end{aligned} \nonumber\\
      &= \begin{aligned}[t]
      &\phi(a_{\mathscr{P}}) + \sum_{r \notin \tilde{\mathscr{A}}} w_r \cdot p(r) + \sum\limits_{r=1}^M \tilde{w_r} \cdot q(r)
      \end{aligned}
\end{align}
where the second term in the last step is the worth generated by the players (if any) that would like to finish the small unfinished part in their current tasks before being reallocated to new tasks, while the third term indicates the worth generated by the non-player robots $\bar{\mathscr{P}}$. The values of both these terms do not change by games. Since $\phi(a_{\mathscr{P}}^\star) \geq \phi(a_{\mathscr{P}})$, $\forall a_{\mathscr{P}} \in \mathscr{A}_{\mathscr{P}}$, we have $\Phi(a^\star) \geq \Phi(a)$.
\end{proof}

\textbf{Game Performance Metrics:} The quality of the task reallocation decision (i.e., $a_{\mathscr{P}}^\star$ for the players and $a^\star$ for the team) can be evaluated by the worth gain. Note that in any task reallocation,  there is a tradeoff  between  whether  the  robot  should continue  with  its  current  task  or  reallocate  to  a  new task. Thus, where a higher gain implies early detection of targets. Specifically, at player-level, the \textit{Gain of Players} ($G_P$) is defined as

\begin{equation}\label{eq:GP}
G_P = \frac{\phi(a_{\mathscr{P}}^\star)-\phi(a_{\mathscr{P}})}{\sum_{r \in \tilde{\mathscr{A}}} w_r} \in [0, 1].
\end{equation}

Similarly, at team-level, the \textit{Gain of Team} ($G_T$) is

\begin{equation}\label{eq:GT}
G_T = \frac{\Phi(a^\star)-\Phi(a)}{\sum_{r = 1}^M \tilde{w_r}} \in [0, 1].
\end{equation}

Note that since $\phi(a_{\mathscr{P}}^\star) \geq \phi(a_{\mathscr{P}})$ and $\Phi(a^\star) \geq \Phi(a)$, both $G_p$ and $G_T$ are non-negative, which implies that the outcome of a game results in the gain of worth not only for the players but also for the whole team. Both $G_P$ and $G_T$ will be quantitatively examined in Section~\ref{sec:results}.

\vspace{-3pt}
\subsection{\textbf{Complete Coverage under Failures}}\label{sec:complete_coverage}
The success of finding all hidden targets relies on the complete coverage of the whole area $\mathcal{R}$. Due to the completeness of the underlying single-robot coverage algorithm~\cite{SG17}, each task can be fully covered by the assigned robot in finite time if it stays alive. Now, let us examine coverage under failures.
\begin{thm}\label{thm:coverage}
The CARE algorithm guarantees complete coverage in finite time as long as one robot is alive.
\end{thm}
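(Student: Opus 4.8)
\textbf{Proof proposal for Theorem~\ref{thm:coverage}.}

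The plan is to argue by combining three facts: (i) the single-robot $\epsilon^\star$ algorithm achieves complete coverage of any assigned task in finite time (Theorem stated in~\cite{SG17}); (ii) every robot failure triggers a resilience game that reallocates a live neighbor to the failed robot's task (or, by the Remark, an already-present live robot takes over); and (iii) the number of failure events is finite because the robot team is finite and each robot fails at most once. I would first set up notation: at any time, let the tasks of failed robots that are not currently assigned to any live robot be the ``orphaned'' tasks, and track how the orphaned set evolves as events occur.

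First I would observe that, by the completeness of $\epsilon^\star$ and the finite partition $\mathcal{T} = \bigcup_{r=1}^M \mathcal{R}_r$, it suffices to show that every task $\mathcal{R}_r$ is, from some finite time onward, assigned to a robot that remains alive until the task is complete. Here I would use the hypothesis that at least one robot stays alive: call it $v_{\star}$. Next I would trace what happens when a robot $v_f$ fails. If another live robot is already working in $r_c(v_f)$, that task is not orphaned (Remark after the resilience-game definition). Otherwise a resilience game is triggered among the $\kappa_2$ nearest neighbors, and since the DES of each live neighbor transitions to state $RG$ and the Optimizer returns a joint action $a_{\mathscr{P}}^\star$ over an action set that includes $r_c(v_f)$, at least one live robot can be (and, provided the worth criterion makes $r_c(v_f)$ attractive, will be) assigned to cover $r_c(v_f)$. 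The subtle point is that the game's optimization might \emph{not} assign anyone to the orphaned task if its remaining worth $w_{r_c(v_f)}$ is low; to close this gap I would invoke the completeness requirement directly — since CARE must guarantee $CR = 1$, the algorithm must be specified so that an orphaned task is never permanently abandoned. Concretely, I would argue that event $e_6$ (whole-area coverage) fires only when $\sum_{r=1}^M n_U(r) = 0$; as long as some task has unexplored cells, some live robot is not in state $SP$, and the DES logic keeps triggering games (via $e_1$, $e_2$, $e_5$) until that task is picked up. In the extreme, once all but one robot has failed, the sole survivor $v_{\star}$ cannot enter $SP$ while unexplored cells remain, and with an empty set of competing players its resilience/no-idling games can only assign it to whichever task still has work; it then visits the remaining tasks one by one, completing each by $\epsilon^\star$ in finite time.

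The key steps, in order, would be: (1) state the finite-event argument — at most $N-1$ failures can occur, each in finite time, so after some finite $T_0$ no more failures happen and the live set stabilizes to a nonempty set $V'$; (2) after $T_0$, there are no more resilience games, only no-idling reallocations among robots in $V'$, and no robot in $V'$ can reach $SP$ while $\sum_r n_U(r) > 0$; (3) show that the no-idling mechanism cannot deadlock — an idling robot always gets reassigned (event $e_3$) to a task with remaining work whenever one exists, because the action set $\tilde{\mathscr{A}}$ of a no-idling game contains the incomplete tasks; (4) conclude that every task with unexplored cells is eventually worked on by a live robot, which by $\epsilon^\star$'s completeness reduces its unexplored count to zero in finite time; (5) since there are finitely many tasks, $\sum_r n_U(r)$ reaches zero in finite time, event $e_6$ fires, and by the Complete Coverage definition $\mathcal{R}^a \subseteq \bigcup_\ell \bigcup_k \epsilon_\ell(k)$.

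The main obstacle I anticipate is step (3) together with the worth-criterion gap noted above: strictly speaking, the potential-game machinery optimizes expected \emph{worth}, not coverage, so one must argue separately that the DES event structure — not the game payoff — is what forces an orphaned or unfinished task to eventually be reassigned to a live robot. I would handle this by emphasizing that the Optimizer's action set is always nonempty and contains the tasks with remaining work, and that the ``idle'' state $ID$ is not absorbing (a robot in $ID$ is repeatedly eligible to be pulled into games and, in particular, can be reactivated to replace a failed robot per the post-game coordination rule), so no live robot with reachable unfinished work can sit idle forever. A clean way to formalize this is a monotonicity/variant argument: the quantity $\sum_{r=1}^M n_U(r)$ is non-increasing and strictly decreases on any interval during which some live robot is in state $WK$, and the DES guarantees that whenever this quantity is positive at least one live robot is (eventually) in $WK$; hence it must hit zero in finite time.
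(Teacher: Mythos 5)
Your proposal is correct and follows essentially the same route as the paper's proof: both rest on the facts that game-triggering events (robot failures and task completions) are finite in number since $N$ and $M$ are finite, that $\epsilon^\star$ completes any assigned task in finite time, and that the terminal state $SP$ is reachable only when $\sum_{r=1}^M n_U(r)=0$, so a surviving robot cannot stop before complete coverage. The one point where you are more explicit than the paper is in flagging that the potential game maximizes expected worth rather than coverage, so an orphaned task need not be picked up immediately; the paper's proof leaves this implicit (it simply asserts that a live robot settles in $ID$ only ``when no incomplete task is available to it anymore''), and your monotone-variant argument on $\sum_{r=1}^M n_U(r)$ together with the observation that $ID$ is not absorbing is a cleaner way to close that step.
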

\begin{proof}
Consider a robot $v_\ell$ that is alive during the whole operation, whose supervisor $H$ starts with the state $WK$ upon robot being turned on. We show below that $v_\ell$ must reach the terminal state $SP$ in finite time, which happens if and only if $\sum_{r=1}^M n_U(r) = 0$, i.e., complete coverage.

First, as shown in Fig.~\ref{fig:TE}, any cycle between states in $H$ involves either state $NG$ or $RG$. Also, a robot can reach the states $NG$ or $RG$ due to completion of some task or failure of some robot, respectively. Now, since there are only a finite number of robots (i.e., $N$) and a finite number of tasks (i.e., $M$), each robot can visit these states only a finite number of times. Thus, $H$ cannot have any live lock. Moreover, in states $NG$ or $RG$, it takes a finite amount of time to reach an equilibrium solution using the Max-Logit algorithm. Thus, $H$ will always switch to either state $WK$ or $ID$ after games. In state $WK$, the underlying $\epsilon^\star$ algorithm is used to explore in the current assigned task $r_c(v_\ell)$ of robot $v_\ell$. As shown in~\cite{SG17}, $\epsilon^\star$ constantly reduces $n_U(r_c(v_\ell))$ until task $r_c(v_\ell)$ is completed in finite time, so $H$ can only stay in state $WK$ for finite time.

Further, in state $ID$, $H$ can either be invoked to play new games and hence move to states $NG$ or $RG$, or it can move to the state $SP$ upon complete coverage, i.e., $\sum_{r=1}^M n_U(r) = 0$. Since the former case can only happen for a finite number of times, $H$ will come back to the state $ID$ when no incomplete task is available to $v_\ell$ anymore. The same logic applies to all other active robots. Thus, all active robots including $v_\ell$ will reach state $ID$ in finite time, which implies that no incomplete tasks exist, i.e., $\sum_{r=1}^M n_U(r) = 0$. Then, they all transition to the terminal state $SP$ and the complete coverage is achieved.
\end{proof}

\section{Results and Discussion}\label{sec:results}
The CARE algorithm was validated on the high-fidelity robotic simulation platform called Player/Stage~\cite{GVH03} using a computer with $3.40$ GHZ CPU and $16$ GB RAM. The Player provides a software base whose libraries contain models of different types of robots, sensors and actuators. On the other hand, Stage is a highly configurable robot simulator.

In this section, we present the performance of the CARE algorithm in three complex obstacle-rich scenarios. The search area $\mathcal{R}$ of size $50m \times 50m$, was partitioned into a $50 \times 50$ tiling consisting of $\epsilon$-cells of size $1m \times 1m$. The $\mathcal{R}$ was partitioned into $M = 10$ tasks $\{\mathcal{R}_r: r=1, \ldots 10\}$, each of size $10m \times 25m$. Each task $r\in\{1,\ldots 10\}$ was initially assigned with one robot, and a maximum number of $n_{\max} = 4$ robots were allowed to search together in one task. Each task $r$ contained an unknown number of targets distributed randomly according to the Poisson distribution with mean $\lambda_r \in \{1,\ldots 32\}$.

A team of $N = 10$ Pioneer 2AT robots was simulated, where each robot has dimensions of $0.44m \times 0.38m \times 0.22m$, and was equipped with $16$-beam laser scanners with a detection range of $r_s = 5m$. The kinematic constraints of the robot, such as the top speed of $0.4m/s$ and the minimum turn radius of $0.04m$, were included in the simulation model. The tasking speed was set as $\omega = 0.32$cells/s. The parameters $\rho_0$ and $\rho_1$ in the battery reliability model are chosen such that each robot can finish one and two tasks with more than $0.9$ and $0.4$ remaining reliability, respectively. Specifically, based on the size of each task ($250$ cells) and the robot tasking speed, it takes $\sim 780s$ to finish an obstacle-free task. Then, using Eq. (\ref{eq:battery}):

\vspace{-3pt}
\begin{equation}
\centering
\begin{cases} 
\frac{1}{1+e^{\rho_0(780-\rho_1)}} &= 0.9 \\ 
\frac{1}{1+e^{\rho_0(2\times780-\rho_1)}} &= 0.4  
\end{cases}, \nonumber
\end{equation}
which lead to $\rho_0 \sim 3.0\times10^{-3}$ and $\rho_1 \sim 1400$s. Then, considering stochastic uncertainties in the initial battery charging conditions, these parameters are generated on different robots using Gaussian distributions, s.t., $\rho_0\sim N(3\times 10^{-3}, 7.5\times 10^{-5})$ and $\rho_1\sim N(1400, 35)$, where the standard deviation is chosen as $2.5\%$ of the corresponding mean value.

Initially, due to lack of \textit{a priori} knowledge of the environment, all $\epsilon$-cells are initialized as unexplored, and as the robot explores the environment, the obstacle and forbidden cells are discovered and updated accordingly. The game parameters are chosen as: $\kappa_1 = 6$ and $\kappa_2 = 3$, and in Max-Logit, the number of computation cycles is set as $L = 50$ and learning parameter is $\tau = 0.05$. The other parameters $\eta$ and $\gamma$ are chosen as follows. We set $\eta$ such that it corresponds to less than $4\%$ of the time to finish one task, i.e., $780 \times 4\% = 31.2s$. Thus, robots which have only $4\%$ of the task left will participate as players for no-idling games. Similarly, we set $\gamma$ such that it corresponds to over $25\%$ of the time to finish one task, i.e., $780 \times 25\% = 195s$. Thus, tasks which have still more than $25\%$ unexplored area become contested tasks. Hence, further rounding up we used $\eta = 30s$ and $\gamma = 200s$.

\subsection{\textbf{Scenario 1: No Failures but Some Robots Idle}}\label{sec:scenario1}
Fig.~\ref{fig:res1} presents the cooperative coverage of a complex islands scenario. A total number of $107$ targets were distributed randomly in the field. No failure appeared throughout the whole search, while two no-idling games were triggered to reallocate early completed robots to reduce the coverage time. Each subfigure in Fig.~\ref{fig:res1}, i.e., Fig.~\ref{fig:res1}(1)$\sim$\ref{fig:res1}(8), is comprised of a top figure showing the trajectories of robots by different colors, and a bottom figure showing the corresponding overall symbolic map of the entire search area $\mathcal{R}$, which is periodically synchronized and merged by all live robots. The different colors in the symbolic map represent the following regions: i) light green for obstacles, ii) medium green for unexplored, iii) dark green for explored with no obstacles, and iv) yellow for the forbidden region around the obstacles.

Fig.~\ref{fig:res1}$(1)$ shows that the robots started exploration and used their on-board sensing systems to explore the \textit{a priori} unknown environment. Fig.~\ref{fig:res1}(2) shows that the robots continue searching within their assigned tasks. Fig.~\ref{fig:res1}$(3)$ shows the instance when robot $v_{10}$ finished task $10$ and triggered a no-idling game $G_1$. The player set was formed as $\mathscr{P} = \{v_5, v_{10}\}$, where $v_5$ was near finishing its task. At that moment, tasks $3$, $4$, $7$ and $9$ still had a lot of area unexplored and required significant time to finish by their currently assigned robots, thus they formed the action set $\tilde{\mathscr{A}} = \{3, 4, 7, 9\}$. The optimal equilibrium of $G_1$ reassigned $v_5$ and $v_{10}$ to task $4$ and task $9$, respectively. Because task $10$ was already completed, $v_{10}$ immediately traveled to its new task $9$, while $v_5$ had to first finish the remainder of its current task $5$ before moving to task $4$. Since there was a robot $v_9$ currently working in task $9$, the post-game coordination strategy was used to further partition task $9$ into $n_{\max} = 4$ sub-regions. As observed in Fig.~\ref{fig:res1}$(4)$, $v_{10}$ selected the closest sub-region in the upper right corner and searched in parallel with $v_9$. Similar post-game coordination was performed when $v_5$ joined to search with $v_4$ in task $4$.

\begin{figure*}[!ht]
     \centering
     \vspace{-5pt}
     \subfloat[Coverage trajectories and corresponding symbolic map of the environment map using CARE]{
     \includegraphics[width=0.95\textwidth]{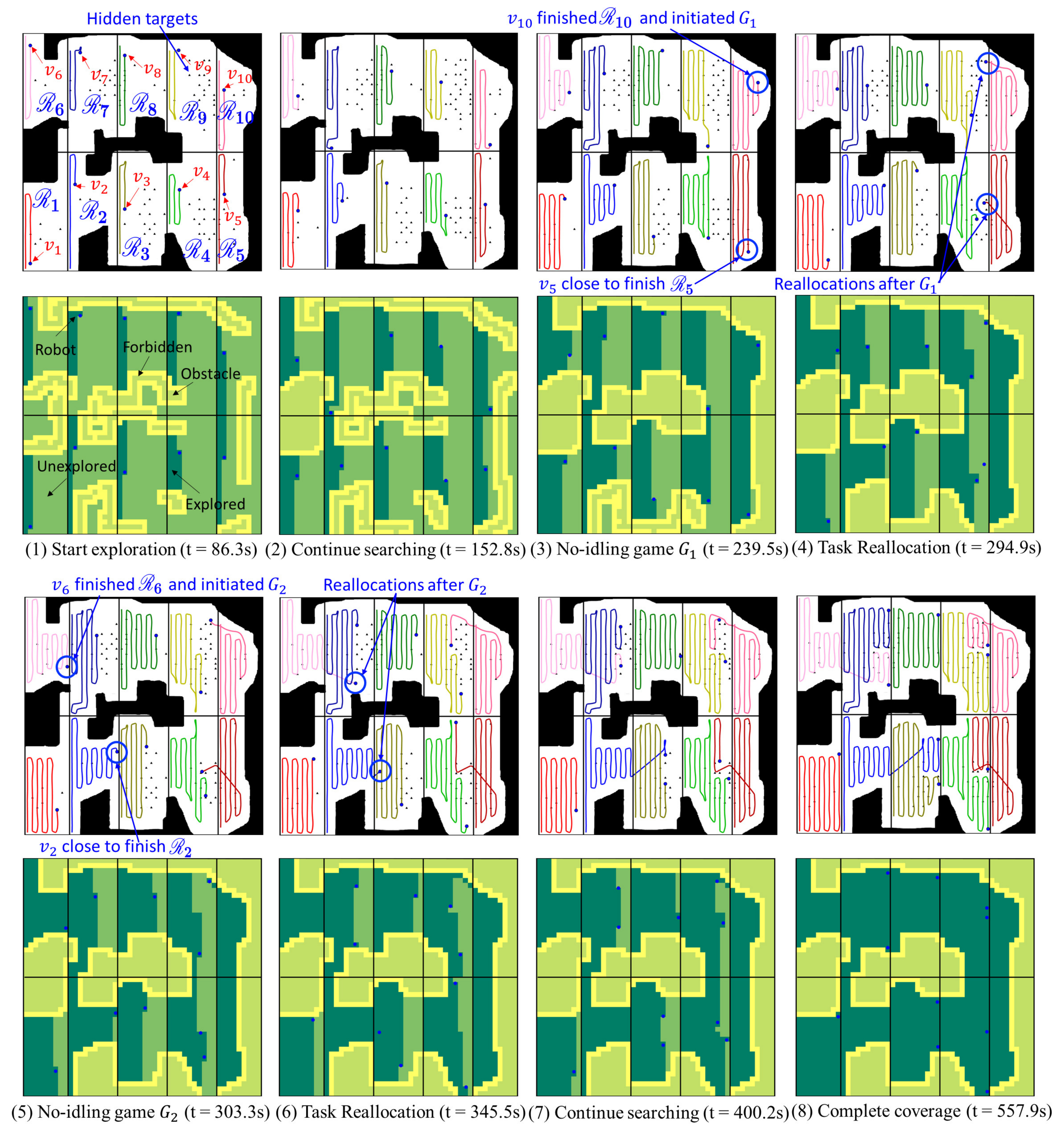}\label{fig:res1}}
     \\ \vspace{-6pt}
     \subfloat[Summary of game specifics and performances]{
     \includegraphics[width=0.9\textwidth]{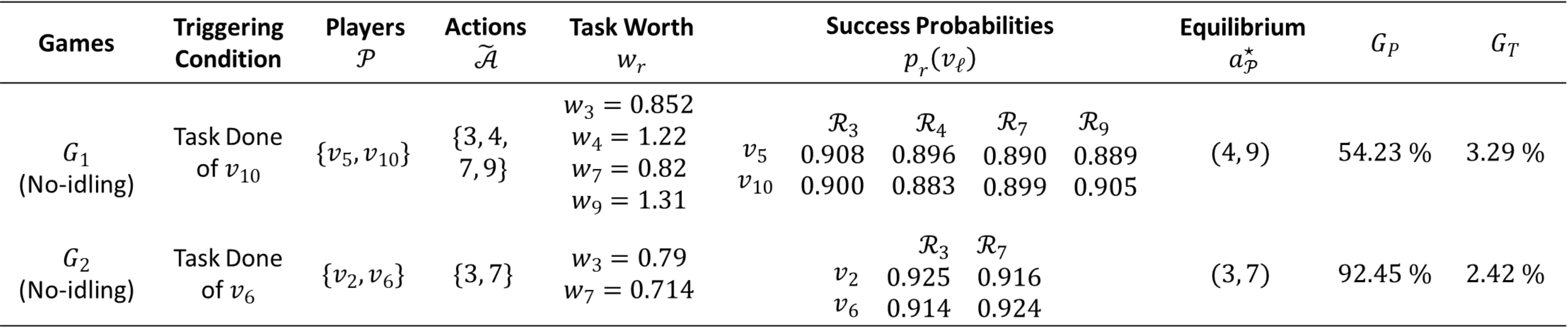}\label{fig:res1_table}}
     \caption{\textbf{Scenario $1$}: Incremental discovery and efficient coverage using CARE.}\label{fig:scenario1}
\end{figure*}

Later, another no-idling game $G_2$ was triggered when $v_6$ finished task $6$, as shown in Fig.~\ref{fig:res1}$(5)$. The player set was formed as $\mathscr{P}=\{v_2, v_6\}$, where $v_2$ was near finishing its task. Since tasks $4$ and $9$ have been assigned with extra robots after game $G_1$, the estimated time to finish these tasks dropped significantly, hence they were excluded from game $G_2$. The tasks $3$ and $7$, however, still required significant time to finish, thus they formed the action set $\tilde{\mathscr{A}} = \{3, 7\}$. The optimal equilibrium of $G_2$ in turn reassigned $v_2$ and $v_6$ to task $3$ and task $7$, respectively, as shown in Fig.~\ref{fig:res1}$(6)$. It is observed in Fig.~\ref{fig:res1}(7) that, robot $v_2$ selected the upper right sub-region of task $3$ and continued searching in parallel with robot $v_3$, while robot $v_6$ joined robot $v_7$ to search task $7$ in a similar fashion. Finally, complete coverage was achieved with all targets discovered, as shown in Fig.~\ref{fig:res1}(8).

\begin{figure*}[!ht]
     \centering
     \vspace{-25pt}
     \subfloat[Coverage trajectories and corresponding symbolic map of the environment map using CARE]{
     \includegraphics[width=0.95\textwidth]{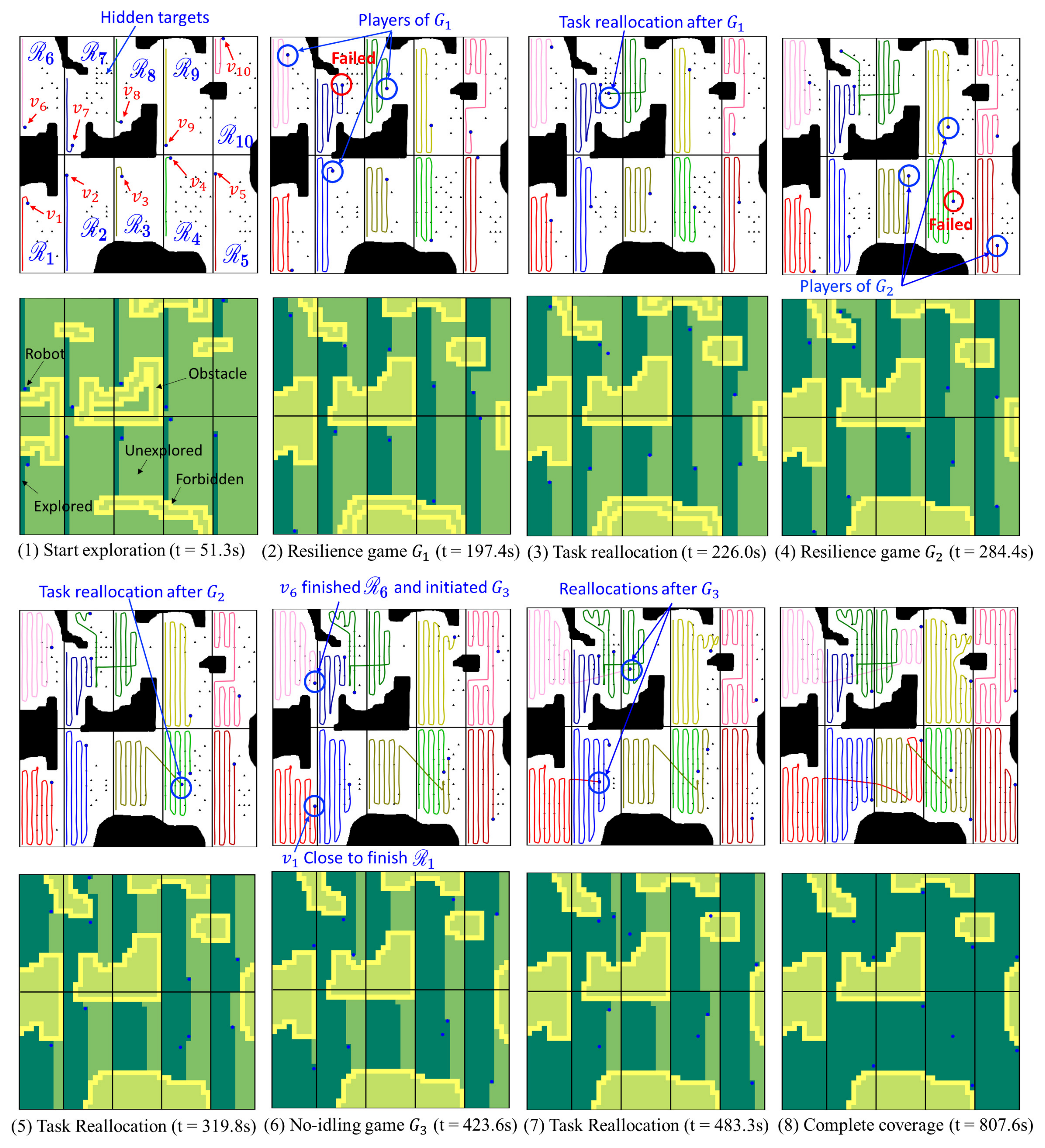}\label{fig:res2}}
     \\ \vspace{-6pt}
     \subfloat[Summary of game specifics and performances]{
     \includegraphics[width=0.9\textwidth]{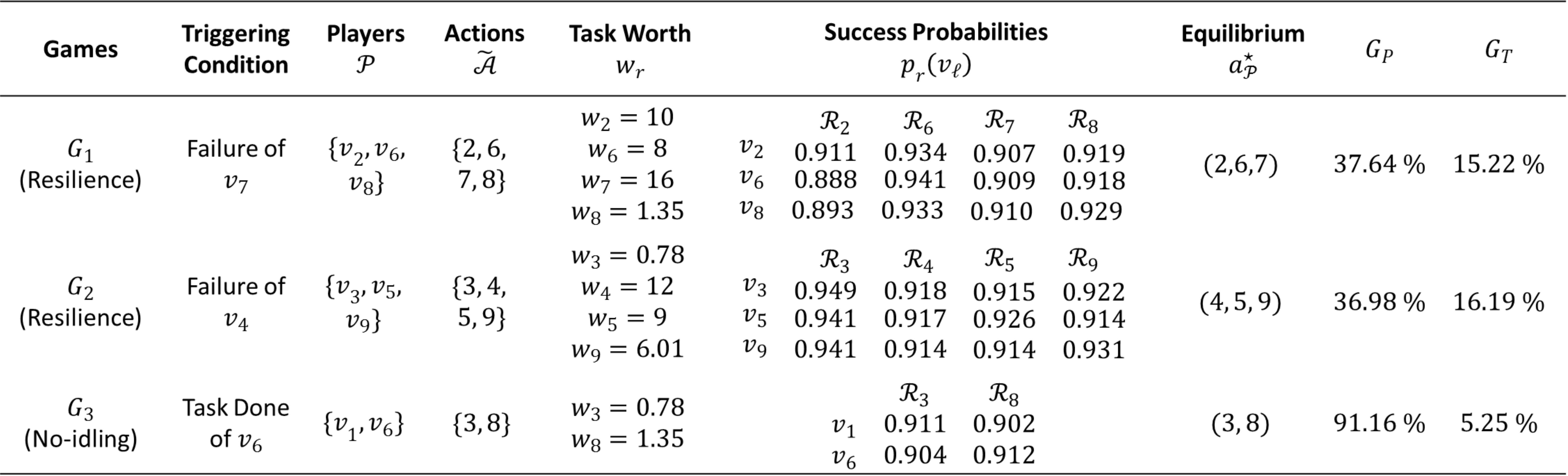}\label{fig:res2_table}}
     \caption{\textbf{Scenario $2$}: Incremental discovery and efficient coverage using CARE.}\label{fig:scenario2}\vspace{-10pt}
\end{figure*}

Fig.~\ref{fig:res1_table} summarizes the specifics and performance of the two games. As observed, the player-level worth gain $G_P$ reached $54.23\%$ and $92.45\%$ in games $G_1$ and $G_2$, respectively, which means that after task reallocations, the idling robots can expect a higher number of targets from the remaining tasks. At the team-level, $G_T$ is $3.29\%$ for $G_1$ and $2.42\%$ for $G_2$, thus the whole team also benefits from the task reallocations.

\begin{figure*}[t!]
     \centering
     \vspace{-15pt}
     \subfloat[CARE]{
     \includegraphics[width=0.24\textwidth]{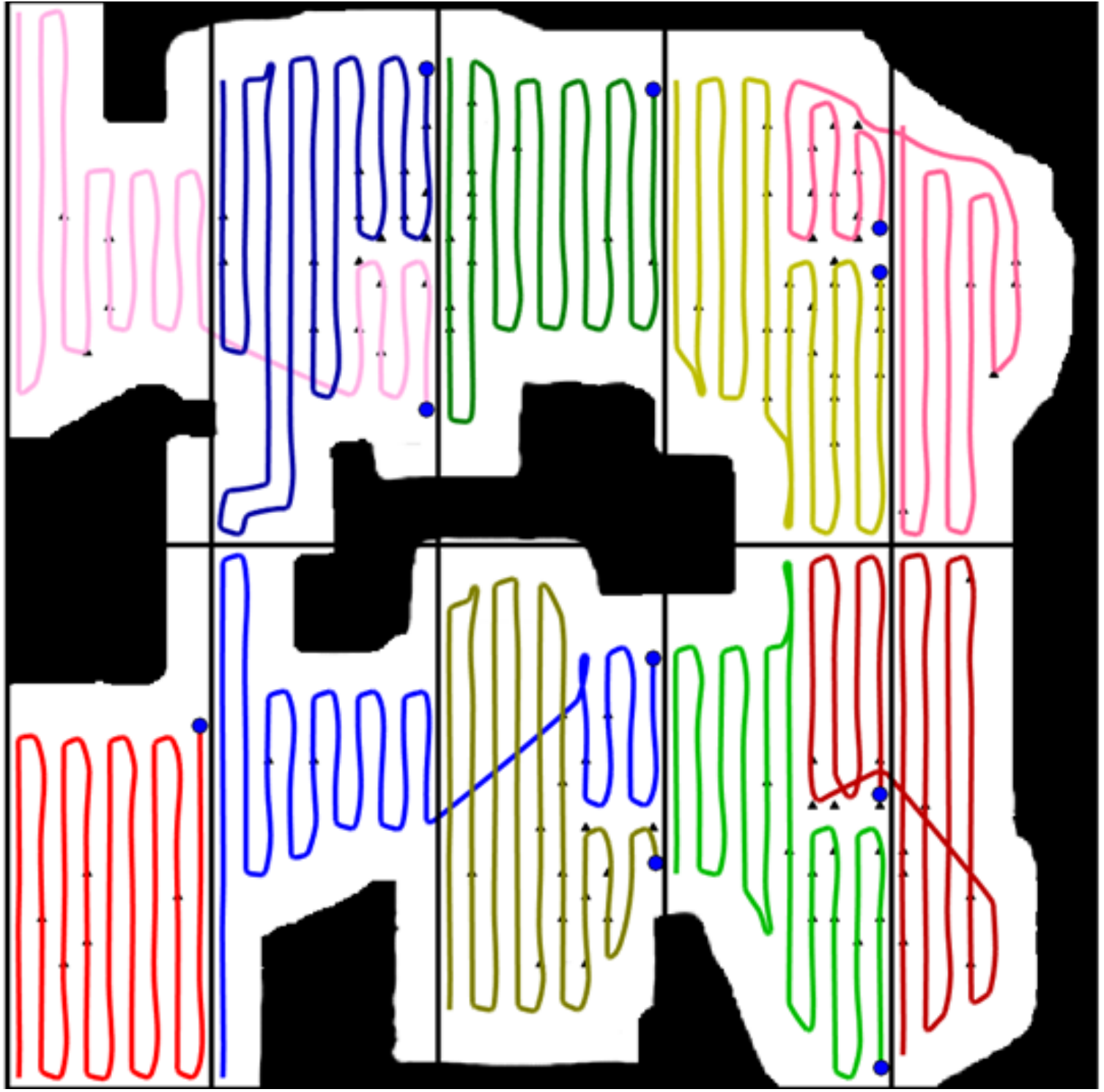}\label{fig:res1_others_care}}
     \subfloat[Non-cooperative Coverage]{
     \includegraphics[width=0.24\textwidth]{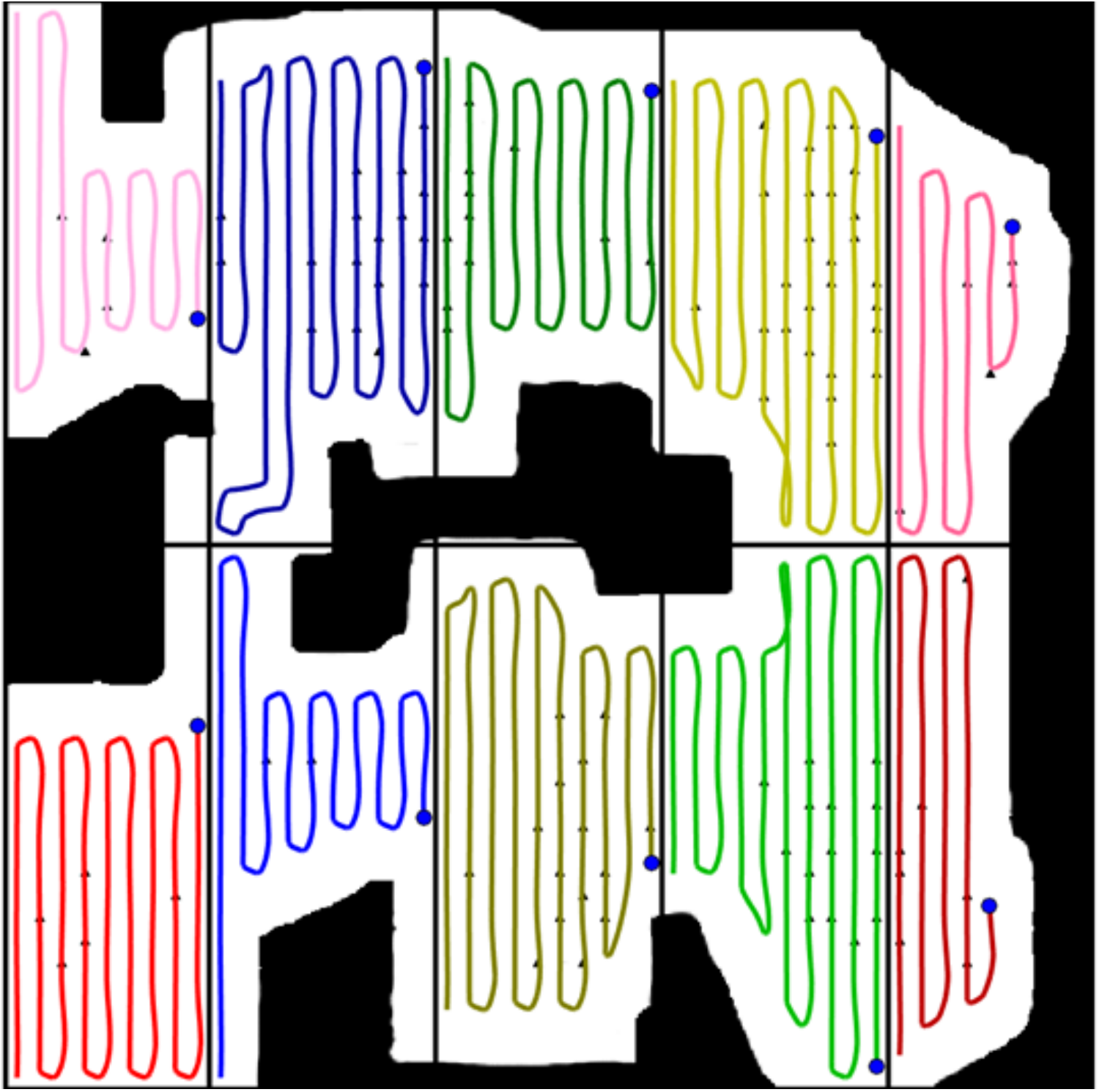}\label{fig:res1_others_nocoop}}
     \subfloat[First-responder Coverage]{
     \includegraphics[width=0.24\textwidth]{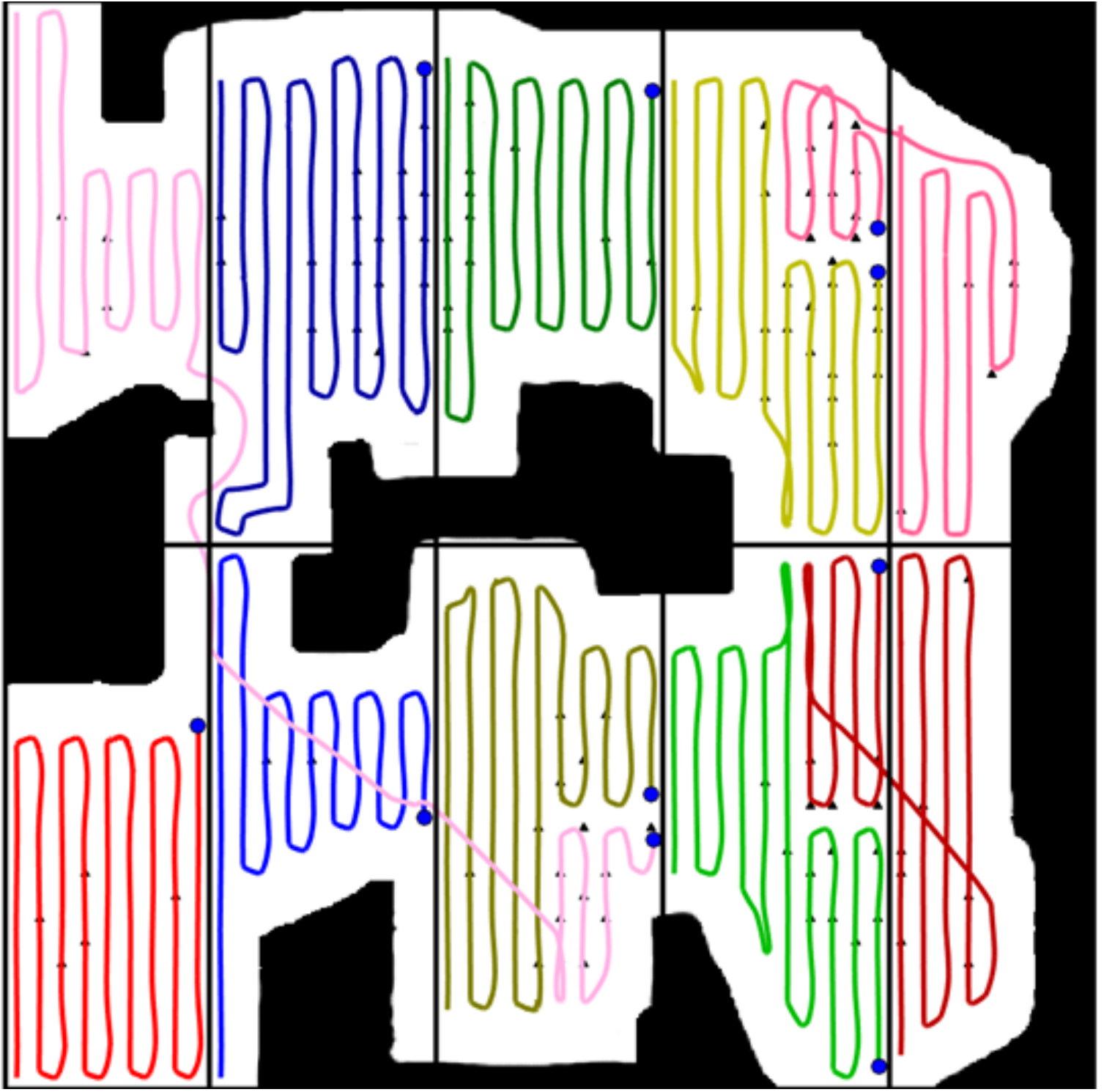}\label{fig:res1_others_first}}
     \subfloat[Brick and Mortar]{
     \includegraphics[width=0.24\textwidth]{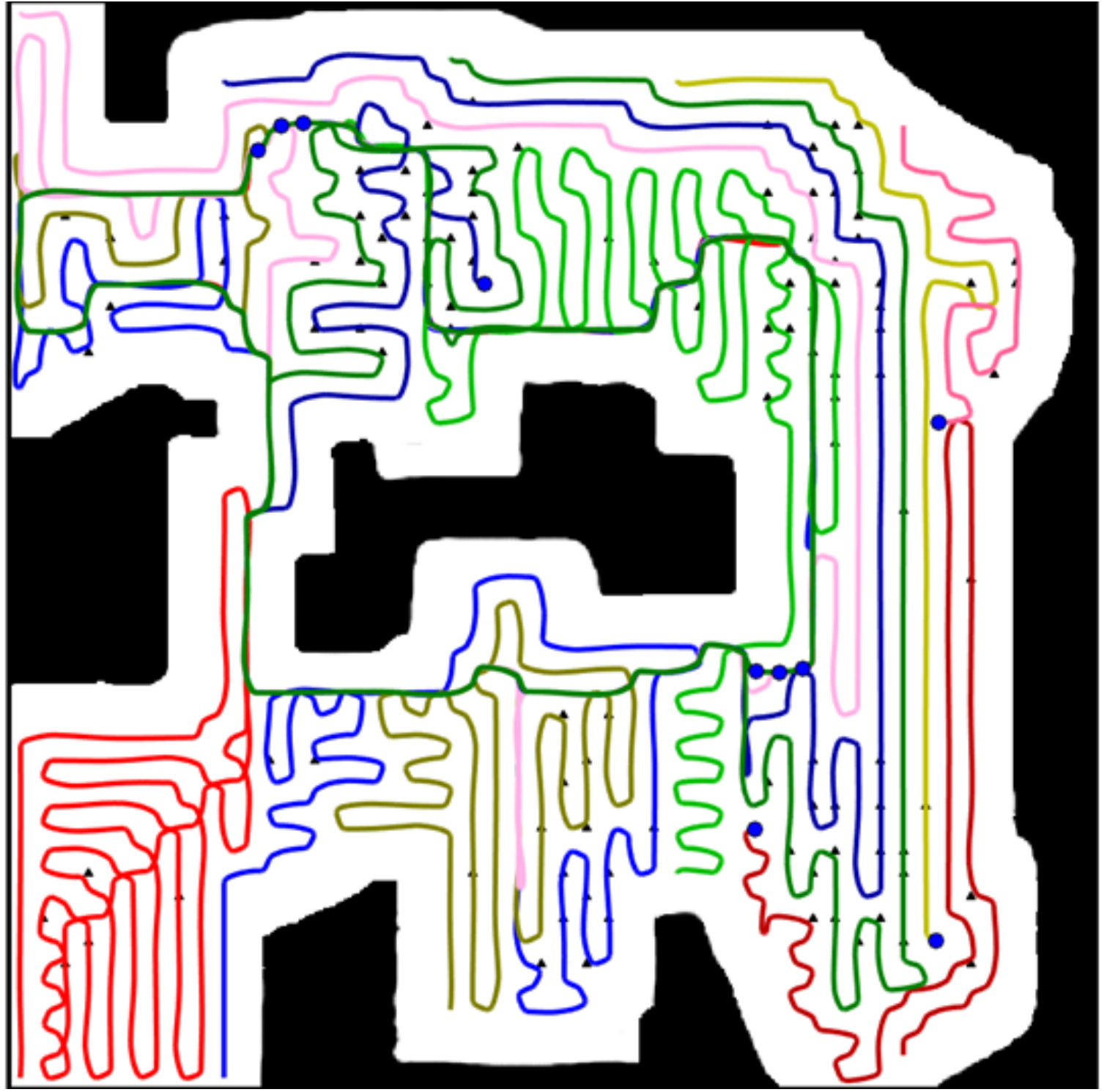}\label{fig:res1_others_bm}}
     \caption{\textbf{Scenario $1$}: Comparison of coverage trajectories using different online multi-robot coverage methods}
      \label{fig:res1_others} \vspace{-8pt}
\end{figure*}

\begin{figure*}[t!]
     \centering
     \subfloat[Coverage Time ($CT$)]{
     \includegraphics[width=0.28\textwidth]{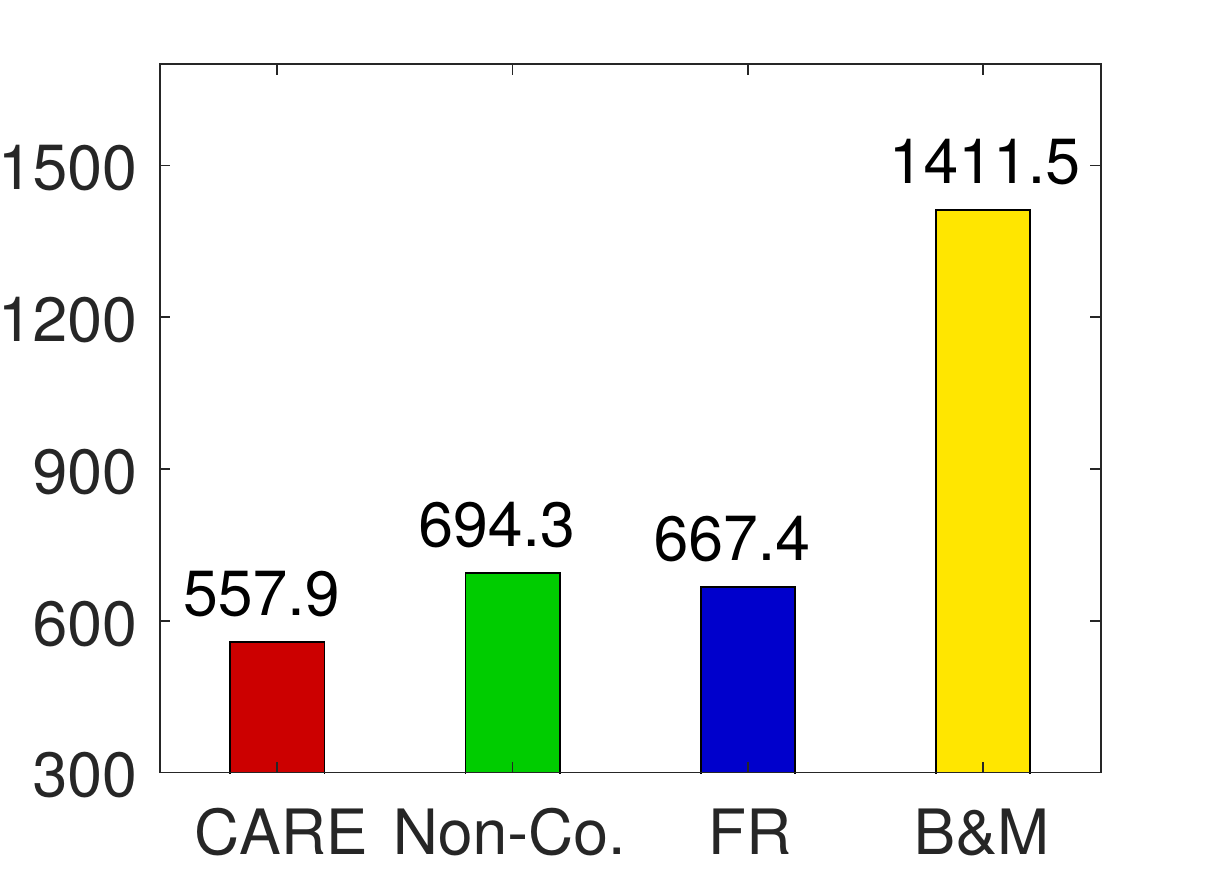}\label{fig:CT_1}}
     \subfloat[Coverage Ratio ($CR$)]{
     \includegraphics[width=0.28\textwidth]{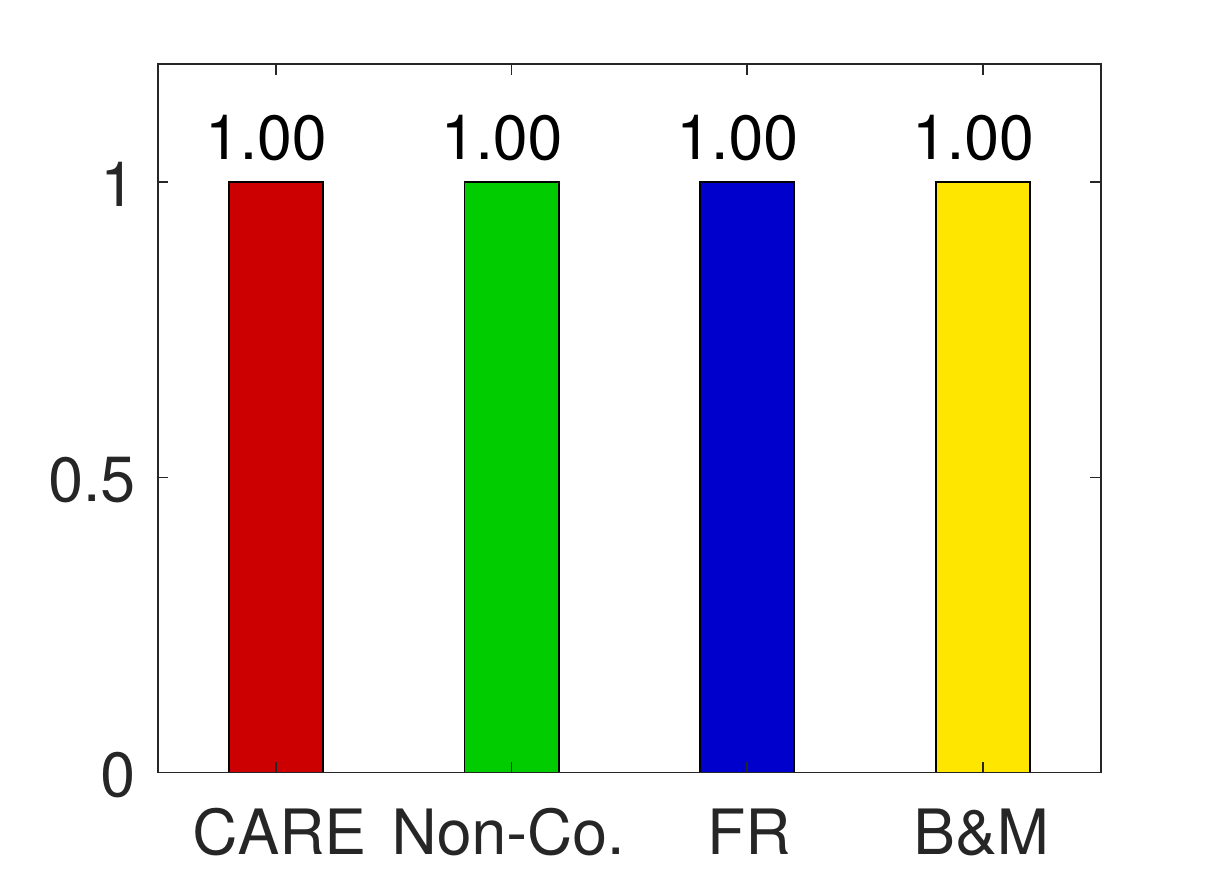}\label{fig:CR_1}}
     \subfloat[Remaining Reliability ($RR$)]{
     \includegraphics[width=0.28\textwidth]{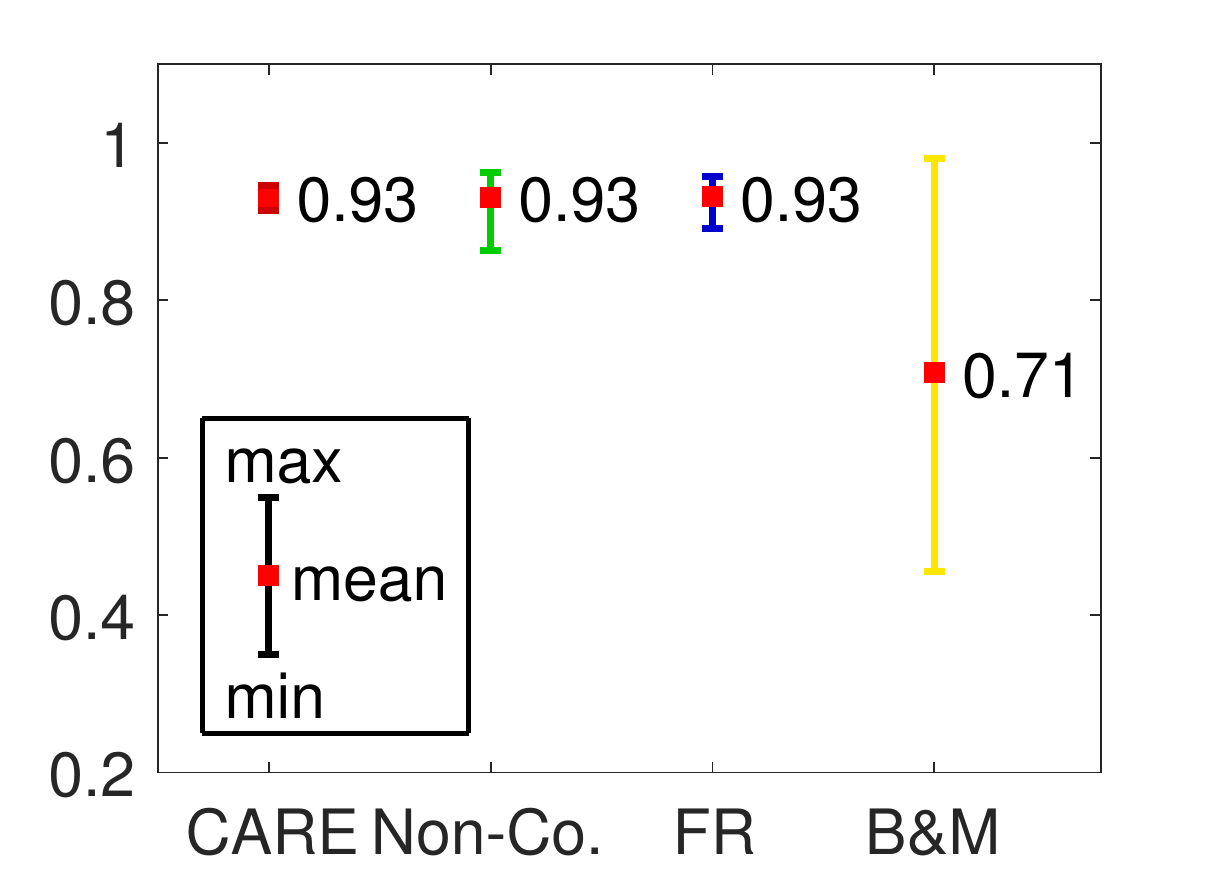}\label{fig:RR_1}} \\
     \hspace{-15pt}
     \subfloat[Number of Targets Found ($NoTF$)]{
     \includegraphics[width=0.28\textwidth]{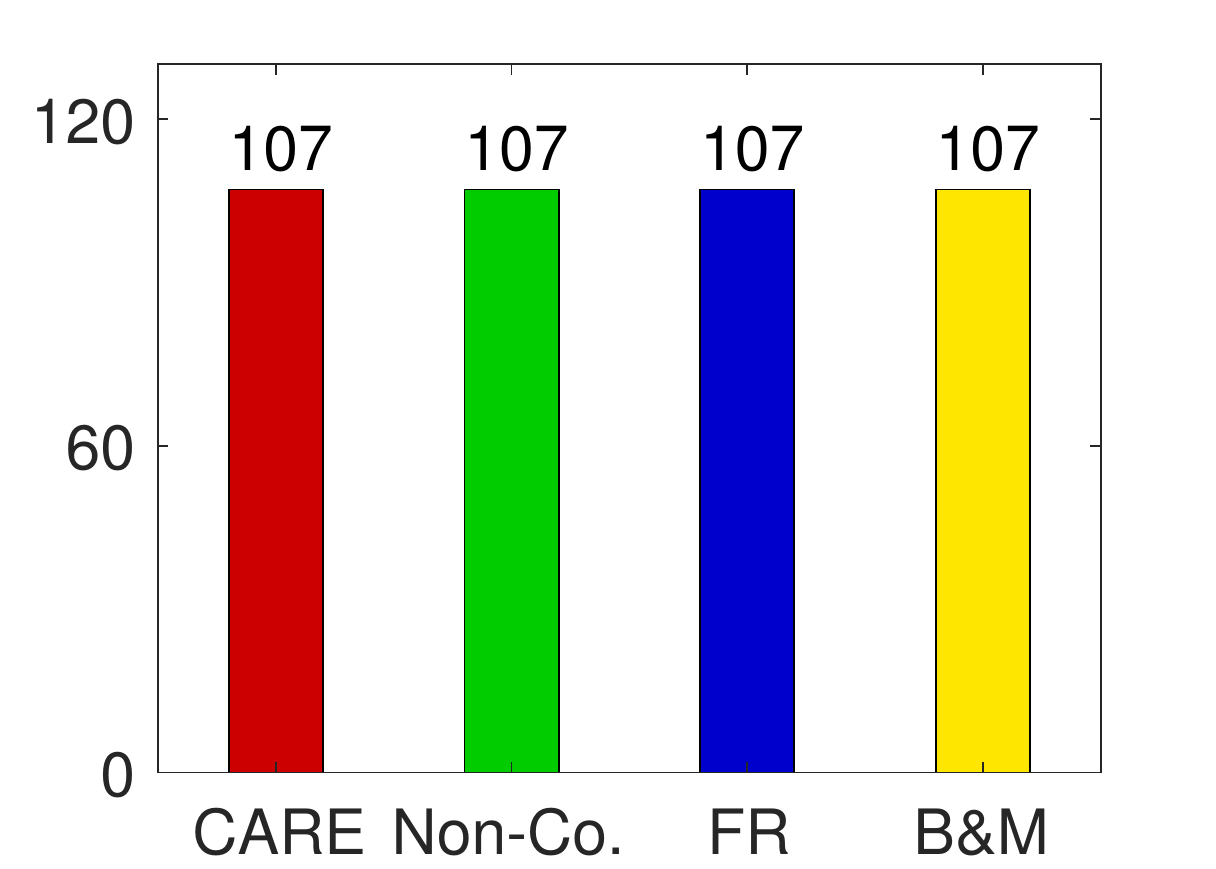}\label{fig:NoFT_1}} \hspace{-15pt}
     \subfloat[Time of Target Discovery $ToTD$]{
     \includegraphics[width=0.47\textwidth]{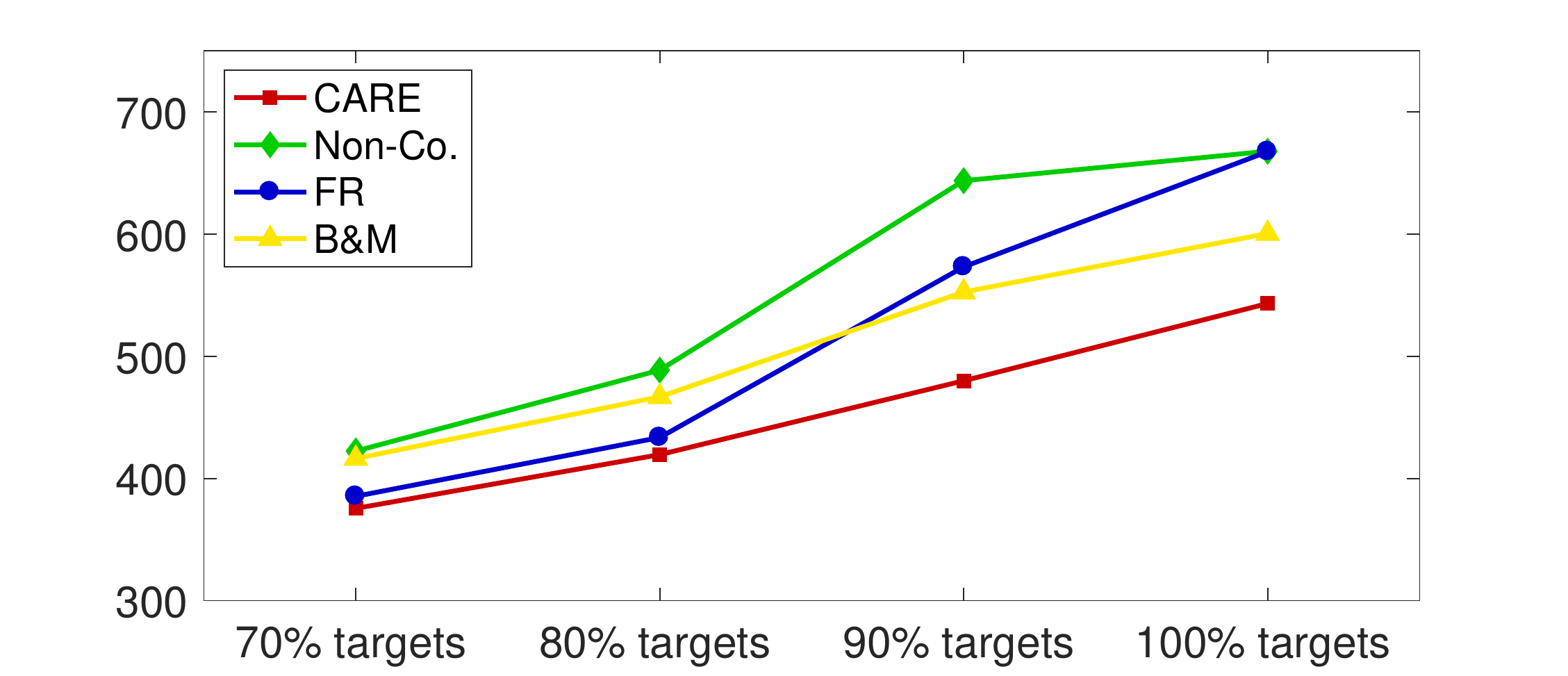}\label{fig:TDT_1}}
     \caption{\textbf{Scenario $1$}: Comparison of coverage performance using different online multi-robot coverage methods}
      \label{fig:metrics_1} \vspace{-8pt}
\end{figure*}

\subsection{\textbf{Scenario 2: Some Robots Fail and Some Idle}}
Fig.~\ref{fig:res2} presents a more complex scenario where two robots failed unexpectedly during operation. A total number of $106$ targets were randomly distributed in the field.

Fig.~\ref{fig:res2}(1) shows that the robots start exploration while using their on-board sensing systems to discover the environment. Fig.~\ref{fig:res2}$(2)$ shows that robot $v_7$ failed unexpectedly and a resilience game $G_1$ was triggered involving $\kappa_2 = 3$ of its closest neighbors. The player set was formed as $\mathscr{P} = \{v_2, v_6, v_8\}$. The action set consisted of the current tasks of all players, as well as the task belonging to the failed robot, i.e., $\tilde{\mathscr{A}} = \{2, 6, 7, 8\}$. As seen in Fig.~\ref{fig:res2}(3), the optimal equilibrium of $G_1$ immediately reallocated $v_8$ to drop its current task and help $v_7$, because task $7$ has a much higher expected worth even at the expense of traveling. Later, as shown in Fig.~\ref{fig:res2}$(4)$, robot $v_4$ failed too, which initiated the second resilience game $G_2$. Similarly, robots $v_3$, $v_5$ and $v_9$ were the closest neighbors, hence they formed the player set $\mathscr{P} = \{v_3,v_5,v_9\}$. The action set was $\tilde{\mathscr{A}} = \{3,4,5,9\}$. As observed in Fig.~\ref{fig:res2}$(5)$, the optimal equilibrium of $G_2$ lead $v_3$ to drop its task $3$ and immediately transition to help $v_4$, in pursuit of a higher worth task. Since tasks $3$ and $8$ were dropped by their initially assigned robots after games $G_1$ and $G_2$, thus they are available to any future no-idling games. Fig.~\ref{fig:res2}$(6)$ shows that robot $v_6$ has just completed its task and triggered the third no-idling game $G_3$. It called the other robot $v_1$ to join $G_3$, which was close to finish task $1$. Thus the player set was $\mathscr{P} = \{v_1,v_6\}$. The action set $\tilde{\mathscr{A}} = \{3, 8\}$ included tasks $3$ and $8$ that were assigned with no robot. No other region had sufficient task left. The optimal equilibrium of $G_3$ reallocated $v_1$ and $v_6$ to task $3$ and task $8$, respectively, as seen in Fig.~\ref{fig:res2}$(7)$. Finally, complete coverage was achieved with all targets found, as shown in Fig.~\ref{fig:res2}(8).

Fig.~\ref{fig:res2_table} presents the details of all games. It is observed that $G_P$ is $37.64\%$ for $G_1$ and $36.98\%$ for $G_2$, thus via event-driven task reallocations, the neighbors of the failed robot can re-organize to compensate for the loss of expected worth due to robot failures. Also, $G_P$ is $91.16\%$ for game $G_3$, hence the idling robots can expect to discover more targets from the remaining tasks after task reallocation. Accordingly, $G_T$ is $15.22\%$, $16.19\%$ and $5.25\%$ for games $G_1$, $G_2$ and $G_3$, respectively. Thus, the whole team also benefits from each task reallocation.

\vspace{-3pt}
\subsection{\textbf{Performance Comparison with Alternative Methods}}
Now, we examine the performance of CARE as compared to three other online multi-robot coverage methods, including: (1) Non-cooperative (Non-Co.) strategy, where each robot covers its own task using the $\epsilon^\star$ algorithm without cooperation upon task completion or robot failures; (2) modified First-responder (FR) strategy~\cite{SGH14}, where robots that finish early would selfishly seek for new tasks that can maximize their own utility using Eq. (\ref{eq:utility}); and (3) Brick and Mortar (B\&M) algorithm~\cite{FTL07}. The performance metrics of the alternative methods have been examined in both scenarios using the same initial conditions. The time measured in performance metrics is in seconds.

\begin{figure*}[ht!]
     \centering
     \vspace{-15pt}
     \subfloat[CARE]{
     \includegraphics[width=0.24\textwidth]{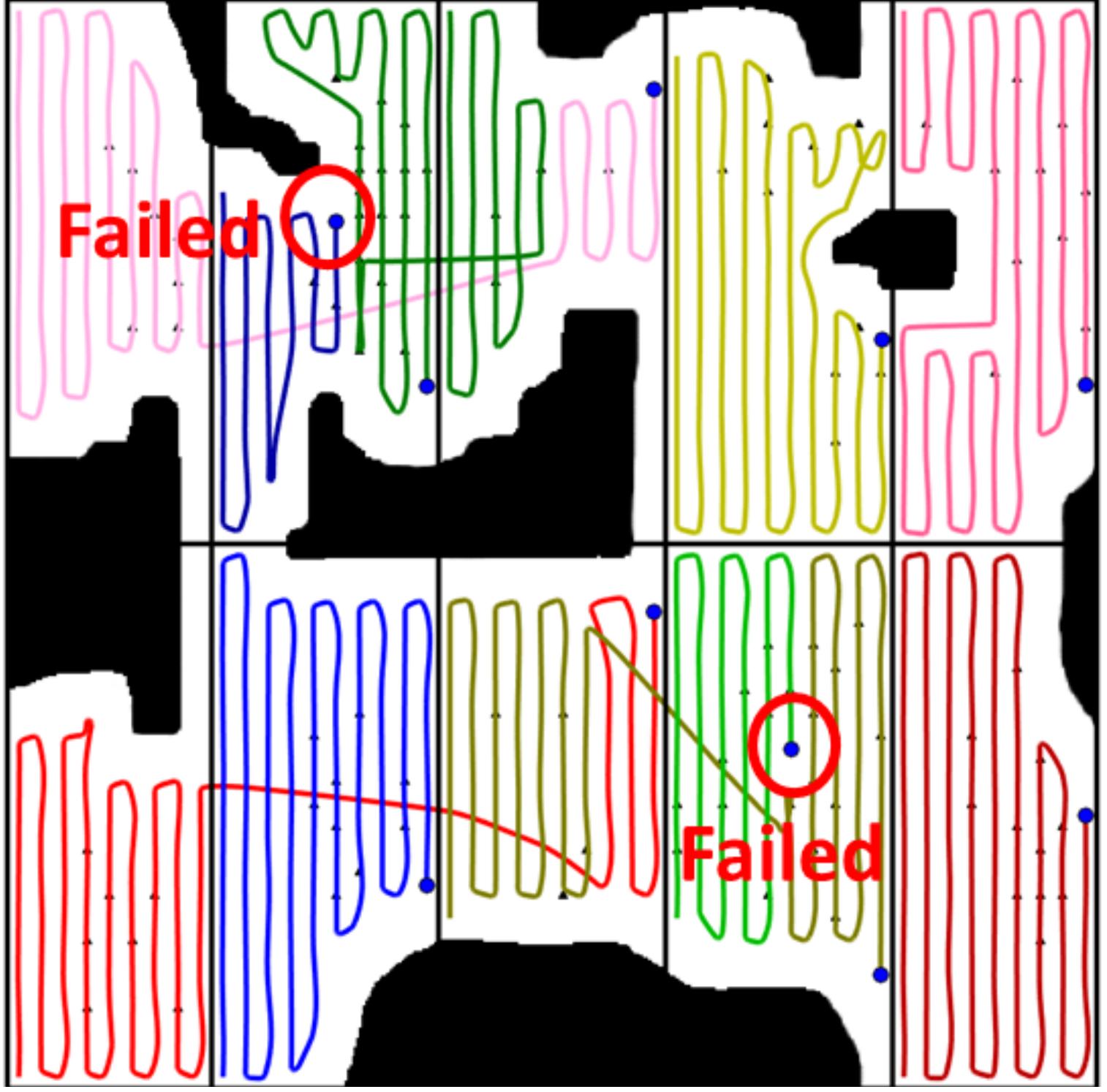}\label{fig:res2_others_care}}
     \subfloat[Non-cooperative Coverage]{
     \includegraphics[width=0.24\textwidth]{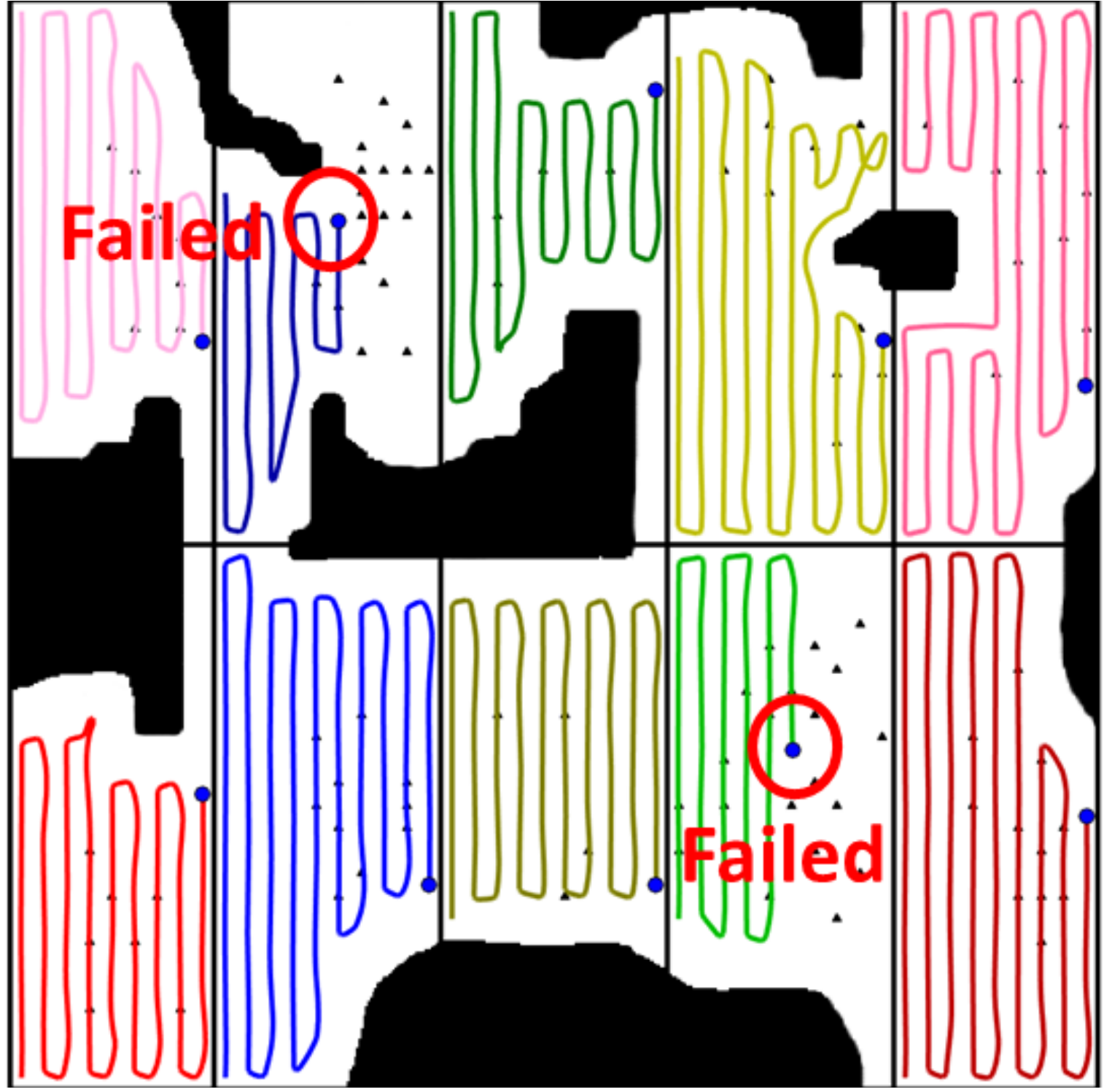}\label{fig:res2_others_nocoop}}
     \subfloat[First-responder Coverage]{
     \includegraphics[width=0.24\textwidth]{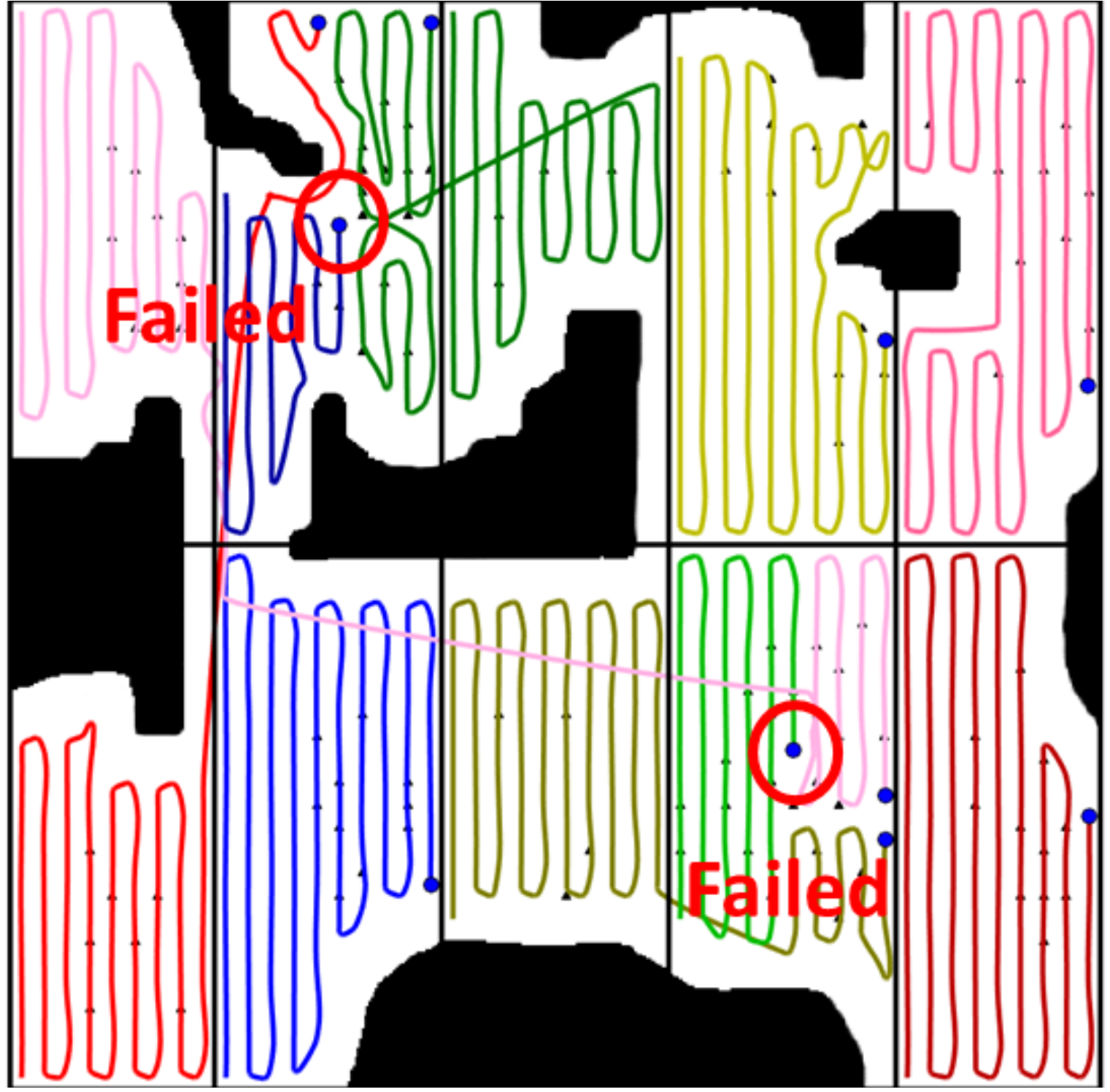}\label{fig:res2_others_first}}
     \subfloat[Brick and Mortar]{
     \includegraphics[width=0.24\textwidth]{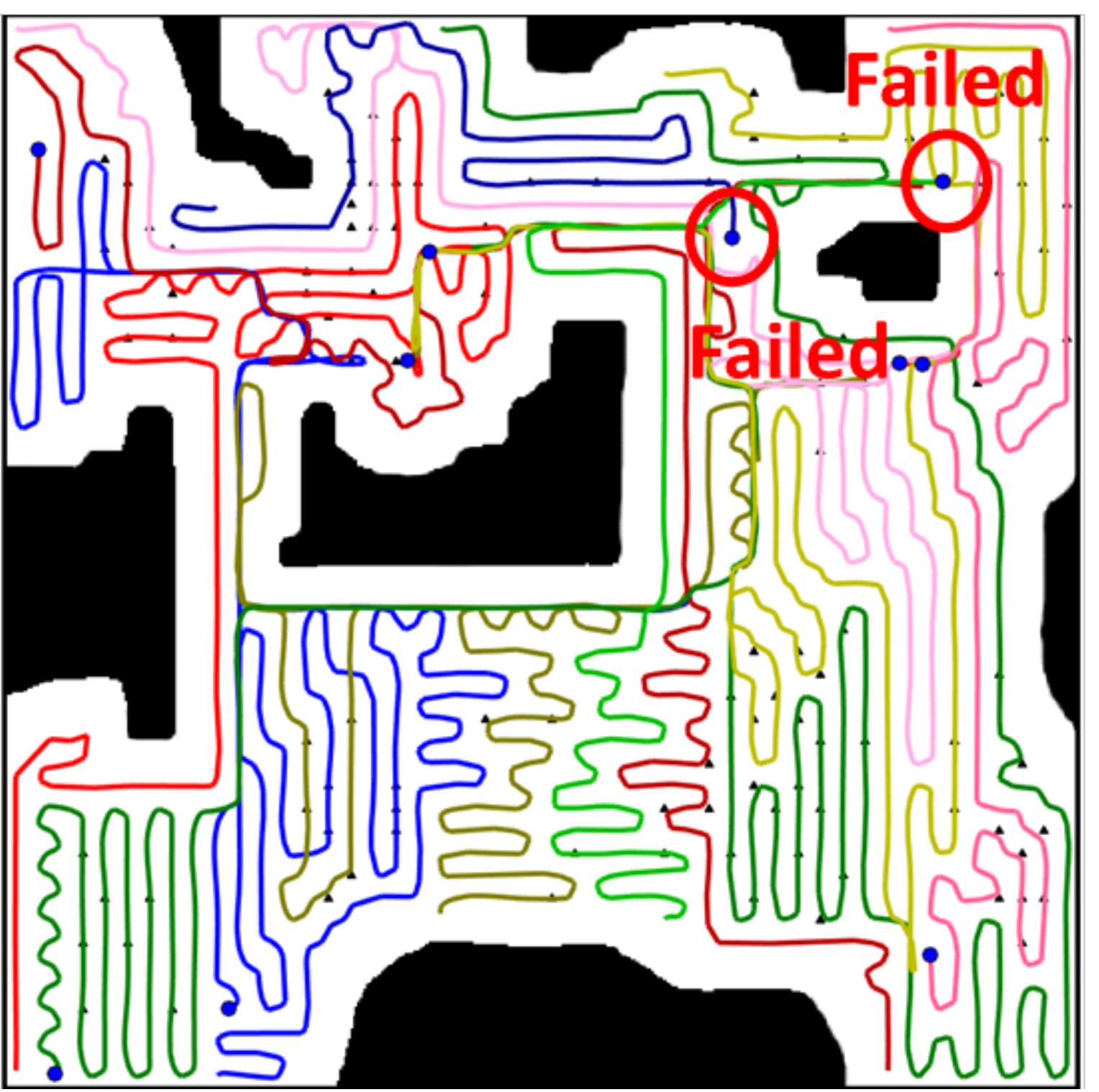}\label{fig:res2_others_bm}}
     \caption{\textbf{Scenario $2$}: Comparison of coverage trajectories using different online multi-robot coverage methods}
      \label{fig:res2_others} \vspace{-8pt}
\end{figure*}

\begin{figure*}[ht!]
     \centering
     \subfloat[Coverage Time ($CT$)]{
     \includegraphics[width=0.28\textwidth]{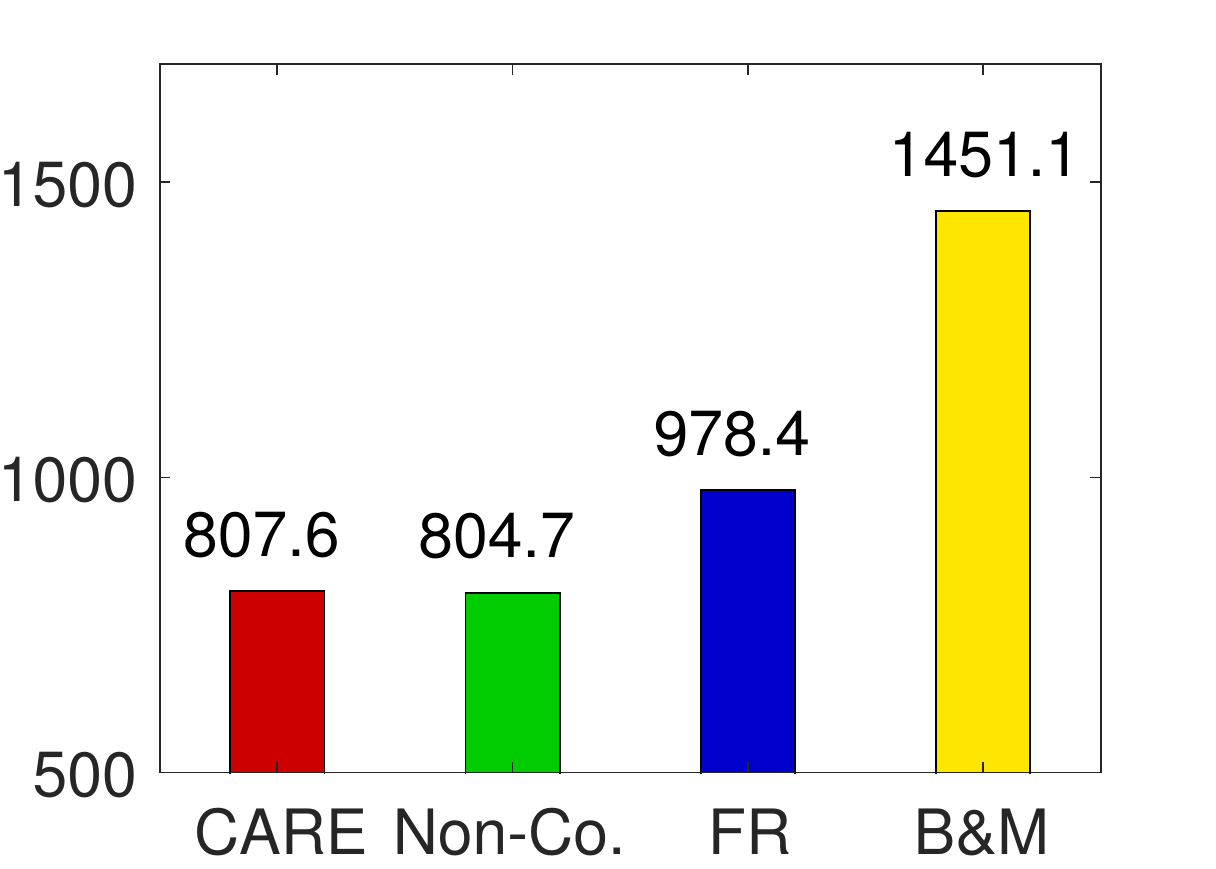}\label{fig:CT_2}}
     \subfloat[Coverage Ratio ($CR$)]{
     \includegraphics[width=0.28\textwidth]{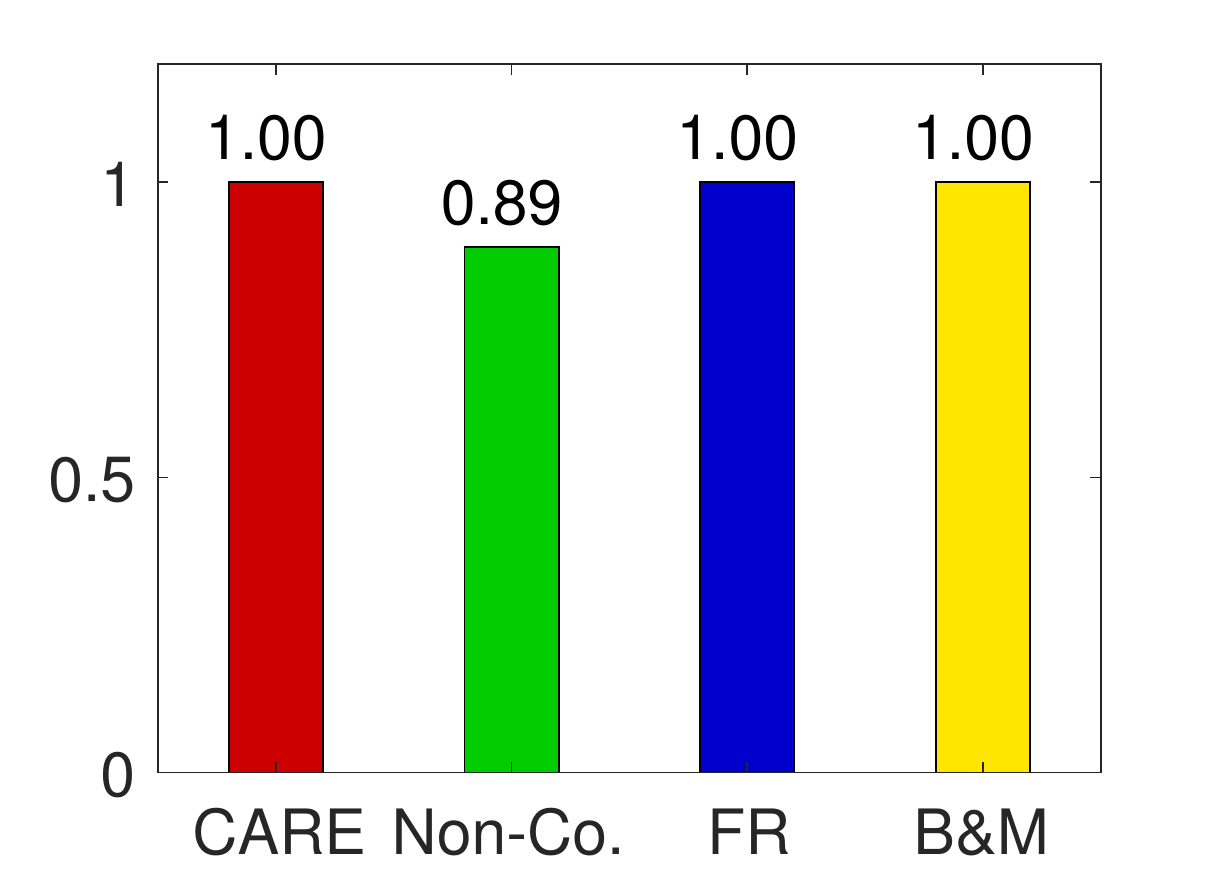}\label{fig:CR_2}}
     \subfloat[Remaining Reliability ($RR$)]{
     \includegraphics[width=0.28\textwidth]{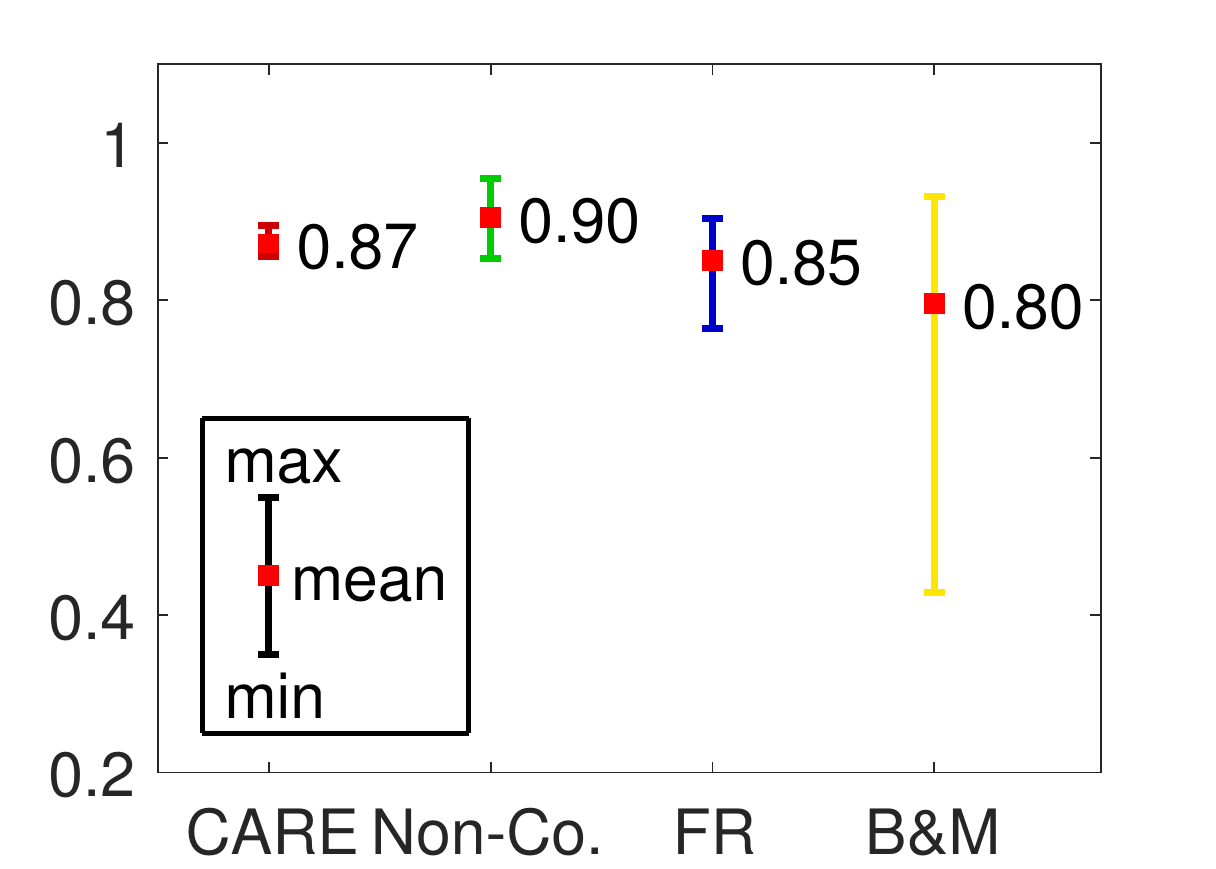}\label{fig:RR_2}}\\
     \subfloat[Number of Targets Found ($NoTF$)]{
     \includegraphics[width=0.28\textwidth]{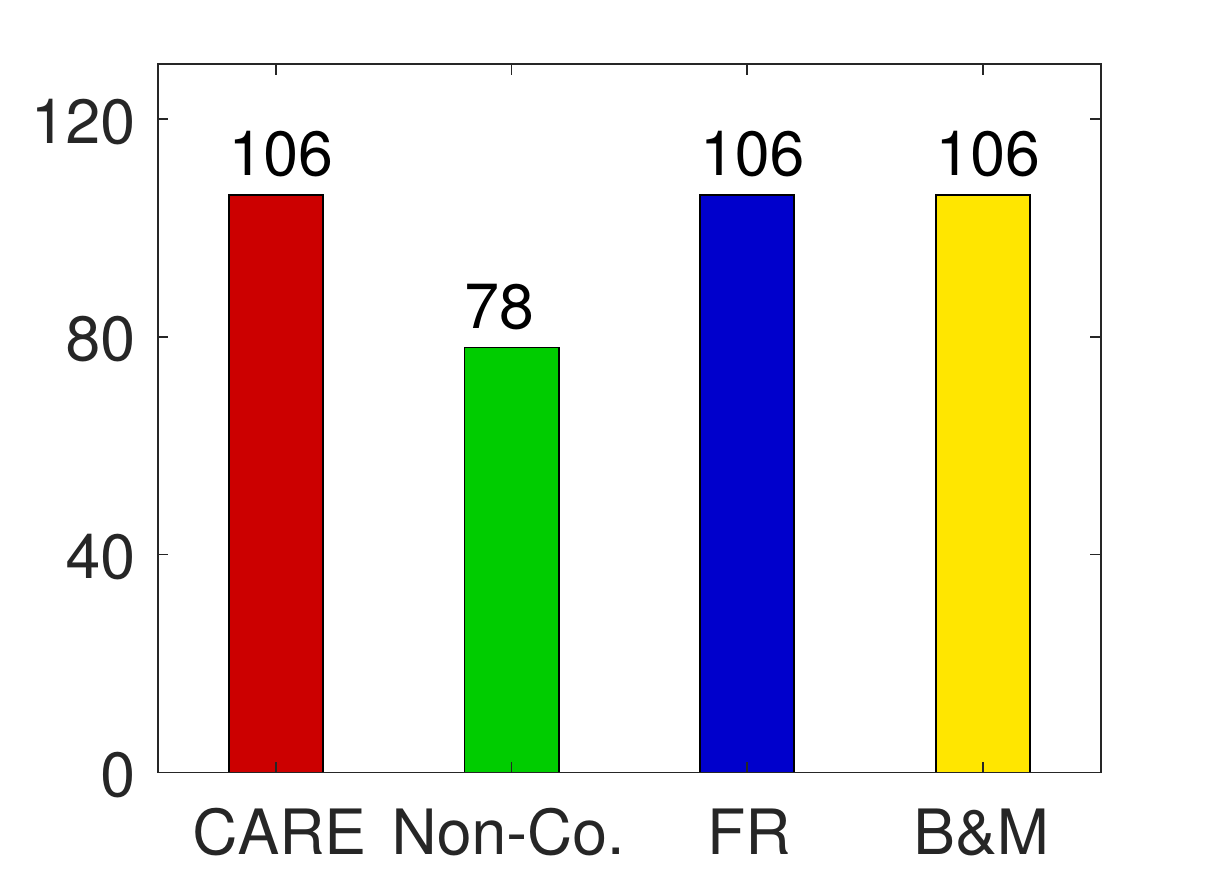}\label{fig:NoFT_2}} \hspace{-15pt}
     \subfloat[Time of Target Discovery ($ToTD$)]{
     \includegraphics[width=0.47\textwidth]{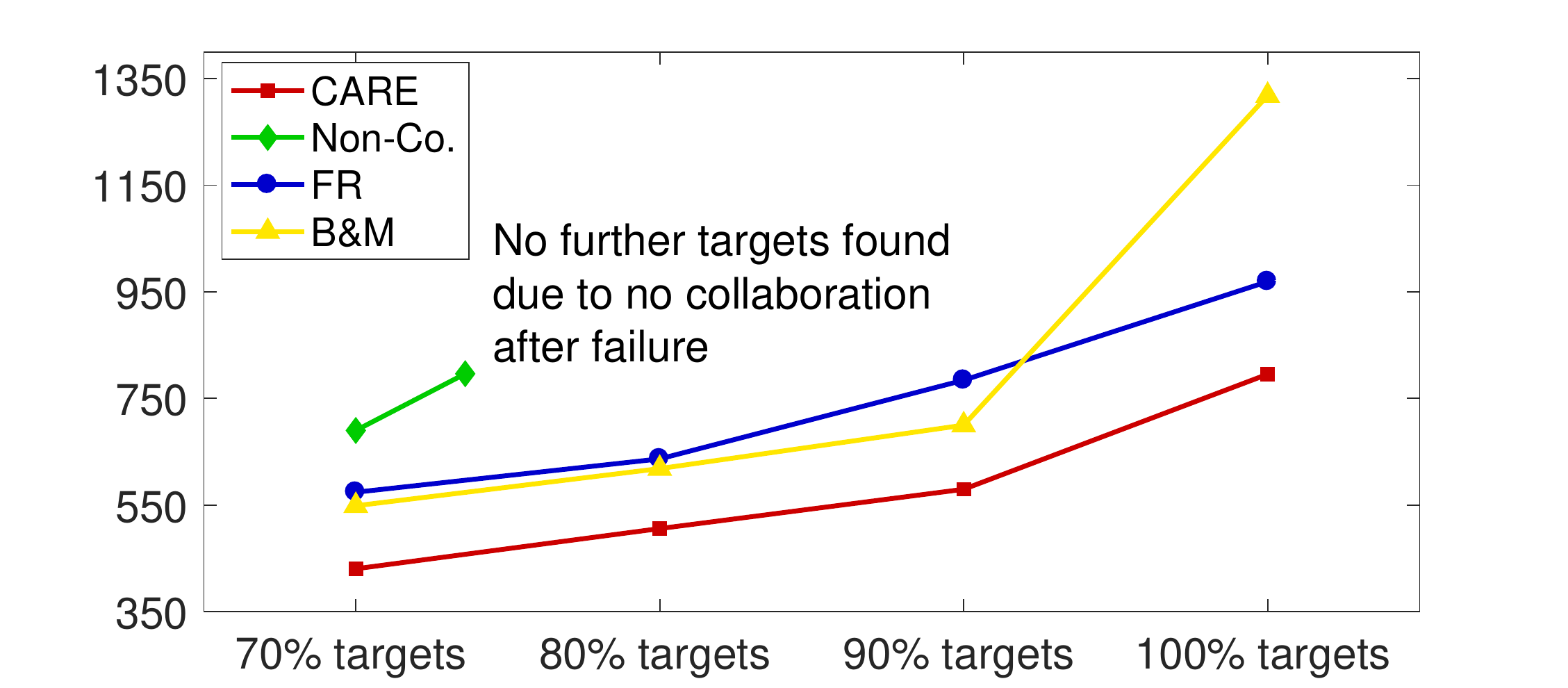}\label{fig:TDT_2}}
     \caption{\textbf{Scenario $2$}: Comparison of coverage performance using different online multi-robot coverage methods}
      \label{fig:metrics_2} \vspace{-8pt}
\end{figure*}

\vspace{3pt}
\subsubsection{\textbf{Scenario 1}}
Figs.~\ref{fig:res1_others_care}$\sim$\ref{fig:res1_others_bm} show the robot trajectories for Scenario $1$ using CARE and the three alternative methods. Since there was no failure in this scenario, complete coverage was achieved using all methods. The corresponding performance metrics are shown in Fig.~\ref{fig:metrics_1}.

As seen in Fig.~\ref{fig:CT_1}, CARE requires the least coverage time ($CT$), which saves $\frac{694.3-557.9}{694.3}\approx 19.65\%$ in time as compared to using the Non-Co method. Similarly, CARE saves about $16.41\%$ and $60.47\%$ in time as compared to using the FR and B\&M methods, respectively. The significant savings in $CT$ are due to the no-idling games that reallocated early completed robots $v_2$, $v_5$, $v_6$, and $v_{10}$ in an optimized way.

Moreover, due to lack of cooperation, the Non-Co. method requires a much higher $CT$ than CARE and the FR method. In the FR strategy, since early completed robots selfishly selects their new tasks that can maximize their own utility, robot $v_6$ ended up picking task $3$ upon finishing task $6$, which contains higher worth even at the expense of long traveling time (see Fig.~\ref{fig:res1_others_first}). In this regard, the FR method presents higher $CT$ than CARE. Further, due to lack of task partitioning as well as the looping problem, the B\&M method generated strongly overlapped trajectories that leads to the highest $CT$.

Fig.~\ref{fig:RR_1} shows the minimum, mean and maximum remaining reliability ($RR$) among all live robots as the operation ended. It is seen that, although CARE shares the same mean $RR$ with the Non-Co. and FR methods, it has the smallest difference between minimum and maximum $RR$ of all robots, which implies a more balanced battery depletion for different robots. In contrast, the B\&M method presents the smallest mean $RR$ due to the highest $CT$.

As for the number of targets found ($NoTF$), since $CR=1$ in this scenario, all $107$ hidden targets were found using all methods, as shown in Fig.~\ref{fig:CR_1}$\sim$\ref{fig:NoFT_1}.

Fig.~\ref{fig:TDT_1} shows the time of target discovery ($ToTD$). It is seen that at each percentage of targets found, CARE always requires the least time, thus leading to the fastest target discovery progress as compared to other methods. This is due to the optimized task reallocations of early completed robots $v_2$, $v_5$, $v_6$ and $v_{10}$ after playing no-idling games $G_1$ and $G_2$. Note that the time in $ToTD$ when $100\%$ targets are discovered, is different from $CT$, because robots should continue searching in unexplored regions towards complete coverage.

\vspace{3pt}
\subsubsection{\textbf{Scenario 2}}\label{sec:scenario2}
Figs.~\ref{fig:res2_others_care}$\sim$\ref{fig:res2_others_bm} show the robot trajectories using CARE and the three alternative methods for Scenario $2$. The corresponding performance metrics are presented in Fig.~\ref{fig:metrics_2}. In this scenario, two robots of $v_4$ and $v_7$ failed during operation. The alternative methods were evaluated using the same failing condition, i.e., the same robots failed after traveling for the same amount of time.

As shown in Fig.~\ref{fig:CT_2}, CARE saves about  $17.46\%$ and $44.35\%$ in $CT$ as compared to using the FR and B\&M methods, respectively. This is due to the no-idling game $G_3$ that reallocated early completed robots $v_1$ and $v_6$ in an optimized manner; while in the FR method, the initially assigned tasks of the failed robots were left unattended until some other robot completes its task. Again, the B\&M method has the highest $CT$ due to strongly overlapped trajectories.

Fig.~\ref{fig:RR_2} shows the $RR$ of live robots at the end of the team operation. It is seen that, CARE has a higher mean $RR$, as well as the smallest difference between minimum and maximum $RR$ of all live robots, as compared to the FR and B\&M methods. Also, since tasks $4$ and $7$ were left unattended after robots $v_4$ and $v_7$ failed, the Non-Co. method has the highest mean $RR$.

As shown in Fig.~\ref{fig:CR_2}, coverage was incomplete ($CR=0.89$) using the Non-Co. method, while all other methods achieved $CR=1$. Accordingly, a total number of $28$ hidden targets were missed using the Non-Co. method, while all other methods successfully discovered all $106$ targets, as shown in Fig.~\ref{fig:NoFT_2}.

Fig.~\ref{fig:TDT_2} shows the performance of $ToTD$, where CARE again shows the fastest target discovery progress as compared to other methods. This is mainly because after playing the resilience games $G_1$ and $G_2$, robots $v_3$ and $v_6$ immediately determined to drop their current tasks and search tasks $4$ and $7$ when robots $v_4$ and $v_7$ failed, respectively, in pursuit of a much higher expected worth than their current tasks. Also, the Non-Co. method only collected around $73.6\%$ of all targets at the end of the team operation due to incomplete coverage.

\subsection{\textbf{Effects of Parameters on Coverage Performance}}\label{sec:parameter_effect}
This section evaluates the effects of different parameters on the coverage performance. Specifically, we vary the values of $N$, $\lambda_r$, $\kappa_1$ and $\kappa_2$, while keeping the values of all other parameters constant. The performance metrics presented in Section~\ref{Sec:performancemetrics} are used for evaluations.

{\setlength{\belowcaptionskip}{-6pt}
\begin{figure*}[!ht]
     \centering
     \subfloat[Coverage trajectories using a team of $4$ robots ($v_2, v_3, v_4$ and $v_5$). Robot $v_4$ was failed during exploration]{
     \includegraphics[width=.98\textwidth]{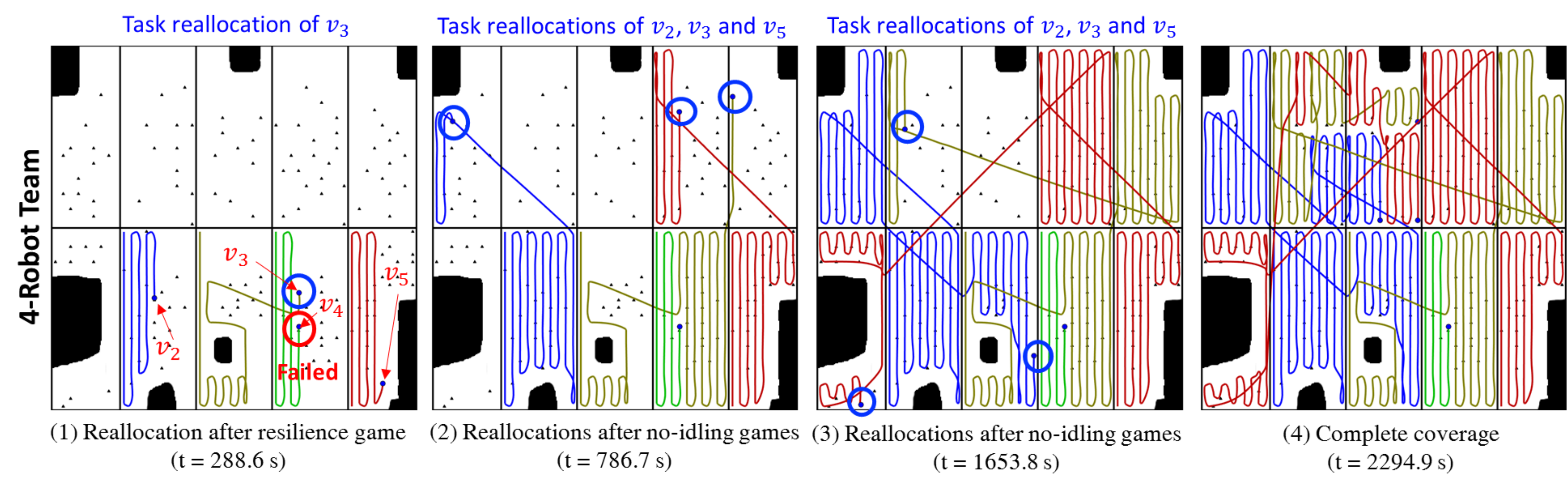}\label{fig:res3_4robots}} \\ \vspace{-10pt}
     \subfloat[Coverage trajectories using a team of $6$ robots ($v_1, v_2, v_3, v_4$, $v_5$ and $v_9$). Robot $v_4$ was failed during exploration]{
     \includegraphics[width=.98\textwidth]{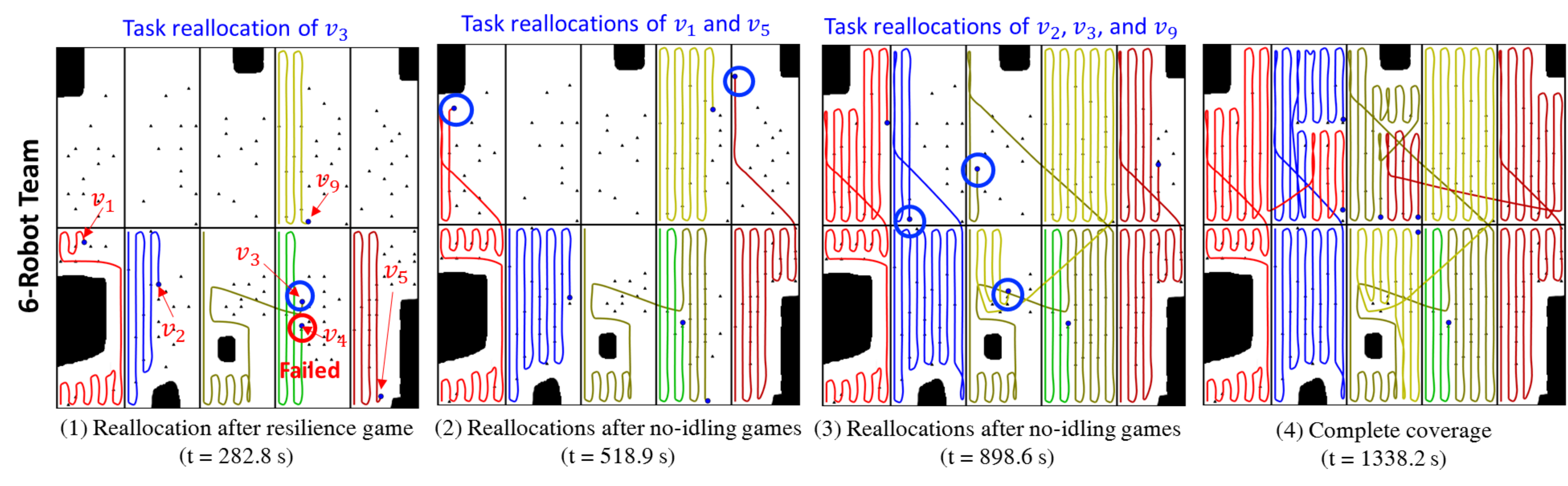}\label{fig:res3_6robots}} \\ \vspace{-10pt}
     \subfloat[Coverage trajectories using a team of $8$ robots ($v_1, v_2, v_3, v_4$, $v_5$, $v_7$, $v_8$ and $v_9$). Robot $v_4$ was failed during exploration]{
     \includegraphics[width=.98\textwidth]{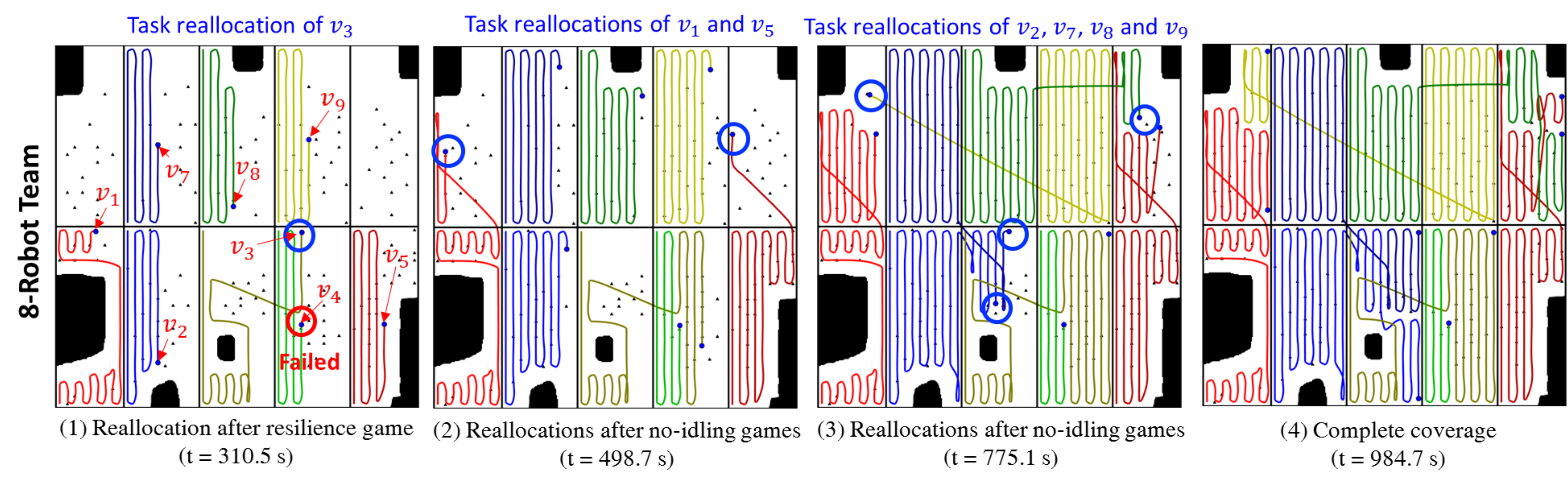}\label{fig:res3_8robots}} \\ \vspace{-10pt}
     \subfloat[Coverage trajectories using a team of $10$ robots ($v_1, v_2, v_3, v_4$, $v_5$, $v_6$, $v_7$, $v_8$, $v_9$ and $v_{10}$). Robot $v_4$ was failed during exploration]{
     \includegraphics[width=.98\textwidth]{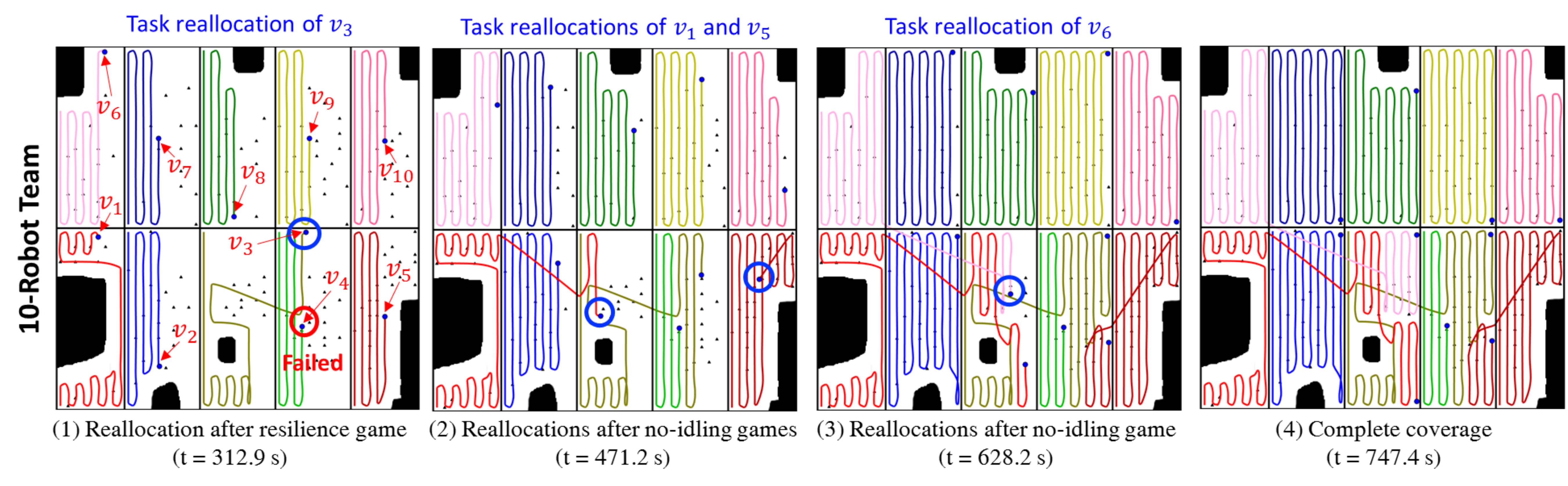}\label{fig:res3_10robots}}
     \caption{\textbf{Scenario $3$}: Coverage trajectories of CARE using a team of $4$, $6$, $8$ and $8$ robots}
      \label{fig:res3_robotnumbers} \vspace{-8pt}
\end{figure*}

\vspace{6pt}
\subsubsection{Team Size ($N$)}\label{sec:team_size}
We examine the effectiveness of CARE when different number of robots are deployed to search the same area $\mathcal{R}$. For this purpose, we present Scenario 3, where teams of $N = 4$, $6$, $8$ and $10$ robots were deployed to cover the same $10$ tasks, as shown in Fig.~\ref{fig:res3_robotnumbers}. Fig.~\ref{fig:res3_4robots}, Fig.~\ref{fig:res3_6robots}, Fig.~\ref{fig:res3_8robots} and Fig.~\ref{fig:res3_10robots} present the coverage trajectories at different time instants for $N = 4$, $6$, $8$, and $10$, respectively. The scenario setup is kept the same across all simulations, where robot $v_4$ fails after it travels for the same amount of time. As seen in Fig.~\ref{fig:res3_4robots}(1), Fig.~\ref{fig:res3_6robots}(1), Fig.~\ref{fig:res3_8robots}(1) and Fig.~\ref{fig:res3_10robots}(1), a resilience game was initiated upon failure of $v_4$, and its neighbor $v_3$ immediately dropped its task $3$ to help $v_4$, due to a higher expected worth. Later several no-idling reallocations occurred and in all cases complete coverage was achieved. Moreover, it is seen that with a smaller $N$, task reallocation appears more often. As shown in Fig.~\ref{fig:res3_4robots}(4), Fig.~\ref{fig:res3_6robots}(4), Fig.~\ref{fig:res3_8robots}(4) and Fig.~\ref{fig:res3_10robots}(4), the total coverage time clearly decreases when $N$ increases.

Table~\ref{table:different_team_size} presents the corresponding coverage performances. It is seen that with smaller $N$, since each robot must cover more tasks, the average $RR$ of all live robots become smaller. Also, the minimum and maximum $RR$ are close to the mean, which implies the live robots have been operating for similar amounts of time and robot idling was successfully prevented.

{\setlength{\belowcaptionskip}{-5pt}
\begin{figure*}[!ht]
     \centering
     \subfloat[\textbf{Target distribution example 1}: sparse number of targets in the task of failed robot $v_4$ in a $8$-robot team. Tasking sequence of each robot: $v_1: \mathcal{R}_1 \rightarrow \mathcal{R}_6, v_2: \mathcal{R}_2 \rightarrow \mathcal{R}_4, v_3: \mathcal{R}_3 \rightarrow \mathcal{R}_4, v_4: \mathcal{R}_4, v_5: \mathcal{R}_5 \rightarrow \mathcal{R}_{10}, v_7: \mathcal{R}_7 \rightarrow \mathcal{R}_6, v_8: \mathcal{R}_8 \rightarrow \mathcal{R}_{10}, v_9: \mathcal{R}_9 \rightarrow \mathcal{R}_{10}$]{
     \includegraphics[width=.95\textwidth]{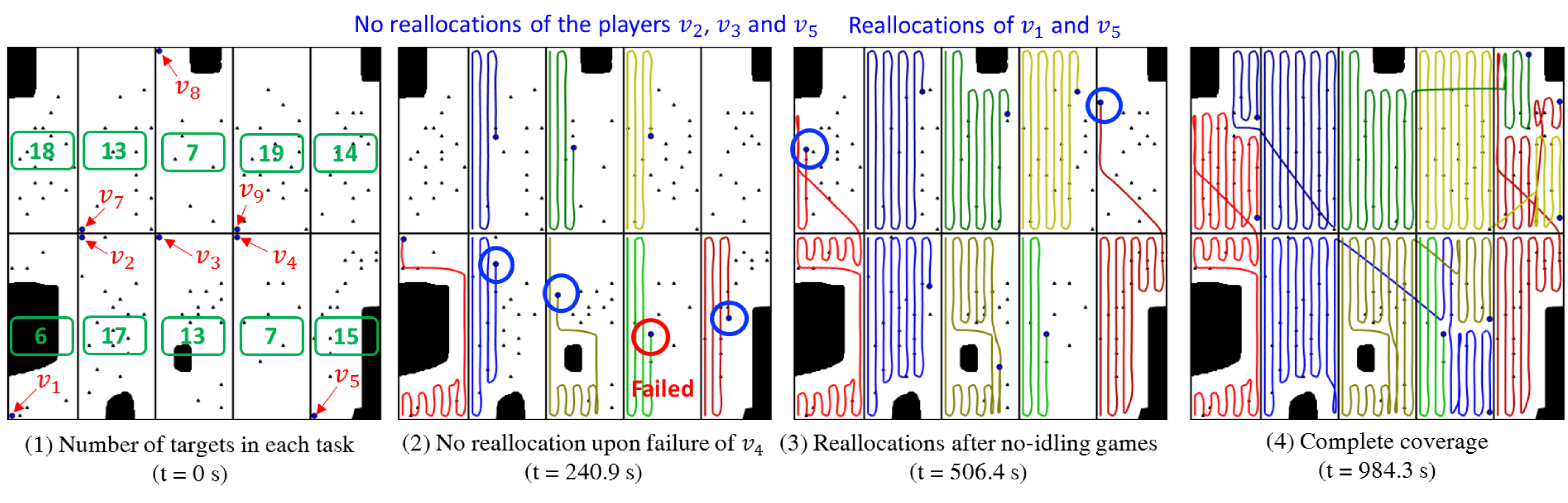}\label{fig:res3_8robots_case1}} \\ \vspace{-10pt}
     \subfloat[\textbf{Target distribution example 2}: medium number of targets in the task of failed robot $v_4$ in a $8$-robot team. Tasking sequence of each robot: $v_1: \mathcal{R}_1 \rightarrow \mathcal{R}_6, v_2: \mathcal{R}_2 \rightarrow \mathcal{R}_3, v_3: \mathcal{R}_3 \rightarrow \mathcal{R}_4, v_4: \mathcal{R}_4, v_5: \mathcal{R}_5 \rightarrow \mathcal{R}_{10}, v_7: \mathcal{R}_7 \rightarrow \mathcal{R}_3, v_8: \mathcal{R}_8 \rightarrow \mathcal{R}_{10}, v_9: \mathcal{R}_9 \rightarrow \mathcal{R}_{10}$]{
     \includegraphics[width=.95\textwidth]{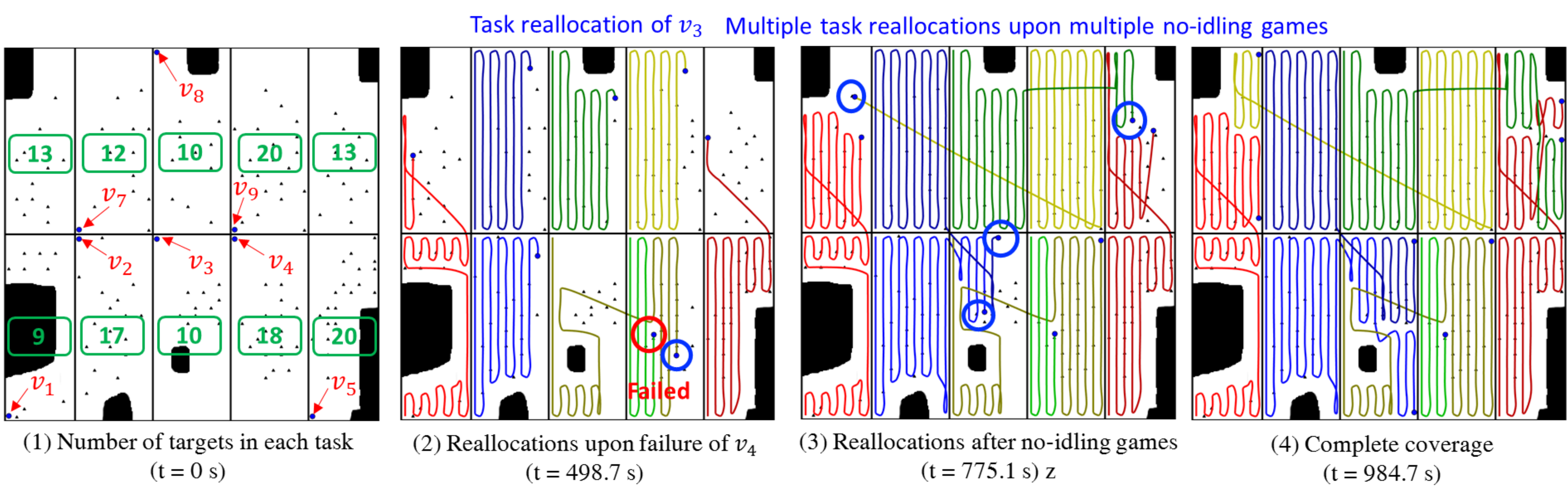}\label{fig:res3_8robots_case2}} \\ \vspace{-10pt}
     \subfloat[\textbf{Target distribution example 3}: dense number of targets in the task of failed robot $v_4$ in a $8$-robot team. Tasking sequence of each robot: $v_1: \mathcal{R}_1 \rightarrow \mathcal{R}_6, v_2: \mathcal{R}_2 \rightarrow \mathcal{R}_3, v_3: \mathcal{R}_3 \rightarrow \mathcal{R}_4 \rightarrow \mathcal{R}_{10}, v_4: \mathcal{R}_4, v_5: \mathcal{R}_5 \rightarrow \mathcal{R}_{4} \rightarrow \mathcal{R}_5 \rightarrow \mathcal{R}_{10}, v_7: \mathcal{R}_7 \rightarrow \mathcal{R}_6, v_8: \mathcal{R}_8 \rightarrow \mathcal{R}_{10}, v_9: \mathcal{R}_9 \rightarrow \mathcal{R}_3$]{
     \includegraphics[width=.95\textwidth]{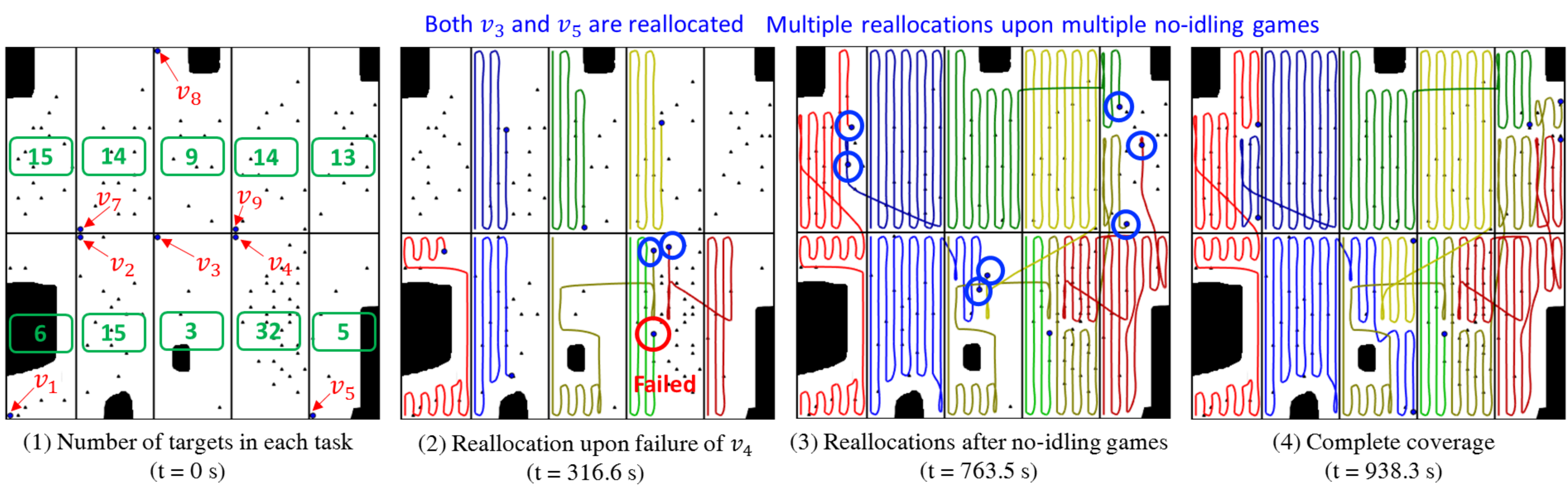}\label{fig:res3_8robots_case3}} \\ \vspace{-3pt}
     \caption{Coverage trajectories under different target distributions using a team of $8$ robots}
      \label{fig:res3_lambda} 
\end{figure*}
}

\vspace{3pt}
\subsubsection{Distribution of Targets ($\lambda_r$)}\label{sec:target_distribution}
Here, we utilize a team of $8$ robots to examine the performance of CARE under different target distributions. Fig.~\ref{fig:res3_lambda} shows the coverage trajectories at different time instants under three different target distribution examples. The number of targets in each task $r$ is labeled in Fig.~\ref{fig:res3_8robots_case1}(1), Fig.~\ref{fig:res3_8robots_case2}(1), and Fig.~\ref{fig:res3_8robots_case3}(1), and $\lambda_r$ is set as the actual number in each task. While the target distributions are randomly generated, in particular, the task of the failed robot $v_4$ has significantly different $\lambda_r$ in the three examples.

In target distribution example $1$, as seen in Fig.~\ref{fig:res3_8robots_case1}, task $4$ has sparse targets. When $v_4$ failed, its neighbors $v_2$, $v_3$ and $v_5$ played a resilience game, but none of them was reallocated to help $v_4$. This is because they can expect higher utility from their current tasks at the moment. Later, as shown in Fig.~\ref{fig:res3_8robots_case1}(3), when $v_1$ and $v_5$ finished, they were reallocated to tasks $6$ and $10$ after playing no-idling games, respectively. At that moment, due to much higher estimated worths in tasks $6$ and $10$, again none of them was reallocated to task $4$. At last, as seen in Fig.~\ref{fig:res3_8robots_case1}(4), when $v_2$ and $v_3$ finished, they moved to task $4$ and eventually complete coverage was achieved and all targets were found.

{\setlength{\belowcaptionskip}{0pt}
\begin{table}[!t]
\centering
\caption{Effects of varying team size}
\label{table:different_team_size}
\begin{tabular}{ccccc}
\toprule
\multirow{2}{*}{$N$} & \multirow{2}{*}{$CT$} & \multicolumn{3}{c}{$RR$} \\ \cline{3-5} 
 &  & min & mean & max \\ \midrule \vspace{5pt}
$4$ & $2258.6s$ & $0.071$ & $0.086$ & $0.109$ \\\vspace{5pt}
$6$ & $1338.2s$ & $0.546$ & $0.610$ & $0.672$ \\\vspace{5pt}
$8$ & $984.71s$ & $0.777$ & $0.826$ & $0.883$ \\\vspace{5pt}
$10$ & $747.4s$ & $0.876$ & $0.904$ & $0.926$ \\ \bottomrule
\end{tabular}
\end{table}
}

Fig.~\ref{fig:res3_8robots_case2} shows the coverage trajectories under target distribution example $2$. As compared to the previous example, now tasks $4$ and $5$ have slightly more targets, but less targets are present in tasks $3$. Thus, upon failure of $v_4$, $v_3$ was reallocated to task $4$ to pursue a higher utility, as shown in Fig.~\ref{fig:res3_8robots_case2}(2). Later, multiple no-idling games were played and the idling robots $v_2$, $v_7$, $v_8$ and $v_9$ were reallocated, as shown in Fig.~\ref{fig:res3_8robots_case2}(3). Finally, complete coverage was achieved as shown in Fig.~\ref{fig:res3_8robots_case2}(4).

In target distribution example $3$, as shown in Fig.~\ref{fig:res3_8robots_case3}(1), task $4$ has significantly more targets, which makes it prioritized for coverage. In contrast, tasks $3$ and $5$ have much less targets. Hence, as seen in Fig.~\ref{fig:res3_8robots_case3}(2), when $v_4$ failed, a resilience game was initiated involving players $v_2$, $v_3$ and $v_5$; then both $v_3$ and $v_5$ moved to task $4$. Upon reaching task $4$, each of them picked a sub-area and worked in parallel. Thereafter, as shown in Fig.~\ref{fig:res3_8robots_case3}(3), multiple no-idling games appeared and all the live robots were reallocated all around to fill the incomplete tasks. At the end, as shown in Fig.~\ref{fig:res3_8robots_case3}(4), complete coverage was achieved with all targets found.

\begin{figure*}[!ht]
     \flushleft
     \subfloat[$ToTD$ using different $\kappa_1$ in Scenario 1]{
     \includegraphics[width=.46\textwidth]{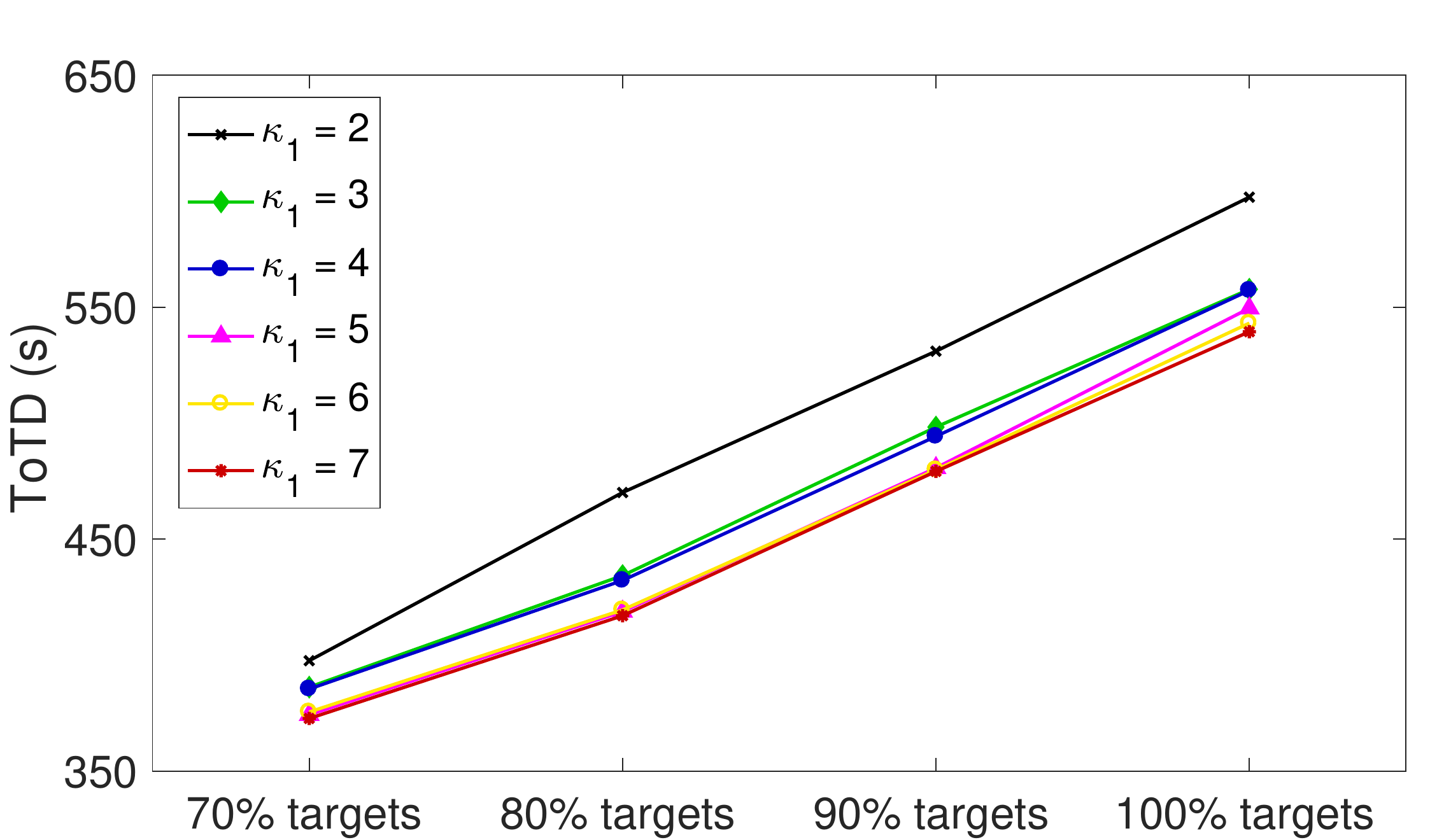}\label{fig:vary_kappa_1}}
     \hspace{1.5em}
     \subfloat[$ToTD$ using different $\kappa_2$ in Scenario 3]{
     \includegraphics[width=.46\textwidth]{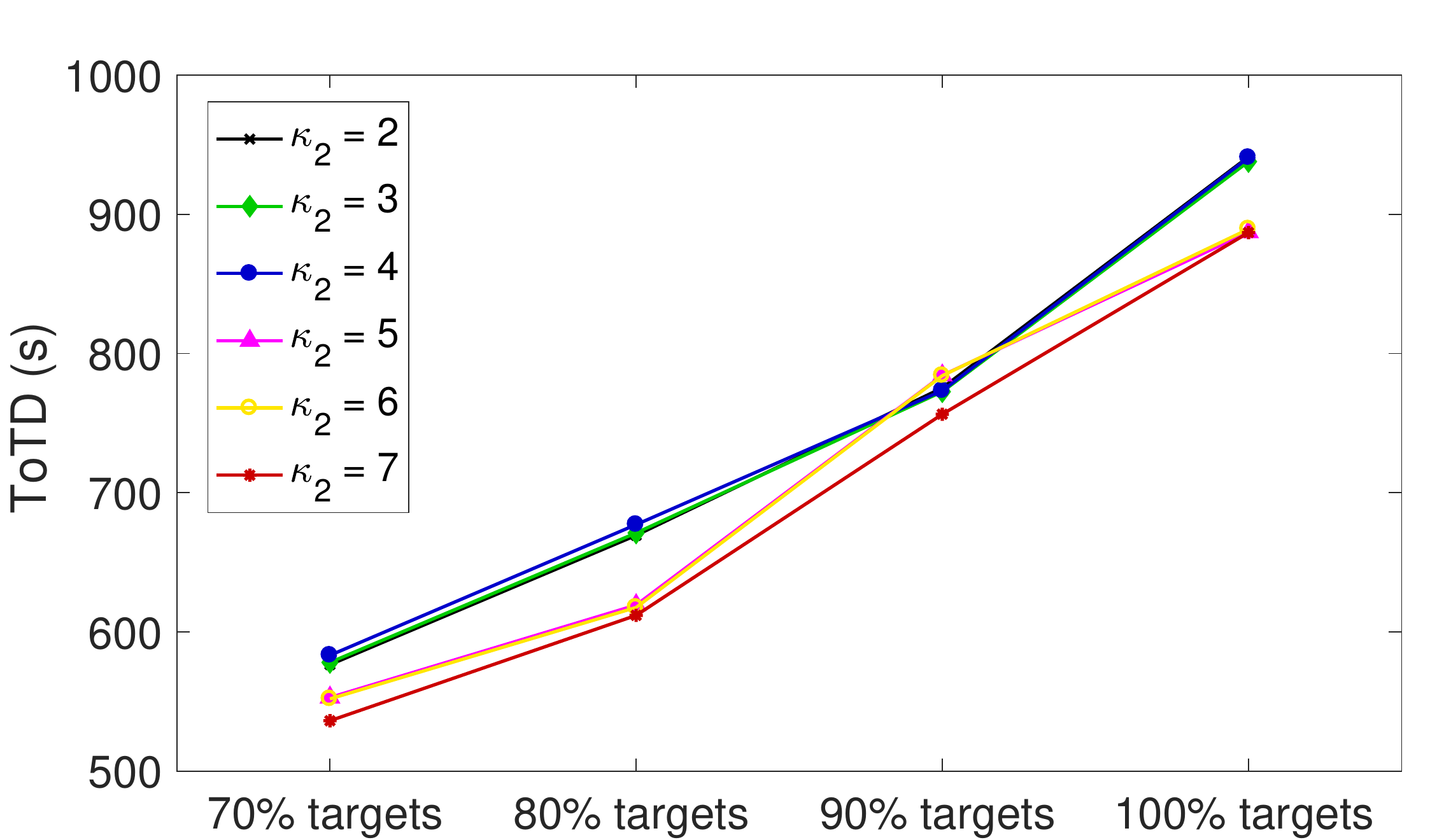}\label{fig:vary_kappa_2}} \\
     \caption{Time of target discovery ($ToTD$) using different neighborhood sizes $\kappa_1$ and $\kappa_2$}
      \label{fig:vary_kappa} 
\end{figure*}

Based on the above analysis, it is seen that the target distribution has a direct impact on the game decisions, which tends to drive the players to pursue prioritized coverage in tasks with higher estimated worth in general.

\vspace{3pt}
\subsubsection{Player Set Size Parameters ($\kappa_1,\kappa_2$)}\label{sec:neighborhood_size}
Now, we examine the effects of neighborhood sizes $\kappa_1$ and $\kappa_2$ on the coverage  performance for a team of $8$ robots in total. Specifically, we focus on two aspects: (1) using a varying $\kappa_1$ with a fixed $\kappa_2$; and (2) using a varying $\kappa_2$ with a fixed $\kappa_1$. The team-level performance metric $ToTD$ is used for evaluation.

First, we fix $\kappa_2 = 3$ and vary $\kappa_1$ in Scenario $1$. Note that $\kappa_1$ describes the neighborhood size in no-idling games, which is used to define player set $\mathscr{P}$ in Section~\ref{sec:taskreallocation}. However, the players within the $\kappa_1$ neighborhood of the idling robot $v_{id}$, must also satisfy another condition of being close to finish their current tasks at that moment. Thus, the actual number of players (i.e., $|\mathscr{P}|$) could be smaller than $\kappa_1$.

\begin{figure}[t]
  \centering
  \vspace{-6pt}
  \includegraphics[width=.95\columnwidth]{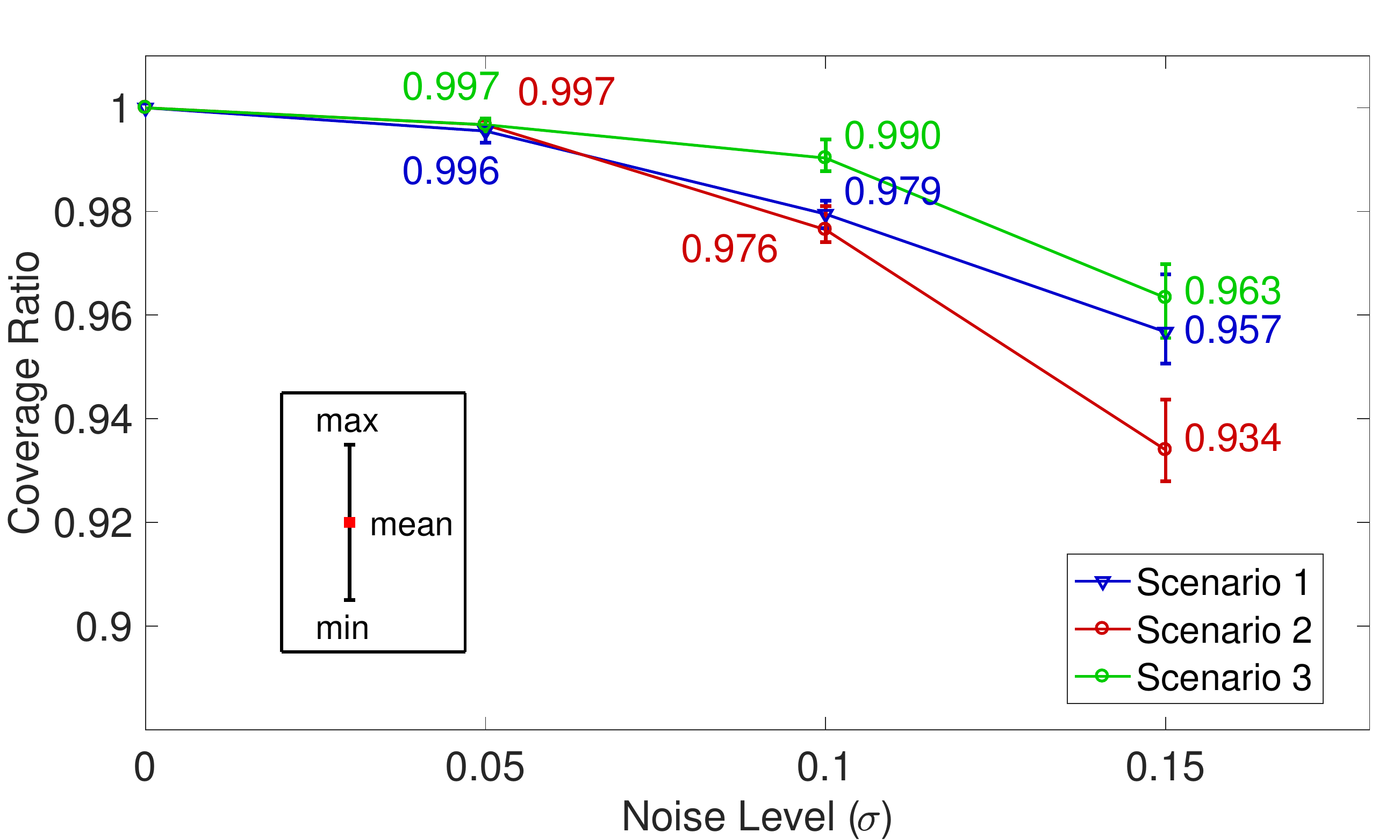} \vspace{-3pt}
  \caption{Coverage ratios at various noise levels over 5 runs/scenario}\label{fig:noiseratio}\vspace{0pt}
\end{figure}

Fig.~\ref{fig:vary_kappa_1} shows the $ToTD$ when $\kappa_1$ gradually increases from $2$ to $7$ in Scenario $1$. As described in Section~\ref{sec:scenario1}, in this scenario, the no-idling games involved $v_2$, $v_5$, $v_6$ and $v_{10}$ that finished earlier than the rest; hence, multiple no-idling games were initiated containing different subset of these players depending on the size of $\kappa_1$. Clearly, it shows that with a larger $\kappa_1$, $ToTD$ is reduced at different target discovery percentages. Moreover, when $\kappa_1 = 3, 4$, and when $\kappa_1 = 6, 7$, $ToTD$ are overlapping. This is due to the same task reallocation decisions in the corresponding no-idling games.

Next, we fix $\kappa_1 = 6$, as was used in the previous scenarios, and measure $ToTD$ at different target discovery percentages when $\kappa_2$ varies from $2$ to $7$. Fig.~\ref{fig:vary_kappa_2} shows the results in Scenario $3$, where $v_4$ failed during exploration. 

As defined in Section~\ref{sec:taskreallocation}, the neighborhood size $\kappa_2$ equals the number of players for resilience games, i.e., $|\mathscr{P}| = \kappa_2$. Thus, as $v_4$ failed, a larger $\kappa_2$ could benefit the team via involving more players in the resilience game for optimization. Note that for $\kappa_2 = 7$, all live robots in the team participated in the resilience game. It is seen in Fig.~\ref{fig:vary_kappa_2} that $ToTD$ is reduced when $\kappa_2$ increases. Moreover, when $\kappa_2 = 2,3,4$ and when $\kappa_2 = 5,6$, the $ToTD$ are almost the same. This is because the same task reallocation decisions were made in the resilience games.

\vspace{-12pt}
\subsection{\textbf{Performance in the Presence of Uncertainties}}
In practice, uncertainties in the robot sensing systems could affect the coverage performance. Thus, for uncertainty quantification, noise was injected into the measurements of laser, compass and localization system for each robot. Typically, the uncertainty in laser measurement is $1\%$ of its sensing range, while a modestly priced compass can be as accurate as $1^o$~\cite{PSSL14}. These errors were simulated with Additive White Gaussian Noise (AWGN) with standard deviations of $\sigma_{laser} = 1.6cm$ and $\sigma_{compass} = 0.5^o$, respectively. On the other hand, indoor localization systems~\cite{FWLM13} can achieve an accuracy of $0.02m$, while Real-Time Kinematic (RTK) based GPS system can be as precise as $0.05m$~\cite{PSSL14}. Therefore, the uncertainty due to localization system is investigated using AWGN at various levels of $\sigma = 0.05m$, $0.10m$ and $0.15m$. Fig.~\ref{fig:noiseratio} shows the minimum, mean and maximum coverage ratios over five runs under different $\sigma$ for the three scenarios using $10$ robots.

\begin{figure}[t]
  \centering
  \vspace{-6pt}
  \includegraphics[width=.88\columnwidth]{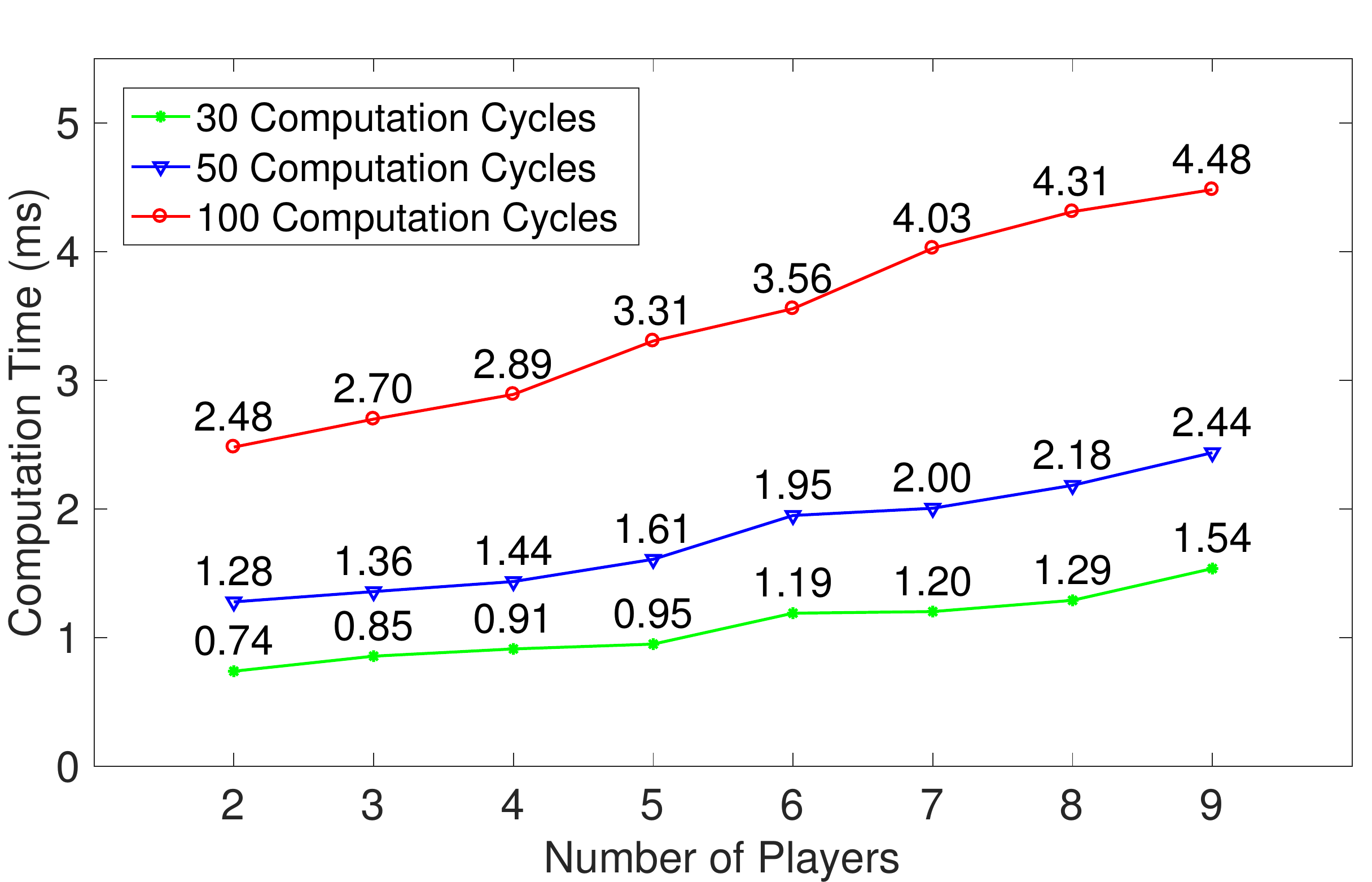} \vspace{-3pt}
  \caption{The computation time for task reallocation}\label{fig:computation_time}\vspace{0pt}
\end{figure}

\vspace{-3pt}
\subsection{\textbf{Computation Time for Task Reallocation}}
As explained earlier, once a resilience game or no-idling game is triggered, the Max-Logit algorithm was used to rapidly converge to the optimal equilibrium. This section evaluates the computation time using Algorithm~\ref{alg:optimizer} for different numbers of players (i.e., $|\mathscr{P}|$) and computation cycles (i.e., $L$).

As an example, Fig.~\ref{fig:computation_time} shows the average computation time of game $G_1$ in Scenario 2 under five runs. It is observed that the computation time monotonically increases as more players are involved; however, due to a distributed computation framework, the slope of growth is gentle. On the other hand, for a fixed number of players, the computation time is proportional to the number of computation cycles.

Note that if less players are involved in a resilience game, a larger number of non-player robots will be able to continue exploration during the task reallocation computation, which facilitates a smooth operation under failures; however, a game with less number of players and tasks may result in a sub-optimal reallocation decision for the whole team. Therefore, the selection of game size must consider these tradeoffs.

\subsection{\textbf{Practical Applications of CARE}}
Some practical applications of CARE are listed below.

\begin{itemize}
\item Cleaning tasks: The floor cleaning task~\cite{PSVC04} in a manufacturing factory environment is one example where a team of robots could be assigned to clean up a large factory floor containing unmapped obstacles. The spread of dirt on the floor can be treated as target distribution with appropriate modifications in the formulation, and based on the day to day experience, the planner could get a good estimate of the heavy or light dirty regions daily. Under these conditions CARE can be implemented to a team of cleaning robots. Also, it is very much possible that some robot fails, thus the nearby robots can be reallocated to help it immediately if needed based on the task priorities. Other similar cleaning application examples using multiple robots include shopping malls, train stations, airports, and commercial buildings. 

\item De-mining in a hazardous environment (e.g., underwater mine countermeasures using UUVs~\cite{MGRP11}): This is an example of a non-cleaning time-critical application, where a robot team is expected to efficiently find all hidden mines even under possible failures of a few robots. In this case, the mines are the targets, and the environment is usually unknown and dangerous, thus CARE can be practically useful for efficient and resilient operation.

\item Agriculture: There could be coverage applications in agriculture~\cite{H14} for seeding and crop-cutting tasks.

\end{itemize}

\section{Conclusions and Future Work}\label{sec:conclusion}
This paper presents a multi-robot coverage algorithm for resilient and efficient coverage of \textit{a priori} unknown environments. The resilience and efficiency of the system are addressed via event-driven task reallocations, using game theoretic solutions. The reallocation decisions are determined by the optimal equilibrium, which is analytically shown to increase the team potential gain. Further, the efficacy of this algorithm has been validated in complex obstacle-rich scenarios on a high-fidelity robotic simulator. The results show that CARE guarantees complete coverage even in presence of failures of some robots. Also, it shows superior coverage performances as compared to three alternative methods in terms of less coverage time and faster target discovery progress.

Future research areas include: i) opportunistic scheduling~\cite{HGW17} to enhance the speed of target discovery, ii) extension of the CARE algorithm to account for restricted communication, iii) integration of SLAM~\cite{SG15} with multi-robot control in the absence of localization devices, iv) consideration of threat levels in different tasks to compute the probability of success, and v) consideration of motion constraints~\cite{SGW19}\cite{SGW17} for the mobile robots.

\vspace{-3pt}
\bibliographystyle{IEEEtran}
\bibliography{./CARE_Song_Gupta}

\end{document}